%% file: paper.tex
\RequirePackage{fix-cm}
\documentclass[]{bytedance_seed}

\usepackage[toc,page,header]{appendix}
\usepackage{amsmath,amssymb}
\usepackage{graphicx}
\usepackage{amsthm}
\usepackage{xcolor}
\usepackage{colortbl}
\usepackage{xspace}
\usepackage{booktabs}
\usepackage{multirow}
\usepackage{arydshln}
\usepackage{algorithm}
\usepackage{algpseudocode}
\usepackage{fontawesome5}

\AtBeginDocument{\setlength{\parfillskip}{0pt plus .45\textwidth}}
\algrenewcommand\algorithmicrequire{\textbf{Require}}
\algrenewcommand\algorithmicensure{\textbf{Output}}

\newtheorem{opsdprop}{Proposition}[section]
\newtheorem{opsdlemma}[opsdprop]{Lemma}
\newtheorem{opsdassumption}[opsdprop]{Assumption}

\definecolor{vefblue}{RGB}{30,90,170}
\definecolor{vefred}{RGB}{200,60,60}
\definecolor{vefgreen}{RGB}{40,130,90}
\definecolor{vefgrey}{RGB}{120,120,120}
\definecolor{vefyellow}{RGB}{220,180,40}
\definecolor{vefpurple}{RGB}{120,80,170}
\definecolor{vefblue1}{RGB}{49,89,180}
\definecolor{vefblue2}{RGB}{80,135,242}
\definecolor{vefblue3}{RGB}{86,188,199}
\definecolor{vefblue4}{RGB}{148,228,221}
\definecolor{vefaccent}{RGB}{75,85,170}

\hypersetup{
  colorlinks=true,
  linkcolor=vefblue3,
  citecolor=vefblue3,
  urlcolor=vefblue3,
}

\colorlet{vefoursrow}{vefpurple!8}
\colorlet{vefbaserow}{vefyellow!12}

\newcommand{\name}{\textsl{\textbf{\textcolor{vefblue2}{Diffusion}\textcolor{vefblue3}{OPSD}}}\xspace}
\newcommand{\namew}{DiffusionOPSD}
\newcommand{\cmark}{\textcolor{vefgreen}{$\checkmark$}}
\newcommand{\xmark}{\textcolor{vefgrey}{$\times$}}

\newcommand{\shortmonthname}{%
  \ifcase\month\or Jan\or Feb\or Mar\or Apr\or May\or Jun\or
  Jul\or Aug\or Sep\or Oct\or Nov\or Dec\fi}
\newcommand{\shortdate}{\shortmonthname\ \number\day, \number\year}

\title{On-Policy Self-Distillation in Diffusion Models}

\author[]{DiffusionOPSD Team}

\abstract{
Reinforcement learning can align diffusion models with human preferences and task-specific objectives, but endpoint rewards do not specify how an intermediate denoising prediction should change. We introduce \name{} as an on-policy self-distillation framework that converts image-level reward guidance into explicit targets for clean-output predictions at sampled queries. At each outer iteration, a frozen behavior policy generates trajectories and supplies query states and anchors. Reward gradients construct bounded positive and negative targets around each anchor. The trainable policy fits these targets as detached supervision through finite fitting before an exponential moving average update refreshes the behavior policy. This setup lets us measure target construction and finite realization separately.
Controlled same-query experiments show that larger target-construction gains do not necessarily translate into larger realized gains after a single fitting update.
Across SD$3.5$-M and the step-distilled Z-Image-Turbo, our approach achieves the best final held-out scores in $19$ of $20$ reward-matched settings across two backbones and ten evaluators. It outperforms the strongest competing method by up to $44.0\%$ and reduces training GPU-hours relative to DiffusionNFT by $40\%$ on SD$3.5$-M and $63\%$ on Z-Image-Turbo. These results support on-policy self-distillation as an efficient and analyzable approach to diffusion post-training by converting image-level reward guidance into explicit and continually refreshed intermediate supervision and thereby opens a path toward more efficient and diagnosable alignment.
}

\checkdata[\faClock\ Date]{\shortdate\hspace{2.5em}\checkdataformat[\faGlobe\ Project Page]{\href{https://DiffusionOPSD.github.io}{\textbf{DiffusionOPSD}}}\hspace{2.4em}\checkdataformat[\faEnvelope\ Correspondence]{Wei Liu, \href{mailto:liuwei.jikun@bytedance.com}{E-mail}}}

\begin{document}
\maketitle

\begin{center}
    \centering
    \vspace{-0.2cm}
    \includegraphics[width=\linewidth]{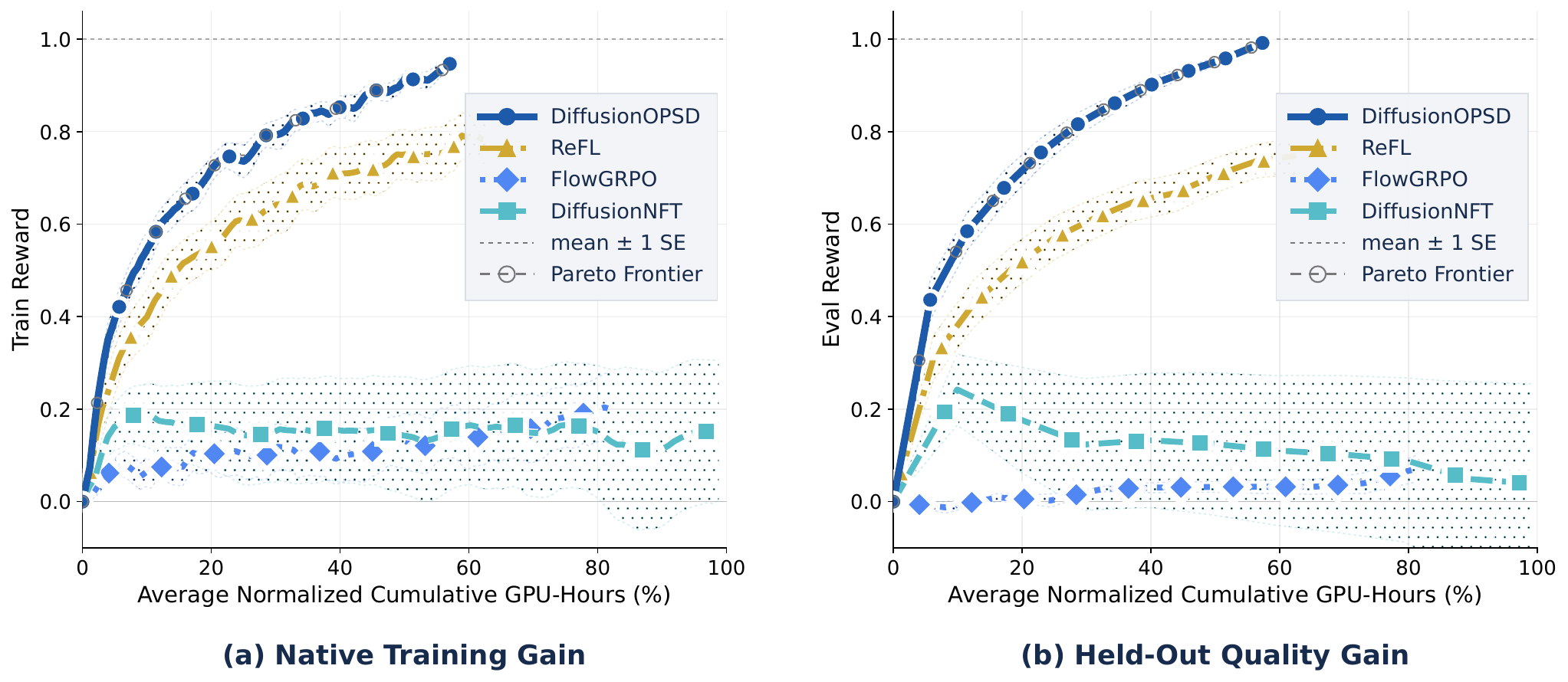}
    \vspace{-0.5cm}
    \captionsetup{hypcap=false}
    \captionof{figure}{\textbf{Training and held-out quality curves.} Normalized gains are averaged across matched $10$ reward and $2$ backbone settings. DiffusionOPSD improves fastest and reaches the highest final gains.}
    \label{fig:aggregate_train_test_dynamics}
    \vspace{-0.3cm}
\end{center}

\clearpage
\begin{figure}[t]
    \centering
    \vspace{-0.5cm}
\includegraphics[width=\linewidth,height=0.96\textheight,keepaspectratio]{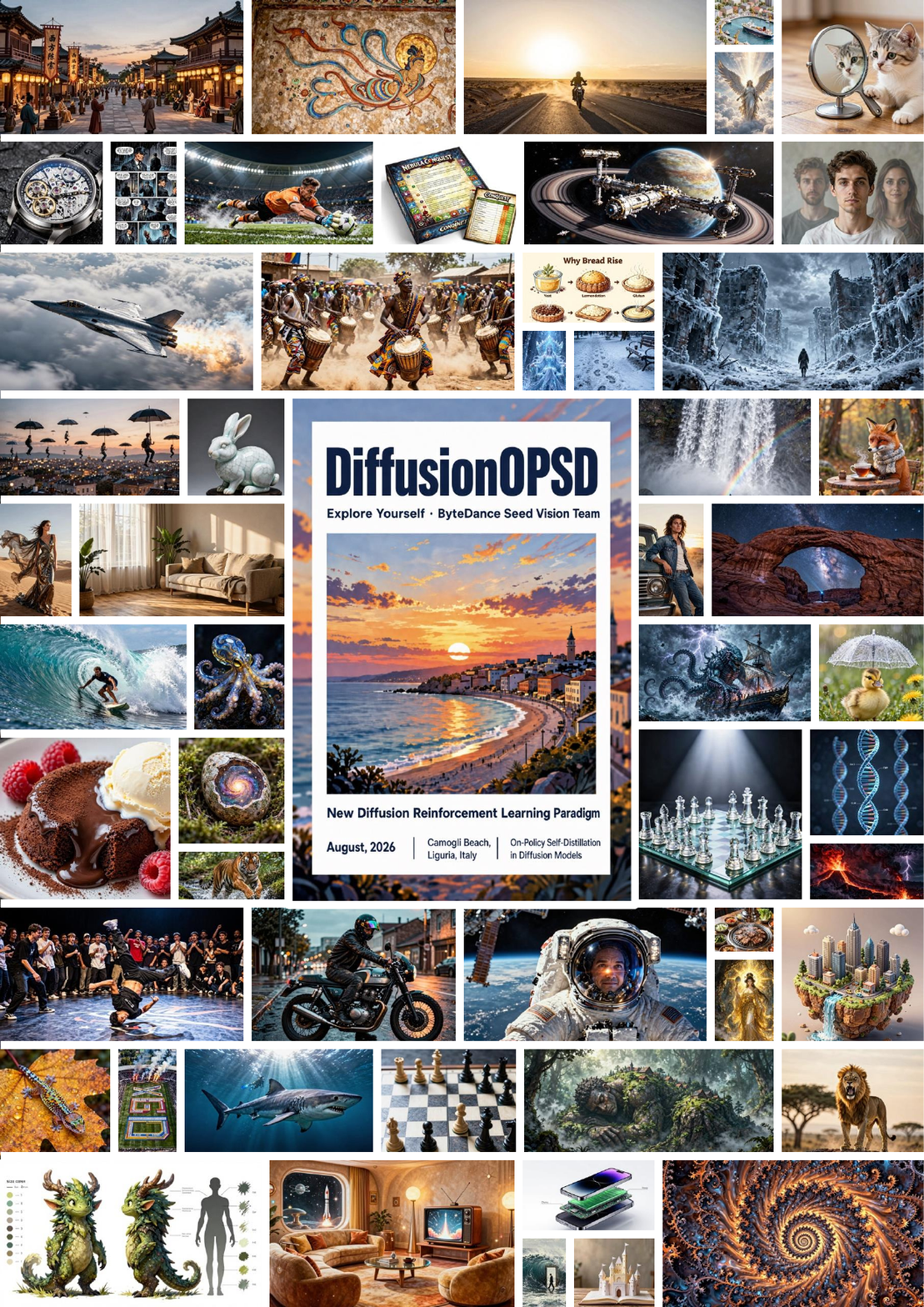}
    \vspace{-0.1cm}
    \caption{{Samples generated by \name{} from held-out text prompts.}}
    \label{fig:main}
\end{figure}

\clearpage
\tableofcontents
\newpage

\input{sections/introduction}
\input{sections/relatedwork}
\input{sections/approach}
\input{sections/experiments}

\section{Conclusion}

We introduced \name{}, an on-policy self-distillation method for converting image-level reward guidance into supervision for intermediate denoising predictions. At each outer iteration, a frozen behavior policy supplies query states and clean-output anchors. Reward gradients construct bounded positive and negative targets around these anchors, and the trainable policy fits them as detached supervision. The behavior policy is then updated, and new trajectories and targets are generated for the next iteration. This design avoids treating an endpoint reward as direct supervision for every intermediate prediction and makes target construction and model fitting separately measurable.
Across SD$3.5$-M and the step-distilled Z-Image-Turbo, \namew{} achieves the best final held-out score in $19$ of $20$ reward-matched settings across ten evaluators. It improves over the strongest competing method by up to $44.0\%$ and reduces training GPU-hours relative to DiffusionNFT by $40\%$ and $63\%$ on the two backbones. A jointly trained policy also improves all three optimized rewards over DiffusionNFT, showing that the method can be applied to direct multi-reward training.
Our experiments support on-policy self-distillation as an efficient approach to diffusion post-training that converts image-level reward guidance into explicit intermediate supervision, refreshes it as the policy evolves, and allows target construction and model fitting to be analyzed separately.

\clearpage

\begingroup
\setlength{\parfillskip}{0pt plus 1fil}
\bibliographystyle{plainnat}
\bibliography{main}
\endgroup

\clearpage

\beginappendix

\input{sections/appendix}

\end{document}

%% file: sections/introduction.tex
\section{Introduction}

Diffusion and flow models are widely used for high-fidelity visual generation~\citep{ho2020denoising,song2020score,liu2022flow,lipman2022flow}. Reinforcement learning is increasingly used to adapt these models to human preferences, semantic fidelity, aesthetics, and task-specific objectives~\citep{black2024training,fan2023dpok,xu2023imagereward,liu2026flow,yu2025anyedit,bai2024humanedit,chow2026editmgt,mcallister2026finite,zhu2026diffusion}. Yet endpoint rewards create a structural mismatch between where supervision is observed and where the diffusion policy acts. Each sample is produced through a sequence of interdependent denoising predictions, whereas reward is observed only after the rollout endpoint is decoded~\citep{chow2026weave,zhang2026think,shi2024preference,zhang2026aligning}. Improving the final outcome therefore requires translating outcome-level reward into actionable supervision for intermediate denoising queries.

This translation involves two coupled stages. Reward guidance must first construct a useful local target at a query. Finite model updates must then realize that target. Throughout the paper finite fitting denotes this model update procedure and finite realization denotes the reward change produced by that procedure. Finite realization is nontrivial because the same model is queried repeatedly across noise levels, so fitting a target at one query can alter predictions at many others. Even when a target improves reward locally, finite fitting may under-realize, rotate, or overshoot the intended change while amortizing supervision across queries. Separating target construction from finite realization reveals whether an update fails because reward guidance produces a poor target or because finite fitting does not realize it.

Existing methods expose reward information at different stages. FlowGRPO estimates trajectory credit from finite groups and approximate transition likelihood ratios~\citep{liu2026flow}. Its update depends on sample budget, likelihood estimation, discretization, and rollout choices~\citep{xue2025advantage,choi2026rethinking,lee2026reward}. ReFL backpropagates a differentiable reward through a single late-state clean-output prediction obtained from a truncated-prefix rollout~\citep{xu2023imagereward}. Its prefix is executed without gradients, one state is sampled uniformly from the final quarter of denoising calls, and training neither executes the suffix nor decodes the rollout endpoint. This yields a local update while coupling reward evaluation to model optimization. DiffusionNFT reweights rollout endpoints under a supervised diffusion objective~\citep{zheng2026diffusionnft}. It is efficient, but its target remains an endpoint rather than an explicit description of how the current prediction should improve. Control-based, score-based, and value-based formulations likewise show that the mapping from reward to target determines the effective optimization problem~\citep{zhao2025score,liu2026value,yang2026diffusion}.
We instead construct a reward-improving target before updating the model, fit it as detached intermediate supervision, and rebuild it as the behavior policy changes.

We implement this idea with \name{}, an on-policy self-distillation loop with three stages. First, a frozen behavior policy generates trajectories and supplies query states and clean-output anchors. At each sampled query, the velocity prediction is converted into an equivalent clean-output prediction that can be decoded and evaluated by an image-level reward. Second, reward ascent and descent construct bounded positive and negative targets around the behavior-policy anchor. Third, the trainable policy fits these targets as detached supervision under a finite update budget. The behavior policy is then refreshed by EMA before new trajectories, anchors, and targets are generated for the next iteration.

The positive target specifies a reward-improving prediction. The negative target acts as a repulsive reference through the negative fitting branch. A group-normalized endpoint-reward weight controls their relative fitting strength. Because target construction and parameter fitting are separated by stop-gradient, construction gain and realized gain can be measured at the same query under the same fixed suffix. We can therefore measure both the reward gain of the constructed target and the gain produced after fitting.

Across SD$3.5$-M and the step-distilled Z-Image-Turbo, \name{} achieves the best final held-out score in $19$ of $20$ reward-matched settings spanning both backbones and all evaluators, with relative gains of up to $44.0\%$ over the strongest competing method. It reduces training GPU-hours relative to DiffusionNFT by $40\%$ and $63\%$ on the two backbones. We further stress-test joint training on PickScore, CLIPScore, and HPSv$2.1$. One jointly trained \namew{} policy outperforms the corresponding DiffusionNFT policy on all three reward objectives while retaining most of the gains achieved by separate specialists.

The explicit targets also allow us to examine these stages separately. Target-construction studies show that the reward-gradient direction accounts for most of the gain, whereas replacing the rollout query state with a forward-noised control has little effect in the evaluated low-noise setting. In controlled same-query experiments, a target with a larger construction gain can produce a smaller realized gain after one fitting update. This reversal occurs without cross-query interference, so target construction and finite realization should be evaluated separately.

We make the following contributions.
\begin{itemize}
    \item \textbf{Method.} We introduce on-policy self-distillation for diffusion post-training. It constructs bounded positive and negative targets from image-level reward gradients, fits them as detached supervision, and rebuilds them as the behavior policy changes.
    \item \textbf{Results.} Across two backbones and ten evaluators, \namew{} obtains the best held-out score in $19$ of $20$ matched settings and reduces training GPU-hours relative to DiffusionNFT by $40\%$ and $63\%$. A jointly trained policy also improves all three optimized rewards over DiffusionNFT.
    \item \textbf{Analysis.} Controlled same-query experiments show that a larger target-level reward gain does not necessarily produce a larger gain after fitting, so we measure target construction and finite realization separately.
\end{itemize}

%% file: sections/relatedwork.tex
\section{Related Work}
\label{sec:related}

\subsection{Reward-to-Target Interfaces}
Reward-based diffusion post-training differs primarily in how outcome-level feedback is converted into supervision for a multi-step generative policy. Reward-weighted likelihood and preference objectives translate scores or comparisons into likelihood updates~\citep{black2024training,fan2023dpok,wallace2024diffusion,honavar2025dspo,yang2024using,liang2025aesthetic}, but only indirectly supervise individual denoising predictions. Policy-gradient methods instead use sampled advantages. FlowGRPO and DanceGRPO assign group-relative trajectory credit through reverse-process likelihood ratios~\citep{liu2026flow,xue2025dancegrpo}, while AWM expresses policy-gradient improvement through advantage-weighted score or flow matching~\citep{xue2025advantage}. These methods avoid constructing an explicit local improvement target and rely on sampled advantage estimates, while likelihood-ratio variants are additionally sensitive to likelihood estimation, sampler choice, and discretization~\citep{choi2026rethinking,lee2026reward}. Differentiable-reward methods such as ReFL backpropagate a reward through a differentiable clean-output prediction at a sampled denoising state~\citep{xu2023imagereward,clark2024directly,prabhudesai2023aligning}.
DiffusionNFT avoids trajectory-level backpropagation through endpoint-conditioned forward-process regression~\citep{zheng2026diffusionnft}, yet its reward-selected endpoint does not specify how the current intermediate prediction should improve. Despite their different objectives, these methods leave the desired intermediate change implicit in advantage weights, parameter gradients, or endpoint supervision. \name{} instead constructs bounded positive and negative clean-output targets at behavior-policy queries, fits them as detached supervision, and rebuilds them as the behavior policy changes.

\subsection{On-Policy Distillation}
Diffusion distillation traditionally transfers predictions from a stronger teacher to a faster student through progressive distillation, consistency training, latent consistency, distribution matching, or adversarial objectives~\citep{hinton2015distilling,salimans2022progressive,song2023consistency,luo2023latent,yin2024one,sauer2024adversarial}. Few-step distillation can also be viewed as learning longer-range transitions of an underlying generative dynamics, while distribution-matching approaches need not preserve individual teacher trajectories~\citep{boffi2025flowmapmatchingstochastic,yin2024improved}. Dataset aggregation and on-policy imitation reduce distribution shift by collecting supervision on learner-induced states~\citep{ross2011reduction,xing2026trust}. Recent on-policy distillation methods extend this principle to generative models by matching external teacher predictions along student rollouts~\citep{fang2026flow,li2026diffusionopd,zhou2026danceopd,liu2026opsd,li2026rl}. \name{} does not require an externally improved teacher. It anchors each target at the behavior policy's own clean-output prediction, updates it with image-level reward gradients, and rebuilds it as the behavior policy changes.

\begin{figure}[t]
    \centering
    \includegraphics[width=\linewidth]{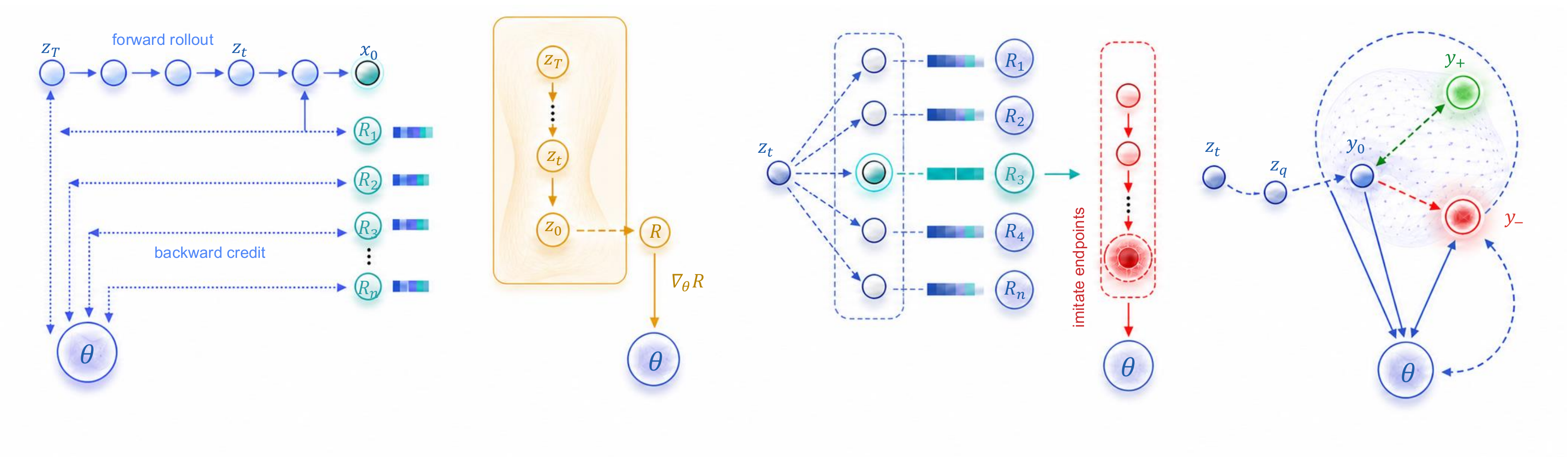}
    \vspace{-0.7cm}
    \caption{\textbf{Reward-to-policy paradigms.} We contrast trajectory credit, late-state reward backpropagation, and endpoint supervision with our explicit intermediate targets constructed before finite fitting.}
    \label{fig:diffusion_rl_comparison}
\end{figure}

\subsection{Reward Gradients}
Classifier guidance and classifier-free guidance modify diffusion fields during sampling through conditional score composition~\citep{dhariwal2021diffusion,ho2022classifier}. Differentiable-reward methods including ReFL, DRaFT, AlignProp, and video reward-gradient alignment directly translate rewards on generated samples or partial trajectories into parameter updates~\citep{xu2023imagereward,clark2024directly,prabhudesai2023aligning,prabhudesai2024video,domingo2025adjoint}. Control-based, score-based, and value-based formulations further connect reward gradients to policy improvement in diffusion and flow models~\citep{frans2025diffusion,zhao2025score,liu2026value}. \name{} instead uses reward gradients to construct detached targets at behavior-policy queries. Infinitesimal positive-target fitting shares the first-order direction of direct reward ascent at the anchor. Their finite optimization procedures differ. \namew{} constructs bounded targets, fits them without retaining the reward graph, and rebuilds them after the behavior policy changes.

%% file: sections/approach.tex
\section{Approach}
\label{sec:approach}

\name{} treats diffusion reward optimization as on-policy self-distillation. It converts image-level reward gradients into detached targets for intermediate predictions and rebuilds those targets as the behavior policy changes. Sec.~\ref{sec:clean_coordinate} defines the clean-output coordinate used to decode and evaluate a prediction at a sampled query. Sec.~\ref{sec:behavior_anchor} describes how a frozen behavior policy collects trajectories and supplies query states and anchors. Sec.~\ref{sec:target_construction} constructs bounded positive and negative targets from reward gradients. Sec.~\ref{sec:finite_fitting} fits these detached targets under a finite update budget. Sec.~\ref{sec:online_self_distillation} completes the loop by refreshing the behavior policy and regenerating trajectories and targets. Fig.~\ref{fig:diffusionopsd_overview} summarizes the framework.

\subsection{Clean-Output Prediction}
\label{sec:clean_coordinate}

Let \(v_\theta\) be the trainable rectified-flow velocity field~\citep{liu2022flow,lipman2022flow}, and let \(v_{\mathrm{old}}\equiv v_{\theta_{\mathrm{old}}}\) denote the velocity field of the behavior policy within the current iteration. For a prompt \(\mathbf c\), noisy latent \(z_\sigma\), and noise level \(\sigma\), the query tuple is \(s=(\mathbf c,z_\sigma,\sigma)\). Under the rectified-flow path \(z_\sigma=(1-\sigma)y+\sigma\epsilon\), the velocity target is \(v=\epsilon-y\), and therefore
\begin{equation}
    \boxed{
    y_\theta(s)
    = z_\sigma-\sigma v_\theta(s),
    \qquad
    v_\theta(s)=\frac{z_\sigma-y_\theta(s)}{\sigma}
    }
    \label{eq:local_action}
\end{equation}
for every fixed query with \(\sigma>0\). Thus \(y_\theta(s)\) is the clean-output prediction corresponding to the velocity prediction at that query. We reserve \(x_0\) for a rollout endpoint latent, \(z_\sigma\) for a query state, and \(y\) for the clean-output prediction at a fixed query. This prediction is decoder-evaluable through the local reward \(\widetilde R(y,\mathbf c)=R(D(y),\mathbf c)\), where \(D\) is the latent decoder~\citep{rombach2022high}.

The equivalence in Eq.~\eqref{eq:local_action} is algebraic rather than geometric because the map is not an isometry. At a fixed query,
\begin{equation}
    \delta y=-\sigma\,\delta v,
    \qquad
    \|\delta v\|_2=\frac{\|\delta y\|_2}{\sigma}.
    \label{eq:clean_velocity_geometry}
\end{equation}
Equation~\eqref{eq:clean_velocity_geometry} shows that equal clean-output radii correspond to larger velocity displacements at lower \(\sigma\). Throughout this section, ``local'' refers to predictions near \(y_0\) at a fixed query. Conditioning still varies with the noise level.

We use three qualified reward names throughout the paper. The endpoint reward scores a rollout endpoint and determines the group-normalized fitting weight. The local reward \(\widetilde R\), evaluated after decoding, scores a clean-output prediction and constructs the targets. The fixed-suffix reward \(F_q\) measures construction and realized gains at the same query, before and after fitting, respectively. Unless otherwise stated, construction gain refers to the positive-target fixed-suffix gain. End-to-end held-out evaluation measures the resulting policy.

\begin{figure}[t]
    \centering
    \includegraphics[width=\linewidth]{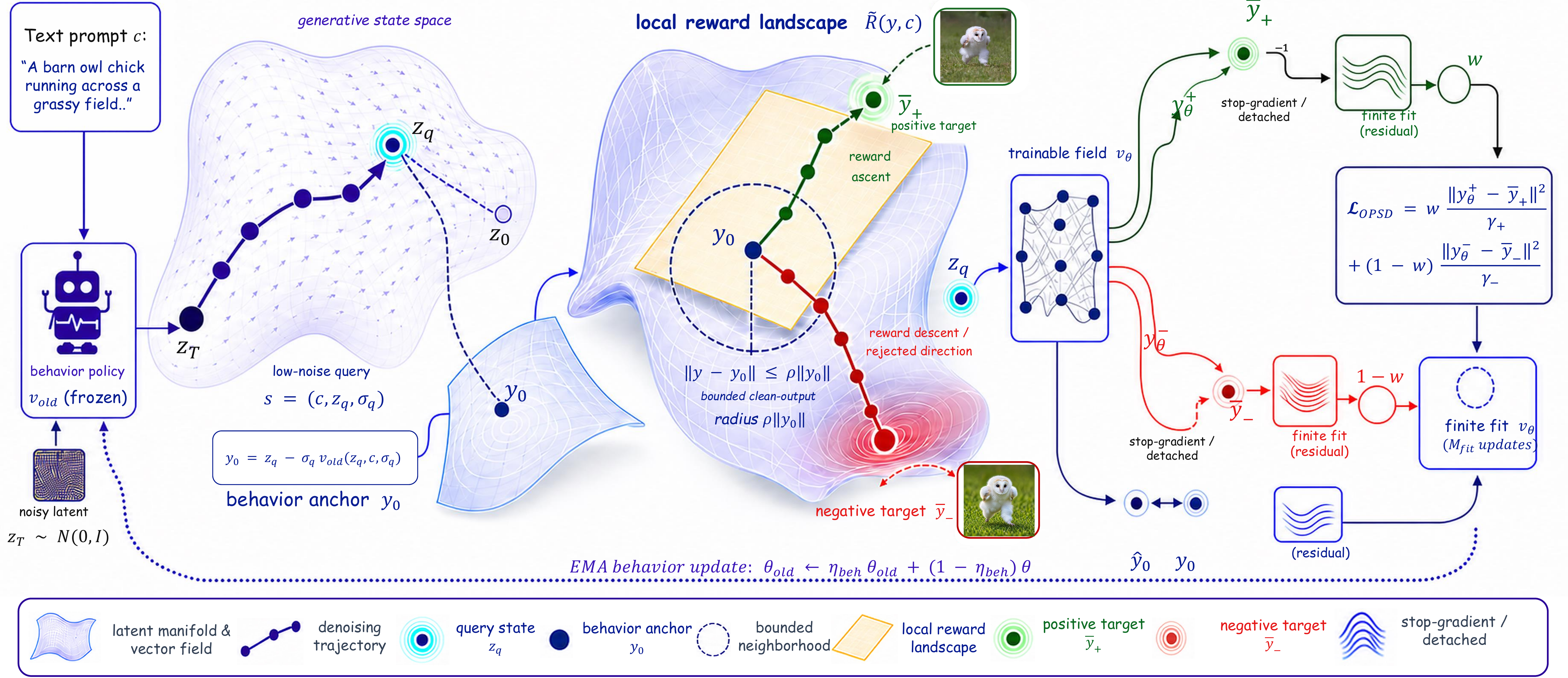}
    \caption{\textbf{\name{} Overview.} The behavior policy collects low-noise queries, reward gradients construct bounded positive and negative targets, and the trainable policy fits the detached supervision. The updated behavior policy supplies new anchors for the next iteration. This procedure separates target construction from finite fitting.}
    \label{fig:diffusionopsd_overview}
\end{figure}

\input{tables/algorithm}

\subsection{Query Collection and Anchor}
\label{sec:behavior_anchor}

At outer iteration \(i\), the frozen behavior policy~\citep{polyak1992acceleration} samples \(K\) trajectories for each prompt. A trajectory provides an endpoint \(x_0^k\), endpoint reward \(r^k=R(D(x_0^k),\mathbf c)\), and a low-noise query. Given a requested query noise level \(\sigma^\star\), the implementation selects
\begin{equation}
    q=\arg\min_j|\sigma_j-\sigma^\star|,
    \qquad
    s=(\mathbf c,z_q,\sigma_q).
    \label{eq:query_state}
\end{equation}
Equation~\eqref{eq:query_state} distinguishes the requested level \(\sigma^\star\) from the selected schedule value \(\sigma_q\). The query state and rollout endpoint are detached. The anchor is the clean-output prediction of the behavior policy
\begin{equation}
    y_0=z_q
    -\sigma_qv_{\mathrm{old}}(z_q,\mathbf c,\sigma_q).
    \label{eq:anchor}
\end{equation}
The anchor in Eq.~\eqref{eq:anchor} specifies the point around which reward constructs the positive and negative targets.

The query state is collected from a trajectory generated by the frozen behavior policy for the current outer iteration. This is the sense in which the method is on-policy. The behavior policy remains frozen during target construction and finite fitting. After fitting, it is updated by the behavior-policy exponential moving average. The forward-noised control replaces the rollout query state with a matched state at the same \(\sigma_q\). Sec.~\ref{sec:ablations} shows that this replacement has little effect in the evaluated low-noise setting.

Endpoint rewards determine a group-normalized fitting weight. Let \(\mathcal B_i\) index all sampled endpoints in outer iteration \(i\), let \(r_{\mathbf c}^k\) be the \(k\)-th reward for prompt \(\mathbf c\), and let \(\bar r_{\mathcal B_i}\) be the mean over the complete rollout batch. The implementation centers each reward within its prompt group but uses one standard deviation over the complete rollout batch. The resulting definitions are
\begin{equation}
\begin{aligned}
    \bar r_{\mathbf c}
    &=\frac{1}{K}\sum_{j=1}^{K}r_{\mathbf c}^j,\\
    \bar r_{\mathcal B_i}
    &=\frac{1}{|\mathcal B_i|}
      \sum_{(\mathbf c,j)\in\mathcal B_i}r_{\mathbf c}^j,\\
    \widehat\sigma_{\mathcal B_i}
    &=\left[
      \frac{1}{|\mathcal B_i|}
      \sum_{(\mathbf c,j)\in\mathcal B_i}
      (r_{\mathbf c}^j-\bar r_{\mathcal B_i})^2
      \right]^{1/2},\\
    Z_{\mathcal B_i}
    &=c_{\mathrm{adv}}(\widehat\sigma_{\mathcal B_i}+\epsilon_Z),\\
    \omega_{\mathbf c}^k
    &=\frac12+\frac12\operatorname{clip}
      \left(\frac{r_{\mathbf c}^k-\bar r_{\mathbf c}}
      {Z_{\mathcal B_i}},-1,1\right).
\end{aligned}
\label{eq:group_weight}
\end{equation}
Per-prompt centering removes prompt-level reward offsets, while the global rollout-batch standard deviation places all prompt groups on a common scale. The weight \(\omega_{\mathbf c}^k\in[0,1]\) in Eq.~\eqref{eq:group_weight} determines the relative strength of positive and negative fitting. It is distinct from the positive and negative targets.

\subsection{Target Construction}
\label{sec:target_construction}

Starting from \(y_0\), \name{} uses stabilized normalized reward-gradient steps to construct positive and negative targets. We initialize \(y_+^{(0)}=y_-^{(0)}=y_0\) and apply
\begin{align}
    y_+^{(m+1)}
    &=y_+^{(m)}+
    h_{\mathrm{step}}\frac{\nabla_y\widetilde R(y_+^{(m)},\mathbf c)}
    {\|\nabla_y\widetilde R(y_+^{(m)},\mathbf c)\|_2+\epsilon_g},
    \nonumber\\
    y_-^{(m+1)}
    &=y_-^{(m)}-
    h_{\mathrm{step}}\frac{\nabla_y\widetilde R(y_-^{(m)},\mathbf c)}
    {\|\nabla_y\widetilde R(y_-^{(m)},\mathbf c)\|_2+\epsilon_g},
    \label{eq:target_steps}
\end{align}
for \(m=0,\ldots,M_{\mathrm{tgt}}-1\), with
\(
h_{\mathrm{step}}=\eta_{\mathrm{tgt}}\rho\|y_0\|_2/M_{\mathrm{tgt}}
\), where \(\eta_{\mathrm{tgt}}>0\) is a target-step multiplier and the default is \(\eta_{\mathrm{tgt}}=1\).
After every raw update in Eq.~\eqref{eq:target_steps}, we project the displacement back onto the trust-region ball.
\begin{equation}
    y_\pm^{(m+1)}
    \leftarrow
    y_0+\Pi_{\rho\|y_0\|_2}\!\left(y_\pm^{(m+1)}-y_0\right),
    \qquad
    \Pi_r(d)=
    \begin{cases}
        d, & \|d\|_2\leq r,\\
        r\,d/\|d\|_2, & \|d\|_2>r.
    \end{cases}
    \label{eq:target_projection}
\end{equation}
Each normalized direction has norm at most one, and the projection cannot increase the distance from \(y_0\), so
\begin{equation}
    \|y_\pm^{(m)}-y_0\|_2
    \leq \min\!\left\{m h_{\mathrm{step}},\,\rho\|y_0\|_2\right\}
    \leq\rho\|y_0\|_2.
    \label{eq:target_radius_bound}
\end{equation}
For \(0<\eta_{\mathrm{tgt}}\leq1\), including the default, the projection is inactive in exact arithmetic; for the \(\eta_{\mathrm{tgt}}>1\) stress-test configuration it explicitly preserves the same radius bound.
The final detached objects are
\begin{equation}
    \bar y_+=\operatorname{sg}(y_+^{(M_{\mathrm{tgt}})}),
    \qquad
    \bar y_-=\operatorname{sg}(y_-^{(M_{\mathrm{tgt}})}).
    \label{eq:targets}
\end{equation}
Here \(\operatorname{sg}\) denotes the stop-gradient operator.
In Eq.~\eqref{eq:targets}, \(\bar y_+\) is the positive target intended to improve reward. The negative target \(\bar y_-\) is fitted only through the negative fitting branch and therefore acts as a repulsive reference for the trainable output.

For a single normalized step with \(g_0=\nabla_y\widetilde R(y_0,\mathbf c)\),
\begin{equation}
    \widetilde R(y_\pm^{(1)},\mathbf c)-\widetilde R(y_0,\mathbf c)
    =
    \pm h_{\mathrm{step}}\frac{\|g_0\|_2^2}{\|g_0\|_2+\epsilon_g}
    +O(h_{\mathrm{step}}^2).
    \label{eq:first_order_reward_change}
\end{equation}
Equation~\eqref{eq:first_order_reward_change} gives the local reward change for one normalized step. App Sec.~\ref{app:reward_improvement} extends it to the multi-step construction under smoothness. Finite fitting and end-to-end held-out policy performance are evaluated separately from this local construction result.

\subsection{Finite Fitting}
\label{sec:finite_fitting}

The trainable velocity field is evaluated once at the same query,
\(
y_\theta=z_q-\sigma_qv_\theta(s)
\),
and forms positive and negative fitting branches around \(y_0\).
\begin{equation}
    y_\theta^+=\beta y_\theta+(1-\beta)y_0,
    \qquad
    y_\theta^-=(1+\beta)y_0-\beta y_\theta.
    \label{eq:branches}
\end{equation}
The fitting branches in Eq.~\eqref{eq:branches} allow one shared output to encode attraction toward \(\bar y_+\) and rejection of \(\bar y_-\).

We use detached adaptive normalizers
\begin{equation}
\begin{aligned}
    \gamma_+&=\max\{
    \operatorname{sg}(\operatorname{mean}|y_\theta^+-\bar y_+|),
    \epsilon_\gamma\},\\
    \gamma_-&=\max\{
    \operatorname{sg}(\operatorname{mean}|y_\theta^- - \bar y_-|),
    \epsilon_\gamma\},
\end{aligned}
\label{eq:adaptive_normalizers}
\end{equation}
With the normalizers in Eq.~\eqref{eq:adaptive_normalizers}, we define the branch loss
\begin{equation}
\boxed{
    \mathcal L_{\mathrm{OPSD}}
    =
    \omega\frac{\operatorname{mean}[(y_\theta^+-\bar y_+)^2]}{\gamma_+}
    +(1-\omega)\frac{\operatorname{mean}[(y_\theta^- - \bar y_-)^2]}{\gamma_-}.
}
\label{eq:opsd_loss}
\end{equation}
Following DanceOPD's low-noise on-policy prediction-matching design~\citep{zhou2026danceopd}, Eq.~\eqref{eq:opsd_loss} uses squared fitting at a query collected from the behavior-policy trajectory. At a fixed query, this clean-output objective is exactly a pair of weighted velocity-MSE terms; App Sec.~\ref{app:velocity_mse_equivalence} gives the equivalence. The shared fitting interface does not imply shared supervision. DanceOPD matches a frozen external teacher field, whereas \namew{} converts reward-ascent and reward-descent targets into induced positive and negative velocity targets and fits them through the two implicit branches.
The implemented finite-fitting objective is \(c_{\mathrm{adv}}\mathcal L_{\mathrm{OPSD}}\), where \(c_{\mathrm{adv}}\) is shared by reward normalization and branch-loss scaling. Changing it therefore changes both the clipping range and the fitting scale rather than acting as a pure learning-rate multiplier. Target construction and finite fitting are separated by stop-gradient.
\begin{equation}
    \nabla_\theta z_q
    =\nabla_\theta\omega
    =\nabla_\theta y_0
    =\nabla_\theta\bar y_\pm=0.
\label{eq:detach_rules}
\end{equation}
Equation~\eqref{eq:detach_rules} fixes the query, anchor, weight, and targets during finite fitting. The decoder and reward are differentiated only with respect to the temporary clean-output variable used in Eq.~\eqref{eq:target_steps}; once the targets are built, finite fitting does not retain the reward, decoder, or sampling computation graph.

For fixed normalizers, let \(\delta_\theta=y_\theta-y_0\), \(d_+=\bar y_+-y_0\), \(d_-=\bar y_- - y_0\), \(a_+=\omega/\gamma_+\), and \(a_-=(1-\omega)/\gamma_-\). Completing the square yields the ideal output-space minimizer of the branch loss
\begin{equation}
    \delta_\theta^\star=\frac{a_+d_+-a_-d_-}{\beta(a_++a_-)}.
    \label{eq:branch_closed_form}
\end{equation}
In the locally symmetric case, let \(\bar h\) denote the nominal aggregate target displacement. It equals \(h_{\mathrm{step}}\) for one target step and \(M_{\mathrm{tgt}}h_{\mathrm{step}}\) when the normalized direction stays constant. Let \(u_{\mathrm{grad}}\) be the common stabilized normalized reward-gradient direction. For \(d_+=\bar h u_{\mathrm{grad}}\) and \(d_-=-\bar h u_{\mathrm{grad}}\), Eq.~\eqref{eq:branch_closed_form} reduces to \(\delta_\theta^\star=\bar h u_{\mathrm{grad}}/\beta\). The positive and negative fitting branches agree on the same reward-improving direction, so their mixture weights cancel at the ideal output-space optimum. The weight \(\omega\) affects the preferred displacement when the target paths or adaptive normalizers are asymmetric, while finite fitting can introduce additional differences.

App Sec.~\ref{app:implicit_branch} gives the full branch geometry. Equation~\eqref{eq:branch_closed_form} describes the displacement preferred by the loss, while finite fitting may produce a different displacement in the updated model.

\subsection{Online Training}
\label{sec:online_self_distillation}

At outer iteration \(i\), the frozen behavior policy induces a query-state distribution
\(Q_{\mathrm{roll},\sigma_q}^{\mathrm{old},i}\).
After collecting trajectories and constructing the targets described above, we obtain the detached dataset
\begin{equation}
    \mathcal D_i=
    \{(\mathbf c,z_q,\sigma_q,\omega,
    \bar y_+,\bar y_-)\}_{\mathbf c\sim\mathcal C,\;
    z_q\sim Q_{\mathrm{roll},\sigma_q}^{\mathrm{old},i}}.
\label{eq:target_dataset}
\end{equation}
Each tuple contains a prompt, a sampled query, its fitting weight, and the corresponding positive and negative targets. During fitting, \(y_0\) is deterministically recomputed from the frozen behavior policy at the stored query. Thus the query, anchor, weight, and targets remain fixed while the trainable policy is updated.

Given \(\mathcal D_i\), we apply a finite number of optimizer updates.
\begin{equation}
    \theta_{i+1/2}
    =
    \operatorname{Fit}_{M_{\mathrm{fit}}}
    \bigl(\theta_i;\operatorname{sg}(\mathcal D_i)\bigr),
\label{eq:finite_fit}
\end{equation}
where \(M_{\mathrm{fit}}\) is the number of optimizer updates within the current outer iteration. The operator
\(\operatorname{Fit}\) includes the implemented optimizer, parameterization, and batching procedure. We use a finite fitting operator rather than an \(\arg\min\) because the target dataset is not optimized to convergence.

After fitting, the behavior policy is updated by exponential moving average.
\begin{equation}
    \theta_{\mathrm{old}}^{i+1}
    =
    \eta_{\mathrm{beh}}^{(u)}\theta_{\mathrm{old}}^i
    +(1-\eta_{\mathrm{beh}}^{(u)})\theta_{i+1/2},
\label{eq:ema_update}
\end{equation}
where \(u\) is the cumulative optimizer-update count. The EMA update is applied once after all \(M_{\mathrm{fit}}\) fitting updates. In our canonical experiments, \(M_{\mathrm{fit}}=1\), so each outer iteration contains one optimizer update.

The order of these operations is important. Within outer iteration \(i\), the behavior parameters \(\theta_{\mathrm{old}}^i\) remain frozen during trajectory collection, target construction, and all \(M_{\mathrm{fit}}\) optimizer updates. Consequently, the query states and weights stored in \(\mathcal D_i\), together with the recomputed anchors \(y_0\) and detached targets \(\bar y_\pm\), remain associated with the same behavior policy while the trainable parameters move from \(\theta_i\) to \(\theta_{i+1/2}\). The EMA update in Eq.~\eqref{eq:ema_update} is applied only after fitting and therefore does not alter the targets currently being optimized.

At outer iteration \(i+1\), the updated behavior policy generates new trajectories. These trajectories supply new query states and endpoint rewards, from which the method recomputes the fitting weights, clean-output anchors, and positive and negative targets. The resulting tuples form \(\mathcal D_{i+1}\) for the next round of finite fitting. The supervision is therefore rebuilt after each behavior-policy update rather than kept fixed throughout training.

Over successive outer iterations, the trainable policy fits a sequence of detached target datasets constructed around predictions of the corresponding behavior policies. The trainable policy realizes the current targets, while the updated behavior policy determines the queries and anchors used to construct the next ones. This cycle of trajectory collection, target construction, finite fitting, and behavior-policy refresh forms the online self-distillation loop summarized in Algorithm~\ref{alg:diffusionopsd}.

%% file: tables/algorithm.tex
\begin{algorithm}[t]
   \caption{\name{} training.}
   \label{alg:diffusionopsd}
   \begin{algorithmic}[1]
      \Require Initial trainable policy \(v_\theta\), reward \(R\), decoder \(D\), prompts \(\mathcal C\), group size \(K\), requested query noise level \(\sigma^\star\), target radius \(\rho\), target steps \(M_{\mathrm{tgt}}\), target-step multiplier \(\eta_{\mathrm{tgt}}\), fitting budget \(M_{\mathrm{fit}}\), branch coefficient \(\beta\), shared clipping and loss scale \(c_{\mathrm{adv}}\), stabilizers \(\epsilon_Z,\epsilon_g,\epsilon_\gamma\), and behavior and checkpoint exponential moving average retentions \(\eta_{\mathrm{beh}},\eta_{\mathrm{ckpt}}\).
      \State Initialize \(\theta_{\mathrm{old}}\gets\theta\), \(\theta_{\mathrm{ckpt}}\gets\theta\), and optimizer-update index \(u\gets0\).
      \For{each outer iteration \(i\)}
         \State Select the current prompt batch \(\mathcal C_i\subseteq\mathcal C\).
         \For{each prompt \(\mathbf c\in\mathcal C_i\)}
            \For{\(k=1,\ldots,K\)}
               \State Roll out frozen \(v_{\theta_{\mathrm{old}}}\) to obtain endpoint reward \(r_{\mathbf c}^k\) and detached query \((\mathbf c,z_q^k,\sigma_q^k)\) nearest \(\sigma^\star\).
            \EndFor
         \EndFor
         \State Compute \(\omega_{\mathbf c}^k\) from per-prompt centered rewards and the global rollout-batch standard deviation using Eq.~\eqref{eq:group_weight}.
         \State Initialize detached target dataset \(\mathcal D_i\gets\varnothing\).
         \For{each \((\mathbf c,k)\)}
            \State \(y_0\gets z_q^k-\sigma_q^k v_{\theta_{\mathrm{old}}}(z_q^k,\mathbf c,\sigma_q^k)\).
            \State Construct \(\bar y_+\) and \(\bar y_-\) from \(y_0\) using Eq.~\eqref{eq:target_steps}, applying Eq.~\eqref{eq:target_projection} after every target step, then Eq.~\eqref{eq:targets}.
            \State Add detached tuple \((\mathbf c,z_q^k,\sigma_q^k,\omega_{\mathbf c}^k,\bar y_+,\bar y_-)\) to \(\mathcal D_i\).
         \EndFor
         \For{\(j=1,\ldots,M_{\mathrm{fit}}\)}
            \State Accumulate fitting minibatches from \(\mathcal D_i\) and recompute their anchors with frozen \(v_{\theta_{\mathrm{old}}}\).
            \State Update \(\theta\) with \(c_{\mathrm{adv}}\mathcal L_{\mathrm{OPSD}}\) from Eq.~\eqref{eq:opsd_loss}.
            \State \(u\gets u+1\) and \(\theta_{\mathrm{ckpt}}\gets\eta_{\mathrm{ckpt}}^{(u)}\theta_{\mathrm{ckpt}}+(1-\eta_{\mathrm{ckpt}}^{(u)})\theta\).
         \EndFor
         \State \(\theta_{\mathrm{old}}\gets\eta_{\mathrm{beh}}^{(u)}\theta_{\mathrm{old}}+(1-\eta_{\mathrm{beh}}^{(u)})\theta\).
      \EndFor
      \Ensure Checkpoint-averaged reward-aligned policy \(v_{\theta_{\mathrm{ckpt}}}\).
   \end{algorithmic}
\end{algorithm}

%% file: sections/experiments.tex
\input{tables/main_results}

\section{Experiments}

We evaluate \name{} through end-to-end effectiveness, training efficiency, and the separation of target construction from finite realization across SD$3.5$-M and the step-distilled Z-Image-Turbo. Sec.~\ref{sec:complete_results} reports final held-out quality, Sec.~\ref{sec:efficiency} measures training cost, and Sec.~\ref{sec:optimization_regimes} examines native few-step and joint multi-reward training. Sec.~\ref{sec:target_construction_results} and~\ref{sec:target_update_gap} evaluate target construction and finite realization, followed by qualitative results and ablations. App Sec.~\ref{app:main_result_details} provides the full protocols.
Throughout this section, a \emph{reward objective} denotes the scalar reward or weighted reward combination used for training, while an \emph{evaluator} denotes a metric used to score held-out samples.

\begin{figure}[t]
    \centering
    \includegraphics[width=\linewidth]{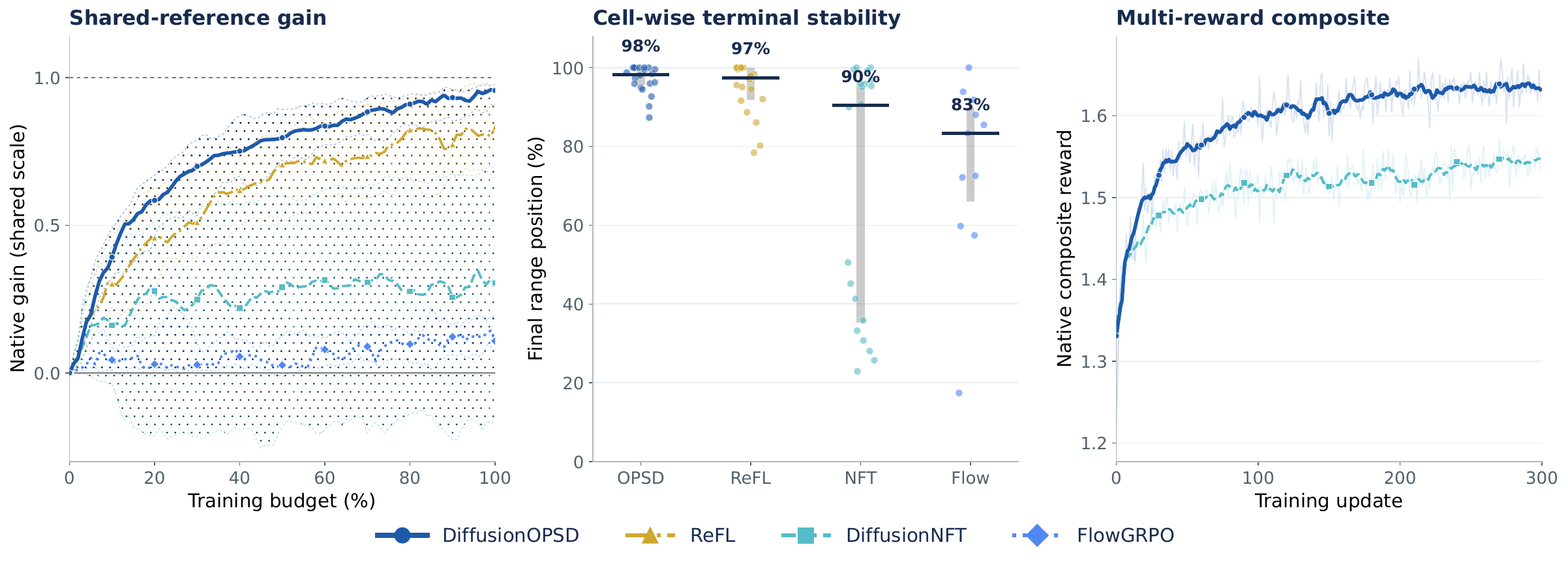}
    \caption{\textbf{Training dynamics.} Across $71$ single-reward runs, DiffusionOPSD shows the strongest normalized progress and reaches a median terminal position of $98\%$. The two joint-reward runs are shown separately.}
    \label{fig:main_table_native_dynamics}
\end{figure}

\subsection{Main Results}
\label{sec:complete_results}

\noindent\textbf{Evaluation protocol.} We train reward-specific policies on SD$3.5$-M~\citep{esser2024scaling} and native $9$-step Z-Image-Turbo~\citep{cai2025z} using Pick-a-Pic prompts~\citep{kirstain2023pick}, then evaluate held-out DrawBench prompts~\citep{saharia2022photorealistic}. We compare locally trained FlowGRPO~\citep{liu2026flow}, ReFL~\citep{xu2023imagereward}, and DiffusionNFT~\citep{zheng2026diffusionnft} under backbone-specific training samplers and deterministic held-out sampling. Seven public evaluators and three internal preference models form the ten-objective suite. App Sec.~\ref{app:main_result_details} gives the exact baseline losses, schedules, reference rows, and metric definitions.

\noindent\textbf{Best reward-specific held-out scores across both backbones.} Under the fully matched reward-specific protocol, \name{} achieves the best final held-out score in $19$ of $20$ reward-matched settings across both backbones and all evaluators. On SD$3.5$-M, it leads nine of ten comparisons, with gains of $43.0\%$ on HPSv$3$ and $44.0\%$ on VLM-Pairwise; its Aesthetic score of $12.08$ is $0.01$ below ReFL's $12.09$. On native $9$-step Z-Image-Turbo, it leads all ten reward-matched evaluator comparisons, with gains of $9.7\%$, $30.7\%$, $4.9\%$, $3.9\%$, and $14.6\%$ over the strongest baseline on Aesthetic, ImageReward, HPSv$3$, DeQA, and VLM-Pairwise, respectively. Tab.~\ref{tab:main_results} reports the complete absolute scores and per-column competitors, showing strong gains across a standard backbone and a native few-step distilled model.

\subsection{Training Efficiency}
\label{sec:efficiency}

\noindent\textbf{Substantially lower training cost.} Fresh profiling of the current \name{} source gives $28.2$ and $149.8$ GPU-hours per $100$ updates on SD$3.5$-M and Z-Image-Turbo, reducing cost relative to DiffusionNFT by $40\%$ and $63\%$. We compute these costs from measured per-step wall time on eight GPUs. Tab.~\ref{tab:efficiency_profile} reports the timing, throughput, memory, and operation counts. Peak memory is not uniformly lower because the Z-Image-Turbo \namew{} run allocates more VRAM than DiffusionNFT.

\noindent\textbf{Best held-out quality at matched training cost.} \name{} reaches the highest-quality point on nine of ten SD$3.5$-M frontiers in Fig.~\ref{fig:two_backbone_ten_reward_pareto} and trails ReFL by only $0.01$ on Aesthetic. In the standardized reference profiles, \namew{} uses $28.2$ rather than ReFL's $47.7$ GPU-hours per $100$ updates on SD$3.5$-M, while ReFL uses $102.1$ rather than \namew{}'s $149.8$ on Z-Image-Turbo. Despite its Z-Image-Turbo compute advantage, ReFL remains below \namew{} in final held-out quality across all ten reward-matched settings. The figure separately records cumulative cost from the reward-specific training runs and uses server-neutral policy-time proxies when reward-service layouts make raw wall time incomparable.

\begin{figure}[!t]
    \centering
    \includegraphics[width=\linewidth]{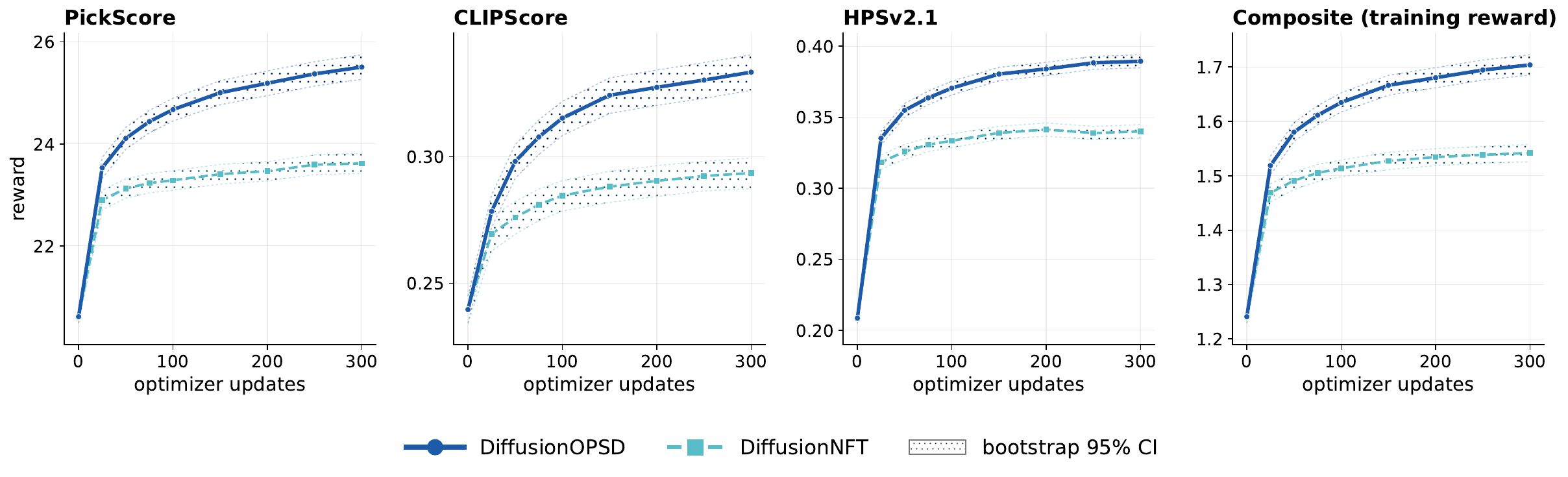}
    \caption{\textbf{Joint three-reward dynamics.} Held-out checkpoint curves for PickScore, CLIPScore, and HPSv$2.1$ show how the shared SD$3.5$-M policy changes before its final checkpoint. All three reward objectives improve through update $300$, whereas DiffusionNFT's HPSv$2.1$ peaks at update $200$ and then declines slightly.}
    \label{fig:multireward_curves}
\end{figure}

\FloatBarrier
\subsection{Optimization Regimes}
\label{sec:optimization_regimes}

\noindent\textbf{DiffusionOPSD sustains reward gains through the final update.} Across the $71$ single-reward runs, \name{} attains the largest median shared-reference gain and has a median terminal position of $98\%$ within each run's observed native-reward range in Fig.~\ref{fig:main_table_native_dynamics}. The corresponding terminal positions are $97\%$ for ReFL, $90\%$ for DiffusionNFT, and $83\%$ for FlowGRPO. \namew{} therefore retains nearly all of its within-run observed improvement and exceeds DiffusionNFT and FlowGRPO by $8$ and $15$ percentage points on this end-of-run measure. Fig.~\ref{fig:native_gpu_pareto_20reward} shows that the pattern spans ten reward objectives and both backbones. This evidence measures optimization stability, while Tab.~\ref{tab:main_results} measures final held-out quality.

\noindent\textbf{Optimization remains effective with a native nine-step sampler.} On Z-Image-Turbo, \name{} leads all ten reward-matched comparisons in Tab.~\ref{tab:main_results} and every quality-compute frontier in Fig.~\ref{fig:two_backbone_ten_reward_pareto}. Against the strongest competitor in each column, Aesthetic improves from $9.79$ to $10.74$, ImageReward from $1.37$ to $1.79$, HPSv$3$ from $13.77$ to $14.44$, DeQA from $4.60$ to $4.78$, and VLM-Pairwise from $0.481$ to $0.551$. These gains equal $9.7\%$, $30.7\%$, $4.9\%$, $3.9\%$, and $14.6\%$. DiffusionNFT instead falls below the unadapted backbone on eight of ten objectives, including HPSv$3$ at $1.58$ versus $6.19$ and DeQA at $3.37$ versus $4.44$. Step distillation compresses the trajectory into native transitions that need not correspond one to one with teacher trajectories~\citep{boffi2025flowmapmatchingstochastic,yin2024one,yin2024improved}. This contrast is consistent with endpoint target mismatch under few-step adaptation, while \namew{} anchors its bounded targets at states visited by the native behavior policy.

\noindent\textbf{A single policy preserves near-specialist quality on three rewards.} The jointly trained $300$-update \name{} policy obtains PickScore $25.51$, CLIPScore $0.333$, and HPSv$2.1$ $0.389$, compared with $23.62$, $0.294$, and $0.340$ for DiffusionNFT. The score gains are $1.89$, $0.039$, and $0.049$, corresponding to relative improvements of $8.0\%$, $13.3\%$, and $14.4\%$. Compared with separate $100$-update \namew{} specialists at $24.94$, $0.340$, and $0.390$, the shared policy retains $113.0\%$, $92.9\%$, and $99.6\%$ of their gains over the base model. It exceeds the PickScore specialist by $0.56$, statistically matches the HPSv$2.1$ specialist with a gap of $0.001$, and incurs a small but significant CLIPScore shortfall of $0.007$. All three rewards continue improving through update $300$ in Fig.~\ref{fig:multireward_curves}, whereas DiffusionNFT's HPSv$2.1$ peaks at update $200$ and decreases by approximately $0.001$ at the final checkpoint. The shared \namew{} policy also exceeds the best two-stage OPD student scores of $23.06$, $0.272$, and $0.322$ on the three objectives. Because the specialists use only $100$ updates, this experiment tests compatibility with the evaluated weighted reward rather than state-of-the-art multi-objective optimization~\citep{zhao2026marble}.

\begin{figure}[t]
    \centering
    \vspace{-0.4cm}
    \includegraphics[width=\linewidth]{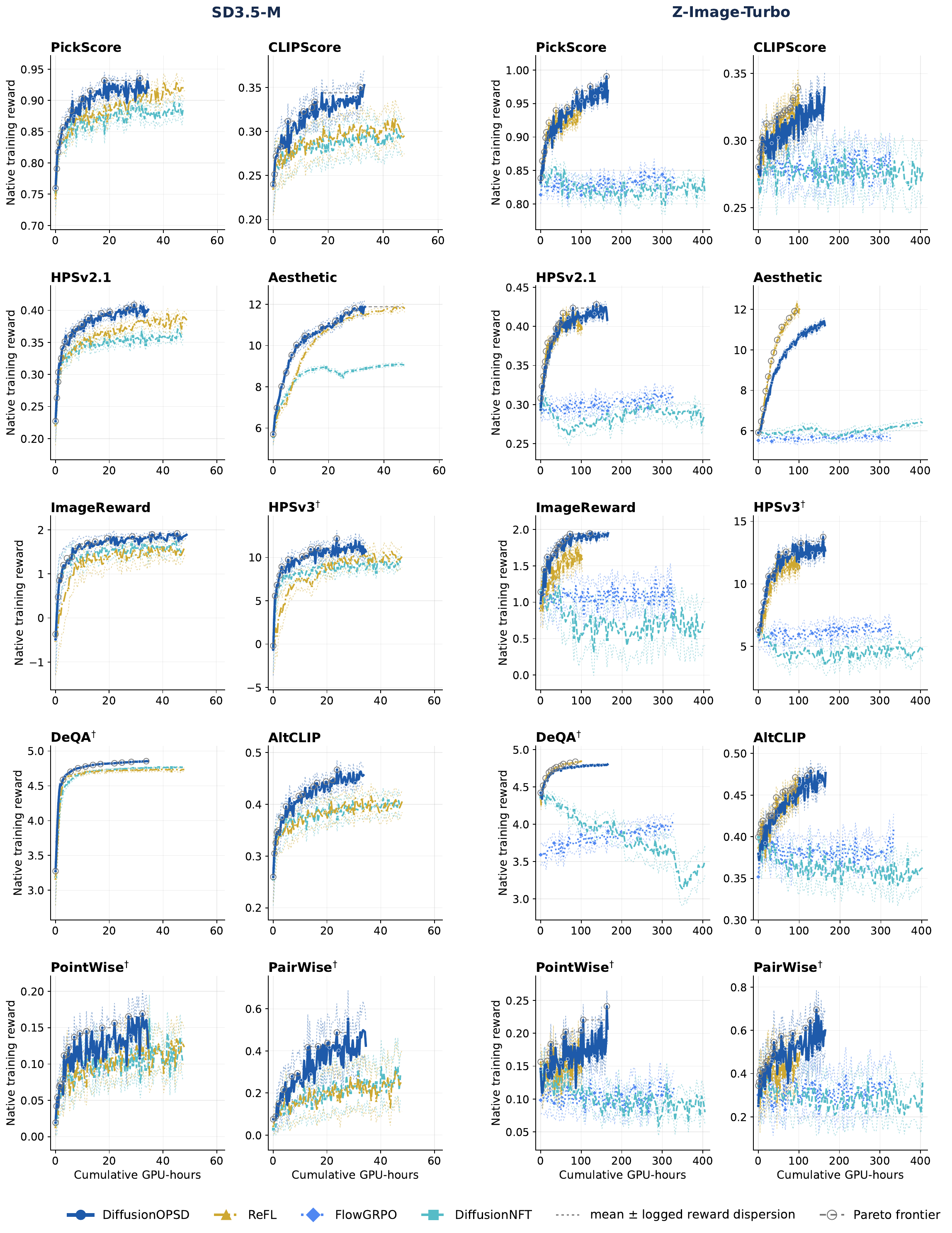}
    \caption{\textbf{Native training rewards.} DiffusionOPSD improves nearly every objective across both backbones, while DiffusionNFT falls below the base Z-Image-Turbo model on eight of ten objectives. Fine traces report logged within-prompt reward dispersion rather than confidence intervals.}
    \label{fig:native_gpu_pareto_20reward}
\end{figure}

\begin{figure}[t]
    \centering
    \includegraphics[width=\textwidth,height=\textheight,keepaspectratio]{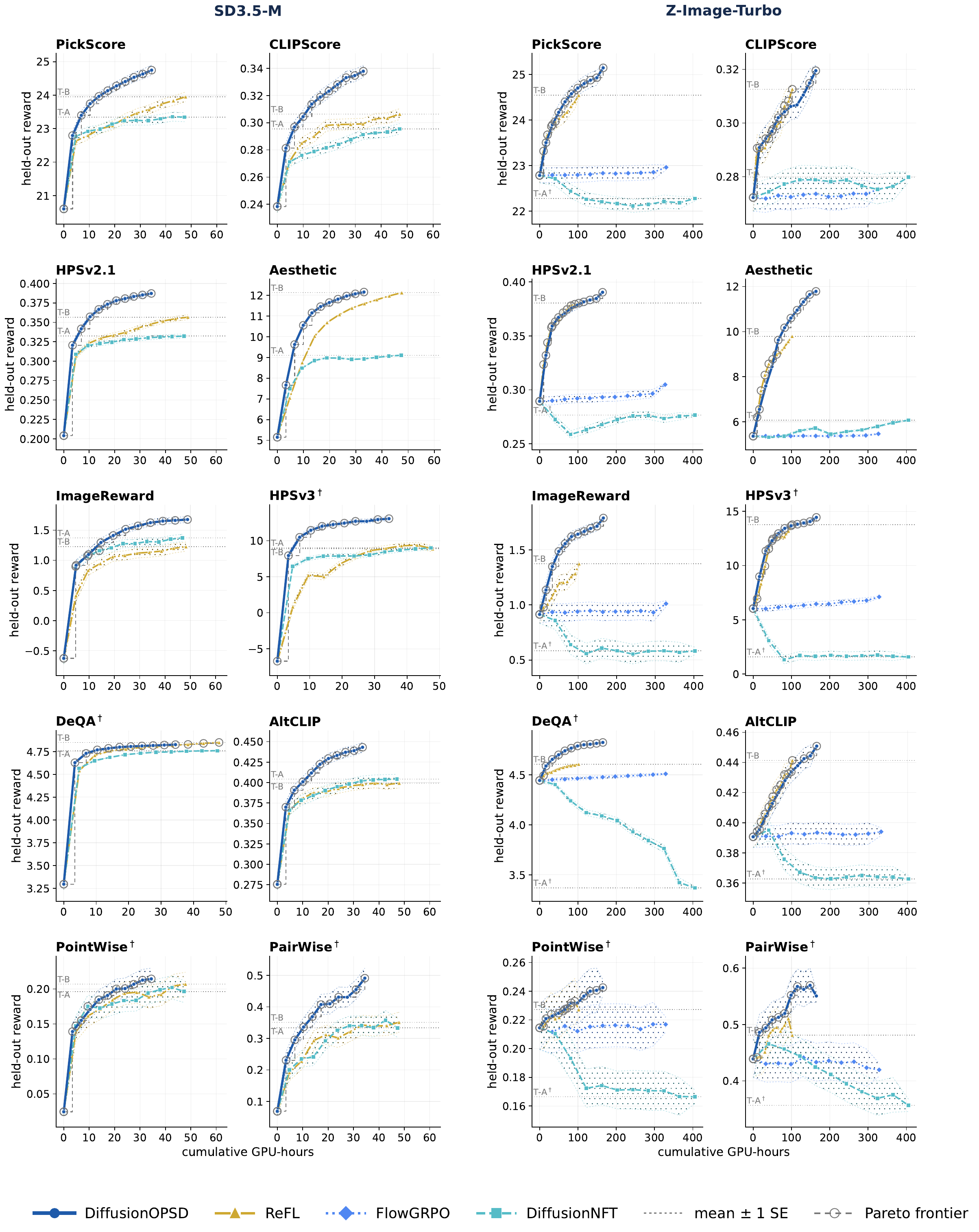}
    \caption{\textbf{Held-out quality and compute.} DiffusionOPSD reaches the highest held-out score on every Z-Image-Turbo frontier and on nine of ten SD$3.5$-M frontiers. The exception is SD$3.5$-M Aesthetic, where ReFL leads by $0.01$. Markers show checkpoint means and bands show one standard error.}
    \label{fig:two_backbone_ten_reward_pareto}
\end{figure}

\clearpage

% \FloatBarrier
\subsection{Target Construction}
\label{sec:target_construction_results}

\noindent\textbf{Reward gradients are required for positive target gain.} On $512$ distinct held-out prompts at the same low-noise query, the \name{} positive target increases fixed-suffix reward by $0.03511$ and has local reward-gradient alignment $0.7094$. The DiffusionNFT rollout endpoint instead changes reward by $-0.03551$ with alignment $-0.000203$. Matching its radius to the \namew{} target still yields a reward change of $-0.01887$ and the same near-zero alignment. The resulting \namew{} advantages are $0.07062$ over the sampled endpoint and $0.05398$ over its radius-matched version. Fig.~\ref{fig:mechanism_target_update} therefore attributes the construction gain to the reward-gradient direction rather than target radius.

\noindent\textbf{The construction advantage persists after training.} After $50$ optimizer updates, the canonical target reaches held-out CLIPScore $0.3122$ on $999$ images from $200$ prompts. No-op, matched-radius random-direction, and rollout-residual targets reach only $0.2363$, $0.2303$, and $0.1256$. The corresponding absolute gaps are $0.0759$, $0.0819$, and $0.1866$, while the residual variant collapses late in training. The trained endpoint separation therefore follows the target-level comparison in this experiment.

\noindent\textbf{Query-state provenance has a much smaller effect.} Replacing the rollout query state with a forward-noised control changes CLIPScore from $0.3122$ to $0.3089$. This is an absolute gap of $0.0033$ and a relative drop of $1.1\%$, compared with gaps above $0.075$ for the direction controls. The training curves in Fig.~\ref{fig:ablation_dynamics} show the same ordering. Reward direction is thus the dominant factor in this study, while exact low-noise query provenance is secondary. Sec.~\ref{sec:ablations} and App Sec.~\ref{app:ablation_details} give the matched screening protocol.

\begin{figure*}[t]
    \centering
    \includegraphics[width=\linewidth]{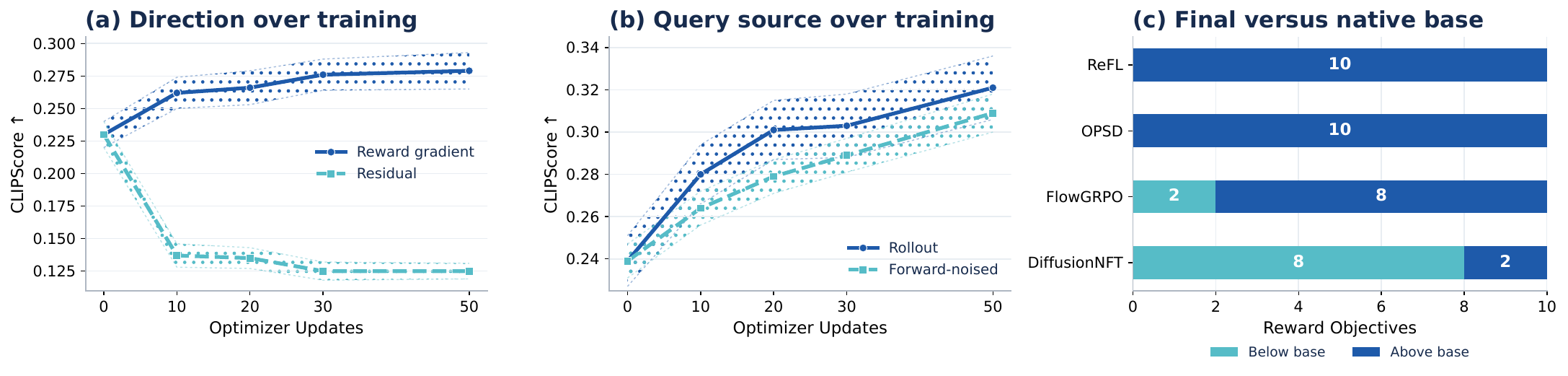}
    \caption{\textbf{Ablation dynamics.} Reward-gradient training improves throughout the run, while the rollout-residual target collapses and the forward-noised control remains close to the canonical query state.}
    \label{fig:ablation_dynamics}
\end{figure*}

\FloatBarrier
\subsection{Finite Fitting}
\label{sec:target_update_gap}

\noindent\textbf{A better target can produce a worse finite update.} On HPSv$2.1$, the reward-gradient target has construction gain $0.00245$, while the matched-radius random target has gain $-0.03251$. After one fresh-AdamW update~\citep{loshchilov2017decoupled}, however, their realized gains are $-0.000740$ and $-0.000021$. The random target therefore realizes $0.000719$ more reward despite being $0.03496$ worse before fitting. This ordering reverses on $62.3\%$ of $512$ prompts, with a prompt-bootstrap $95\%$ confidence interval from $58.2\%$ to $66.6\%$. On CLIPScore, the reversal rate is lower at $29.5\%$, with an interval from $25.6\%$ to $33.4\%$. The contrast shows that the reversal rate is reward-dependent under this fitting protocol.

For a fixed query $s$, detached positive target $\bar y_+$, and actual output $\hat y_{M_{\mathrm{fit}}}$ after $M_{\mathrm{fit}}$ fitting updates, let $F_q$ denote fixed-suffix reward. Construction and realization obey the exact identity
\begin{equation}
\underbrace{F_q(\hat y_{M_{\mathrm{fit}}})-F_q(y_0)}_{G_{\mathrm{realized}}}
=
\underbrace{F_q(\bar y_+)-F_q(y_0)}_{G_{\mathrm{construct}}}
-
\underbrace{\left[F_q(\bar y_+)-F_q(\hat y_{M_{\mathrm{fit}}})\right]}_{G_{\mathrm{fit}}}
\label{eq:target_update_accounting}
\end{equation}
where $G_{\mathrm{fit}}$ is a signed fitting gap that can reflect under-realization, rotation, or overshoot. This identity separates the reward gain of the target from the fitting gap; it does not describe end-to-end training.

\noindent\textbf{ReFL gives the largest isolated one-step reward gain.} In the calibrated plain-SGD comparison on $512$ same-query prompts, ReFL realizes reward gain $0.0005805$, compared with $0.0004588$ for bounded \namew{} fitting. Its local reward-gradient alignment is also higher at $0.08078$ versus $0.05340$. The differences are $0.0001217$ in realized gain and $0.02738$ in alignment. Bounded \namew{} has lower off-direction drift at $0.3710$ versus $0.3877$ for ReFL. DiffusionNFT produces negative realized gain $-0.0000333$ and much higher drift $0.6494$. The isolated probes restore parameters between prompts and exclude cross-query interference, negative-branch fitting, and behavior-policy updates. App Sec.~\ref{app:finite_fitting_audit} provides the detailed protocols, while Sec.~\ref{sec:complete_results} evaluates the complete online loop.

\begin{figure*}[t]
    \centering
    \includegraphics[width=\linewidth]{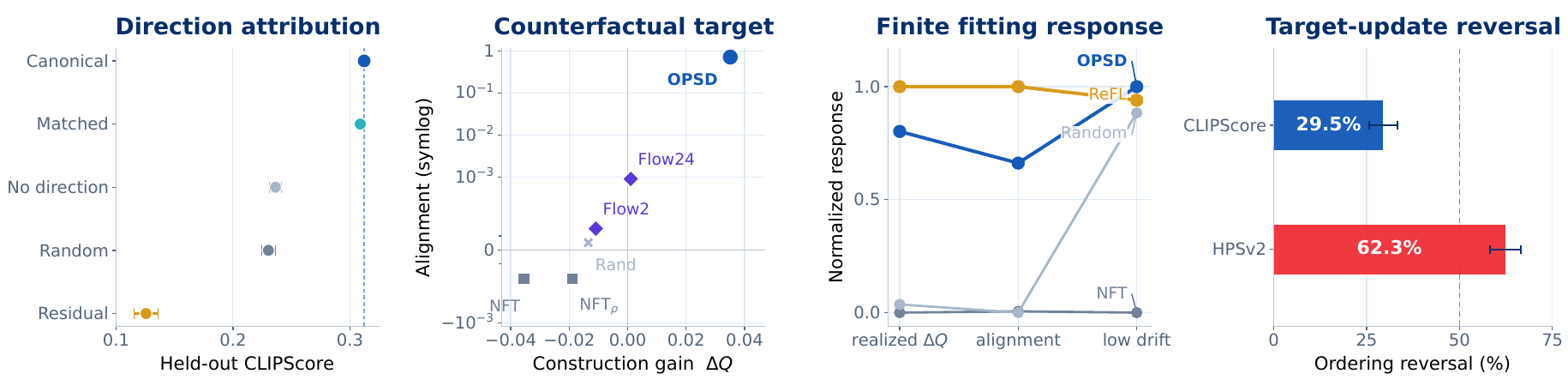}
    \caption{\textbf{Target construction and finite fitting.} The reward-gradient target gains $0.03511$, while the DiffusionNFT endpoint loses $0.03551$. After one fitting update, HPSv$2.1$ target ordering reverses on $62.3\%$ of prompts, showing that construction gain does not determine one-update realization.}
    \label{fig:mechanism_target_update}
\end{figure*}

\FloatBarrier
\subsection{Qualitative Results}
\label{sec:qualitative_results}

\noindent\textbf{Human evaluation favors DiffusionOPSD against every baseline.} On $100$ held-out prompts from the Z-Image-Turbo VLM-Pointwise setting, annotators prefer \name{} over the base model, FlowGRPO, DiffusionNFT, and ReFL on $64\%$, $71\%$, $90\%$, and $61\%$ of prompts. The result is strongest against DiffusionNFT, but remains above the majority threshold against the strongest baseline ReFL. The reward-matched VLM-Pointwise scores also place \namew{} first at $0.243$, compared with $0.227$ for ReFL, $0.217$ for FlowGRPO, $0.213$ for the base model, and $0.166$ for DiffusionNFT. Most human wins are attributed to prompt alignment or visual quality in Fig.~\ref{fig:ablation_robustness}.

\noindent\textbf{The visual gains concentrate on prompt fidelity and fine structure.} Across the $20$ fixed held-out prompts in Fig.~\ref{fig:main_qua_01} to~\ref{fig:main_qua_04}, \namew{} more consistently preserves rendered text, requested object identity, motion, material attributes, and multi-object composition. DiffusionNFT often loses requested details or semantic structure, consistent with the $90\%$ human preference for \namew{} in their direct comparison. ReFL produces stronger images than the other baselines, yet annotators still prefer \namew{} on $61\%$ of prompts. The qualitative comparisons agree with the human study.

\begin{figure}[t]
    \centering
    \vspace{-0.7cm}
    \includegraphics[width=\linewidth]{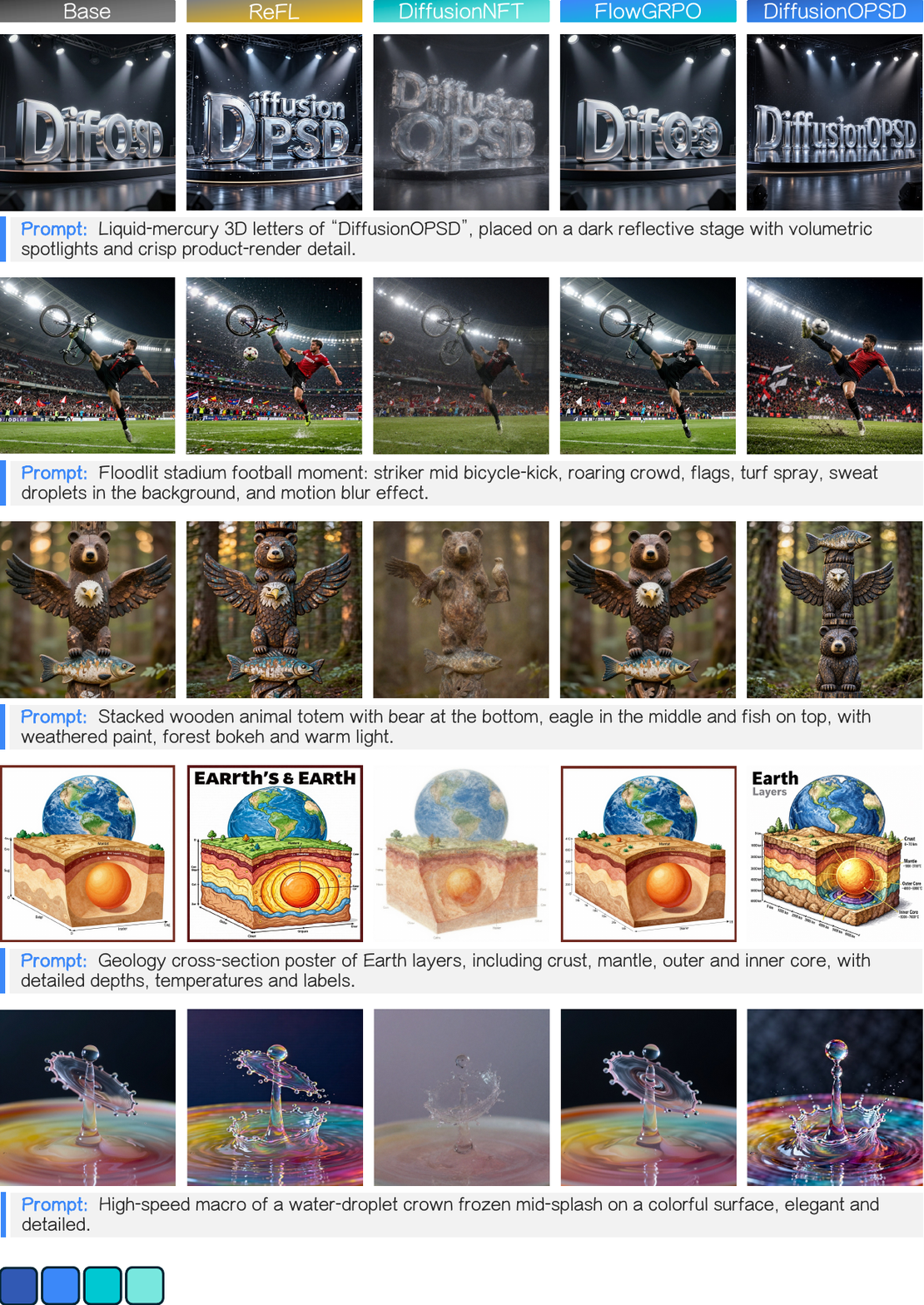}
    \vspace{-0.6cm}
    \caption{\textbf{Held-out qualitative comparisons.} DiffusionOPSD preserves requested text, motion, composition, and fine detail more consistently than the baselines.}
    \label{fig:main_qua_01}
\end{figure}

\begin{figure}[t]
    \centering
    \vspace{-0.7cm}
    \includegraphics[width=\linewidth]{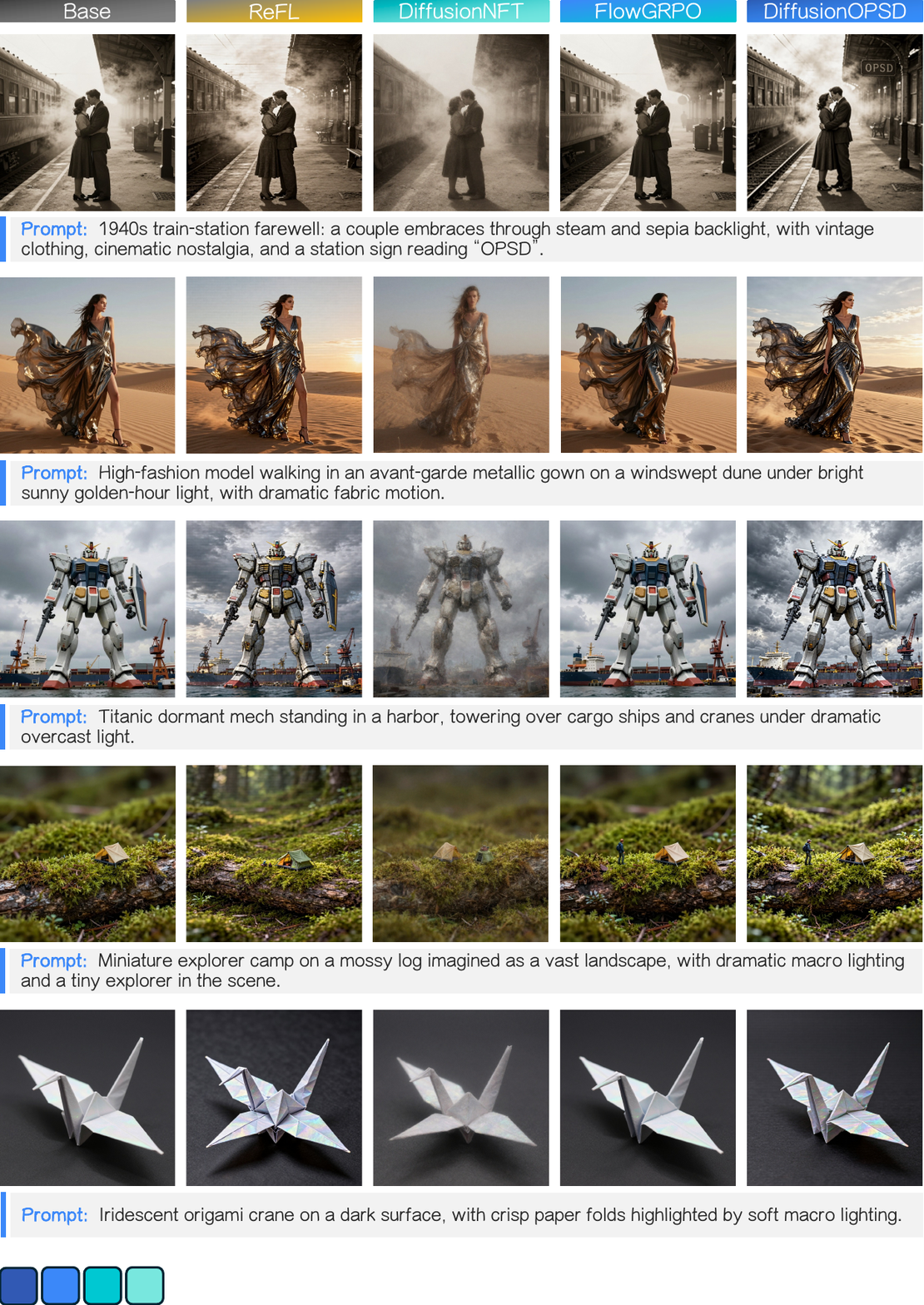}
    \vspace{-0.6cm}
    \caption{\textbf{Motion and composition.} DiffusionOPSD retains requested subjects, layouts, and fine details more consistently than the baselines.}
    \label{fig:main_qua_02}
\end{figure}

\begin{figure}[t]
    \centering
    \vspace{-0.7cm}
    \includegraphics[width=\linewidth]{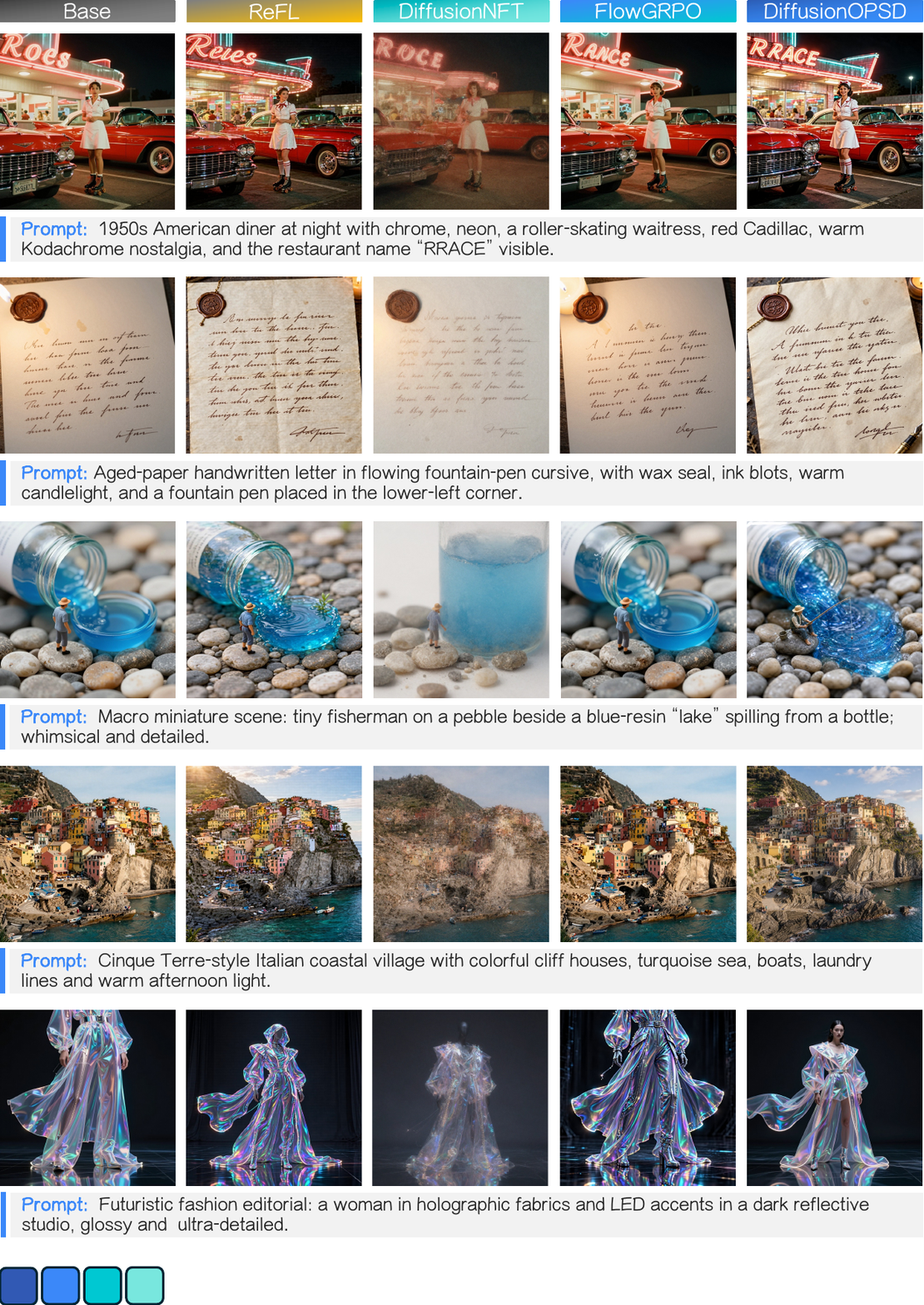}
    \vspace{-0.6cm}
    \caption{\textbf{Additional held-out comparisons.} DiffusionOPSD better preserves object identity, complex composition, and character detail across the additional prompts.}
    \label{fig:main_qua_03}
\end{figure}

\begin{figure}[t]
    \centering
    \vspace{-0.7cm}
    \includegraphics[width=\linewidth]{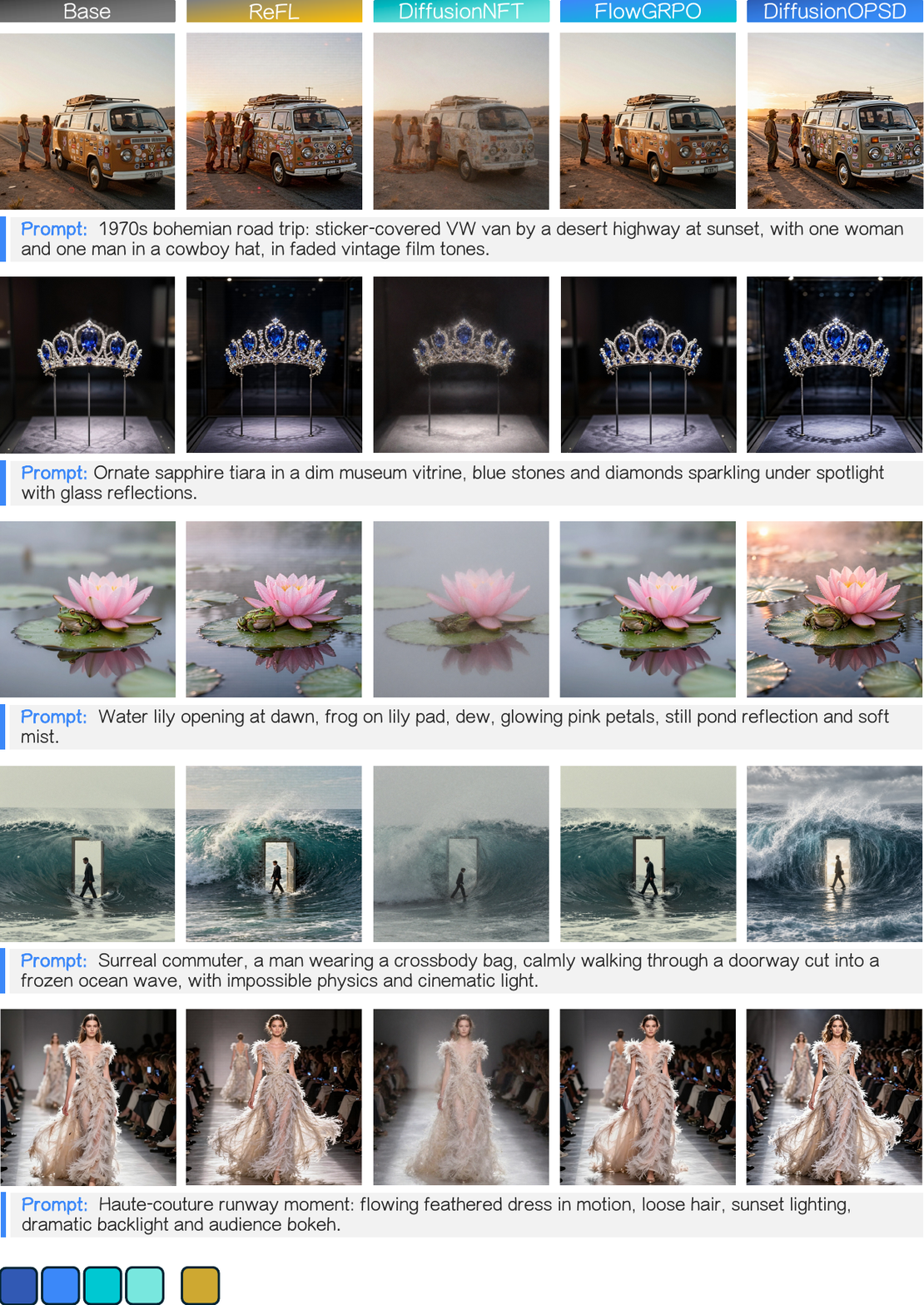}
    \vspace{-0.6cm}
    \caption{\textbf{Material and scene detail.} DiffusionOPSD better preserves requested materials, subjects, and compositions across the held-out prompts.}
    \label{fig:main_qua_04}
\end{figure}

% \FloatBarrier
\noindent\textbf{Component-level visuals agree with the quantitative ablations.} At update $50$, reward-gradient targets preserve requested entities and attributes more consistently than random and no-op controls across the $40$-prompt pool in Fig.~\ref{fig:ablation_direction_qual} and~\ref{fig:ablation_direction_qual_p02}. The expanded evaluation gives CLIPScore values of $0.3122$, $0.2303$, and $0.2363$ for these variants. The rollout and forward-noised query-state variants remain visually similar, matching expanded scores of $0.3122$ and $0.3089$. In contrast, increasing the branch coefficient to $10$ lowers the expanded CLIPScore to $0.2949$ and increasingly damages object identity, separation, and typography in Fig.~\ref{fig:ablation_mechanism_qual} to~\ref{fig:ablation_progress_mechanism_p03}. The qualitative and quantitative ablations show the same trends.

\FloatBarrier
\subsection{Ablations}
\label{sec:ablations}
\begin{figure}[t]
    \centering
    \includegraphics[width=\linewidth]{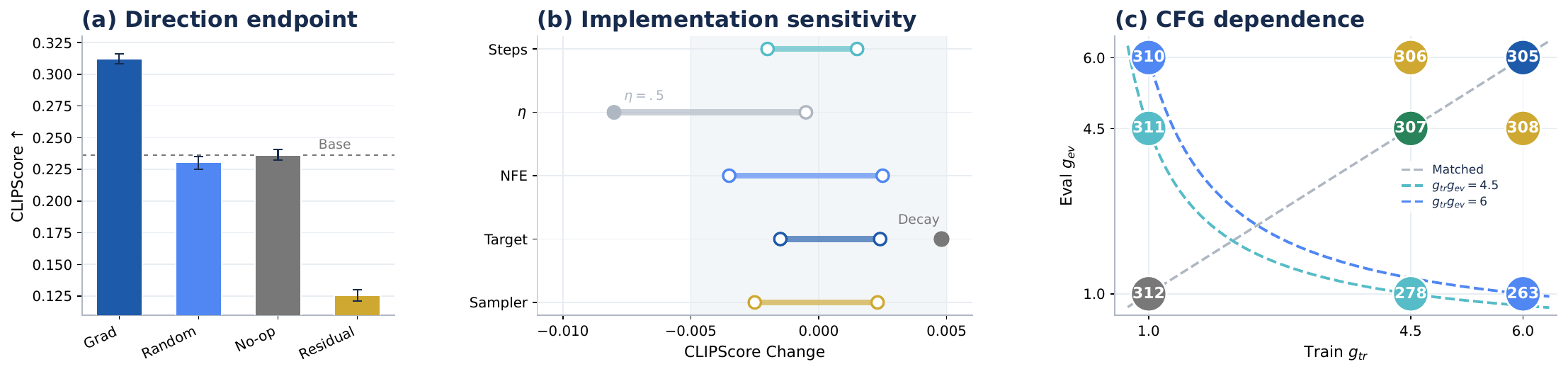}
    \caption{\textbf{Endpoint and implementation checks.} The reward-gradient target reaches $0.3117$, compared with $0.2311$, $0.2280$, and $0.1456$ for random, no-op, and rollout-residual targets. One-factor implementation controls cluster near the default, and no CFG configuration exceeds $0.3117$.}
    \label{fig:ablation_endpoint_controls}
\end{figure}

\begin{figure}[t]
    \centering
    \includegraphics[width=\linewidth]{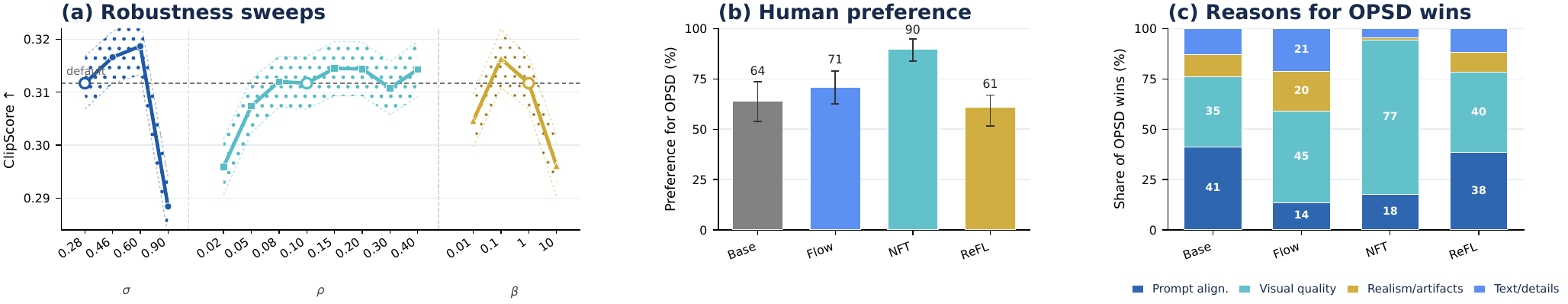}
    \caption{\textbf{Robustness and human preference.} Low-to-mid query noise and target radii from $0.08$ to $0.40$ remain near the default, while query noise $0.90$ and branch coefficient $10$ fall to $0.2884$ and $0.2961$. Annotators prefer DiffusionOPSD over the base model, FlowGRPO, DiffusionNFT, and ReFL on $64\%$, $71\%$, $90\%$, and $61\%$ of prompts.}
    \label{fig:ablation_robustness}
\end{figure}

\noindent\textbf{Screening setup.} We isolate \namew{} components on SD$3.5$-M with CLIPScore-only training for $50$ optimizer updates. Each score is evaluated on $100$ images from $20$ fixed DrawBench prompts after deterministic $10$-step DPM-Solver++ $2$M data collection~\citep{lu2025dpm} and deterministic $40$-step flow sampling. The canonical configuration obtains $0.3117\pm0.0050$. Sec.~\ref{sec:target_construction_results} reports the expanded target-direction evaluation on $999$ images, while the results here serve as matched component screening. App Sec.~\ref{app:ablation_details} gives the full protocol.

\noindent\textbf{Reward direction produces the largest ablation gap.} In the matched screen, replacing the reward-gradient target with random, no-op, and rollout-residual targets lowers CLIPScore from $0.3117$ to $0.2311$, $0.2280$, and $0.1456$. The absolute losses are $0.0806$, $0.0837$, and $0.1661$, substantially larger than any implementation-control gap in Fig.~\ref{fig:ablation_endpoint_controls}. Changing only the query source to a forward-noised state gives $0.3103$, a loss of $0.0014$. An alternative DiffusionNFT plus reward-gradient auxiliary loss reaches $0.3080$, $0.3173$, $0.3223$, and $0.3152$ at auxiliary weights $0.5$, $1$, $2$, and $4$. The weight-$2$ point is the highest score in this small screen, but its $0.0106$ advantage over the canonical target replacement has not been confirmed on the full DrawBench protocol. These results identify the reward-gradient signal as essential while leaving the exact low-noise state source and target-loss form less constrained.

\noindent\textbf{Most implementation choices remain close to the canonical score.} Enabling endpoint checking changes CLIPScore from $0.3117$ to $0.3102$, while removing the negative branch gives $0.3137$. Using one, two, or four reward-ascent steps gives $0.3130$, $0.3117$, and $0.3098$. Target-step multipliers $0.5$, $1$, and $2$ give $0.3038$, $0.3117$, and $0.3112$, with the smaller multiplier producing the largest loss in this group. Increasing rollout function evaluations from $10$ to $25$ and $40$ changes the score from $0.3117$ to $0.3138$ and $0.3080$. First-order DPM and stochastic SDE rollout samplers reach $0.3136$ and $0.3090$, while removing old-policy decay gives $0.3161$. Apart from the auxiliary-loss sweep, all of these point estimates remain within $0.0079$ of the default. Additional endpoint checks, rollout evaluations, and ascent steps do not improve the score, so the canonical result does not rely on more expensive control variants.

\noindent\textbf{Performance is stable across moderate settings but drops at the extremes.} Query noise levels $0.278$, $0.46$, and $0.60$ give $0.3117$, $0.3166$, and $0.3187$, while noise level $0.90$ drops to $0.2884$. Target radii from $0.08$ to $0.40$ remain between $0.3108$ and $0.3145$, a spread of only $0.0037$. Radius $0.02$ falls to $0.2959$, showing that overly conservative targets underperform. Branch coefficients $0.01$, $0.1$, $1$, and $10$ give $0.3046$, $0.3163$, $0.3117$, and $0.2961$. The default settings $\sigma_q=0.278$, $\rho=0.10$, and $\beta=1$ therefore lie on stable regions rather than isolated peaks. High query noise, a very small target radius, and a branch coefficient of $10$ are the main quantitative failure modes in Fig.~\ref{fig:ablation_robustness}.

\noindent\textbf{CFG training creates inference-scale dependence without improving the best score.} The CFG-free checkpoint scores $0.3117$, $0.3108$, and $0.3099$ when evaluated at guidance scales $1$, $4.5$, and $6$. Training at scale $4.5$ changes these values to $0.2775$, $0.3069$, and $0.3060$, while training at scale $6$ gives $0.2634$, $0.3076$, and $0.3045$. Conditional-only evaluation therefore falls by $11.0\%$ and $15.5\%$ after training at scales $4.5$ and $6$. Restoring inference guidance recovers most of the loss. For the scale-$4.5$ checkpoint, evaluation at $4.5$ improves over evaluation at $1$ by $0.0294$, with a prompt-bootstrap $95\%$ confidence interval from $0.0150$ to $0.0471$. Even with matched train and evaluation scales, the diagonal scores $0.3069$ and $0.3045$ remain $1.5\%$ and $2.3\%$ below the CFG-free point estimate. These diagonal gaps are comparable to standard errors from $0.0050$ to $0.0056$, and the training seeds are unmatched. The experiment therefore shows no benefit from CFG training but does not establish that CFG training is harmful. Swapped training and evaluation scales give $0.3108$ versus $0.2775$ at scale product $4.5$ and $0.3099$ versus $0.2634$ at product $6$. The response is therefore asymmetric and cannot be explained by the product of the two scales. No cell in the $3\times3$ grid exceeds the CFG-free result, consistent with DiffusionNFT's online post-training design~\citep{zheng2026diffusionnft}. App Sec.~\ref{app:cfg_compensation} compares this behavior with frozen-teacher CFG absorption in DanceOPD~\citep{zhou2026danceopd}.

\FloatBarrier

%% file: tables/main_results.tex
\begin{table}[t]
\centering
\setlength{\tabcolsep}{1.6pt}
\caption{\textbf{Main comparison.} Held-out evaluator scores on SD3.5-M and Z-Image-Turbo; higher is better. CFG denotes classifier-free guidance. \cmark denotes reward-specific checkpoints and \xmark denotes one checkpoint evaluated across all columns. Bold and underline mark the best and second-best results within each one-stage block. $^\ddagger$ marks rows with reported metrics where available; $^\dagger$ marks two-stage on-policy distillation baselines, with $300$ second-stage optimizer updates. Protocols are in App Sec.~\ref{app:main_result_details} and~\ref{app:opd_baselines}.}
\label{tab:main_results}
\resizebox{.98\linewidth}{!}{%
\begin{tabular}{lccrrrrrrrrrr}
\toprule
& & & \multicolumn{7}{c}{\textbf{Open-Sourced Evaluators}} & \multicolumn{3}{c}{\textbf{Internal Evaluators}} 
\\
\cmidrule(lr){4-10}\cmidrule(lr){11-13}
\textbf{Model} & \textbf{Reward-Specific} & \textbf{Updates} & \textbf{Pick} & \textbf{CLIP} & \textbf{HPSv2.1} & \textbf{Aes} & \textbf{ImgR} & \textbf{HPSv3} & \textbf{DeQA} & \textbf{AltCLIP} & \textbf{Point} & \textbf{Pair} 
\\
\midrule\midrule
    SDXL$^\ddagger$ & \xmark & - & 22.42 & 0.287 & 0.280 & 5.60 & 0.76 & 1.93 & 4.17 & 0.366 & 0.089 & 0.168 
    \\
    SD3.5-L$^\ddagger$ & \xmark & - & 22.91 & 0.289 & 0.288 & 5.50 & 0.96 & 5.04 & 4.26 & 0.393 & 0.171 & 0.361 
    \\
    FLUX.1-dev$^\ddagger$ & \xmark & - & 22.84 & 0.295 & 0.274 & 5.71 & 0.96 & 6.71 & 4.41 & 0.376 & 0.179 & 0.323 
    \\
    \midrule
    SD3.5-M w/o CFG$^\ddagger$ & \xmark & - & 20.51 & 0.237 & 0.204 & 5.13 & -0.58 & -6.47 & 3.30 & 0.276 & 0.026 & 0.069 \\
    + CFG$^\ddagger$ & \xmark & - & 22.34 & 0.285 & 0.279 & 5.36 & 0.85 & 2.69 & 4.04 & 0.380 & 0.136 & 0.290 
    \\
    + FlowGRPO$^\ddagger$ & \xmark & \textcolor{vefgrey}{$>5$k} & 22.51 & 0.293 & 0.274 & 5.32 & 1.06 & 2.65 & 4.01 & 0.393 & 0.181 & 0.388 
    \\
    & \xmark & \textcolor{vefgrey}{2k} & 22.41 & 0.290 & 0.280 & 5.32 & 0.95 & 2.67 & 4.06 & 0.387 & 0.151 & 0.314 
    \\
    & \xmark & \textcolor{vefgrey}{4k} & 23.50 & 0.280 & 0.316 & 5.90 & 1.29 & 7.08 & 4.15 & 0.397 & 0.162 & 0.325 
    \\
    + DiffusionNFT$^\ddagger$ & \xmark & \textcolor{vefgrey}{1.7k} & 23.80 & 0.293 & 0.331 & 6.01 & 1.49 & 7.48 & 4.34 & 0.398 & 0.111 & 0.296 
    \\
    + DiffusionNFT & \xmark & 300 & \underline{23.62} & \underline{0.294} & \underline{0.340} & \underline{6.02} & \underline{1.45} & \underline{8.36} & \underline{4.32} & \underline{0.397} & \underline{0.150} & \underline{0.294}
    \\
    \rowcolor{vefblue1!10}+ \name{} & \xmark & 300 & \textbf{25.51} & \textbf{0.333} & \textbf{0.389} & \textbf{6.03} & \textbf{1.51} & \textbf{9.41} & \textbf{4.40} & \textbf{0.399} & \textbf{0.170} & \textbf{0.345} 
    \\
    \midrule
    + DanceOPD$^\dagger$ & \xmark & 300 & 23.06 & 0.272 & 0.322 & 5.98 & 1.23 & 7.24 & 4.34 & 0.378 & 0.127 & 0.219 
    \\
    + DiffusionOPD$^\dagger$ & \xmark & 300 & 23.05 & 0.269 & 0.322 & 6.04 & 1.23 & 7.39 & 4.34 & 0.373 & 0.122 & 0.199 
    \\
    + FlowOPD$^\dagger$ & \xmark & 300 & 22.62 & 0.263 & 0.300 & 5.98 & 0.92 & 5.00 & 4.20 & 0.353 & 0.101 & 0.166 \\
\hdashline
+ ReFL & \cmark & 100 & \underline{23.92} & \underline{0.308} & \underline{0.358} & \textbf{12.09} & 1.28 & \underline{9.33} & \underline{4.85} & 0.408 & 0.193 & 0.290 \\
+ DiffusionNFT & \cmark & 100 & 23.43 & 0.298 & 0.336 & 9.11 & \underline{1.46} & 9.14 & 4.76 & \underline{0.412} & \underline{0.199} & \underline{0.323} \\
\rowcolor{vefblue1!10}
+ \name{} & \cmark & 100 & \textbf{24.94} & \textbf{0.340} & \textbf{0.390} & \underline{12.08} & \textbf{1.76} & \textbf{13.34} & \textbf{4.94} & \textbf{0.450} & \textbf{0.214} & \textbf{0.465} \\
\midrule
Z-Image-Turbo & \xmark & - & 22.86 & 0.276 & 0.296 & 5.41 & 0.96 & 6.19 & 4.44 & 0.392 & 0.213 & 0.422 \\
+ FlowGRPO & \cmark & 100 & 22.96 & 0.275 & 0.305 & 5.46 & 1.01 & 7.11 & 4.51 & 0.394 & 0.217 & 0.420 \\
+ ReFL & \cmark & 100 & \underline{24.54} & \underline{0.313} & \underline{0.380} & \underline{9.79} & \underline{1.37} & \underline{13.77} & \underline{4.60} & \underline{0.441} & \underline{0.227} & \underline{0.481} \\
+ DiffusionNFT & \cmark & 100 & 22.28 & 0.280 & 0.277 & 6.07 & 0.58 & 1.58 & 3.37 & 0.363 & 0.166 & 0.357 \\
\rowcolor{vefblue1!10}
+ \name{} & \cmark & 100 & \textbf{25.15} & \textbf{0.320} & \textbf{0.390} & \textbf{10.74} & \textbf{1.79} & \textbf{14.44} & \textbf{4.78} & \textbf{0.451} & \textbf{0.243} & \textbf{0.551} \\
\bottomrule
\end{tabular}%
}
\end{table}

%% file: sections/appendix.tex
\section{Author List}
\label{app:author_list}

\noindent\begin{tabular}{@{}l p{0.8\textwidth}@{}}
Wei Zhou & \quad ByteDance Seed; National University of Singapore 
\\
Xiongwei Zhu & \quad ByteDance Seed 
\\
Lingdong Kong & \quad National University of Singapore 
\\
Bo Chen & \quad ByteDance Seed 
\\
Lei Zhang & \quad University of California, San Diego 
\\
Yongyuan Liang & \quad University of Maryland, College Park 
\\
Xiaoxia Hou & \quad ByteDance Seed 
\\
Ye Tian & \quad The Hong Kong University of Science and Technology, Guangzhou 
\\
Xian Sun & \quad Duke University 
\\
Yingshuo Wang & \quad University of California, Berkeley 
\\
Linfeng Li & \quad National University of Singapore 
\\
Shengqiong Wu & \quad University of Oxford 
\\
Leigang Qu & \quad National University of Singapore 
\\
Feng Li & \quad The Hong Kong University of Science and Technology 
\\
Wei Liu$^{\dagger}$ & \quad ByteDance Seed 
\\
Julian McAuley & \quad University of California, San Diego 
\\
Tat-Seng Chua & \quad National University of Singapore
\end{tabular}

\vspace{0.6em}

\noindent
$^{\dagger}$Corresponding author

\vspace{1em}

\section{Mathematical Foundations}
\label{app:math}

This appendix provides the mathematical and experimental details behind Sec.~\ref{sec:approach}. We follow standard affine diffusion and flow parameterizations~\citep{ho2020denoising,song2020score,song2020denoising,karras2022elucidating,lipman2022flow,liu2022flow}. The main distinction is between a positive or negative target constructed before fitting and the clean-output prediction produced after finite fitting. We therefore separate statements about coordinate validity, target construction, fixed-suffix reward, finite fitting, and end-to-end online training.

The derivations preserve the implemented normalized multi-step construction and positive and negative fitting objective. Ideal output-space optima are used only to characterize the loss. They are not claims that the trainable model reaches those optima under finite AdamW updates.

\subsection{Affine Clean-Output Map}
\label{app:local_action}

Many diffusion and flow samplers can be described by an affine path between a clean-output variable $y$ and a noise variable $\epsilon$.
\begin{equation}
    z_t=\alpha_t y+\sigma_t\epsilon,
    \qquad
    v_t=\dot{z}_t=\dot{\alpha}_t y+\dot{\sigma}_t\epsilon.
    \label{eq:app_affine_path}
\end{equation}
Here $v_t$ is the velocity prediction produced by the model, and the dot denotes differentiation with respect to the time or noise coordinate. The variable $y$ is the clean-output prediction used by \name{}. To recover it from a velocity prediction, write Eq.~\eqref{eq:app_affine_path} as a two-by-two linear system.
\begin{align*}
    \begin{bmatrix}z_t\\v_t\end{bmatrix}
    =
    \begin{bmatrix}
    \alpha_t & \sigma_t\\
    \dot{\alpha}_t & \dot{\sigma}_t
    \end{bmatrix}
    \begin{bmatrix}y\\\epsilon\end{bmatrix},
    \qquad
    \Delta_t=\alpha_t\dot{\sigma}_t-\dot{\alpha}_t\sigma_t.
\end{align*}
When $\Delta_t\neq0$, the inverse matrix is
\begin{align*}
    \begin{bmatrix}
    \alpha_t & \sigma_t\\
    \dot{\alpha}_t & \dot{\sigma}_t
    \end{bmatrix}^{-1}
    =\frac{1}{\Delta_t}
    \begin{bmatrix}
    \dot{\sigma}_t & -\sigma_t\\
    -\dot{\alpha}_t & \alpha_t
    \end{bmatrix}.
\end{align*}
Therefore
\begin{equation}
\boxed{
\begin{aligned}
    y_t(z_t,v_t)&=\frac{\dot{\sigma}_t z_t-\sigma_t v_t}{\Delta_t},\\
    \epsilon_t(z_t,v_t)&=\frac{-\dot{\alpha}_t z_t+\alpha_t v_t}{\Delta_t}.
\end{aligned}
}
\label{eq:app_general_action}
\end{equation}
For the rectified-flow schedule, choose the path coordinate $t=\sigma$ and write $\alpha(t)=1-t$ and $\sigma(t)=t$. Hence $\dot{\alpha}(t)=-1$ and $\dot{\sigma}(t)=1$. Then
\begin{align*}
    \Delta_\sigma=(1-\sigma)\cdot 1-(-1)\cdot\sigma=1,
\end{align*}
so the clean-output coordinate reduces to
\begin{equation}
    y_\sigma(z_\sigma,v_\sigma)=z_\sigma-\sigma v_\sigma.
    \label{eq:app_rf_action}
\end{equation}
This is the clean-output map used in Eq.~\eqref{eq:local_action}. For other schedules, \namew{} only needs to replace Eq.~\eqref{eq:local_action} with Eq.~\eqref{eq:app_general_action}; the target construction and branch objective remain unchanged.

At a fixed rectified-flow query with $\sigma>0$, the map is a bijection between velocity and clean-output coordinates, but it is not an isometry. The relation is $\delta y=-\sigma\delta v$ and $\|\delta v\|_2=\|\delta y\|_2/\sigma$. A clean-output target radius therefore induces a noise-dependent velocity displacement. Algebraic equivalence does not imply geometry- or optimizer-invariant finite fitting.

The clean-output map also explains why low-noise states are useful. Differentiating Eq.~\eqref{eq:app_general_action} gives
\begin{align*}
    \frac{\partial y_t}{\partial v_t}=-\frac{\sigma_t}{\Delta_t}I,
\end{align*}
which becomes $\partial y/\partial v=-\sigma I$ under rectified flow. Thus a velocity perturbation is damped in clean-output space when $\sigma$ is small.

\begin{opsdprop}[Low-noise clean-output stability]
\label{prop:low_noise_stability}
Let $v$ and $v^*$ be two velocity predictions evaluated at the same state $z_t$. Under the general affine clean-output map,
\begin{equation}
    \|y_v-y_{v^*}\|_2
    \leq \left|\frac{\sigma_t}{\Delta_t}\right|
    \|v-v^*\|_2.
    \label{eq:app_action_stability}
\end{equation}
For rectified flow this becomes $\|y_v-y_{v^*}\|_2\leq \sigma\|v-v^*\|_2$.
\end{opsdprop}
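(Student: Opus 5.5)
The plan is to use directly the affine recovery formula in Eq.~\eqref{eq:app_general_action}. Throughout we assume $\Delta_t\neq0$, which is the standing condition under which that formula is defined. We take $y_v:=y_t(z_t,v)$ and $y_{v^*}:=y_t(z_t,v^*)$ as the clean-output predictions obtained from the two velocities at the \emph{same} state $z_t$ and the same time $t$. Since both are read off at an identical state, the comparison reduces to a difference of two affine expressions that share their $z_t$-dependent part.

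First we subtract the two instances of Eq.~\eqref{eq:app_general_action}. The terms $\dot\sigma_t z_t/\Delta_t$ coincide because $z_t$, $\dot\sigma_t$ and $\Delta_t$ are the same in both, so they cancel exactly:
\begin{equation*}
y_v-y_{v^*}=\frac{\dot\sigma_t z_t-\sigma_t v}{\Delta_t}-\frac{\dot\sigma_t z_t-\sigma_t v^*}{\Delta_t}=-\frac{\sigma_t}{\Delta_t}\,(v-v^*).
\end{equation*}
This matches the Jacobian $\partial y_t/\partial v_t=-(\sigma_t/\Delta_t)I$ already noted before the proposition. Because the map is affine in $v$, the identity is exact and no mean-value argument is needed.

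Next we take Euclidean norms and use absolute homogeneity, $\|c\,w\|_2=|c|\,\|w\|_2$ for a scalar $c$. This gives
\begin{equation*}
\|y_v-y_{v^*}\|_2=\left|\frac{\sigma_t}{\Delta_t}\right|\|v-v^*\|_2.
\end{equation*}
The bound in Eq.~\eqref{eq:app_action_stability} therefore holds, in fact with equality.

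For the rectified-flow specialization we substitute the values computed above: $\alpha=1-\sigma$, $\sigma_t=\sigma$, and $\Delta_\sigma=1$. The factor becomes $|\sigma|=\sigma$ because $\sigma>0$, which recovers Eq.~\eqref{eq:app_rf_action} and the relation $\|\delta y\|_2=\sigma\|\delta v\|_2$ from Eq.~\eqref{eq:clean_velocity_geometry}.

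There is no substantive obstacle here. The only points needing care are the standing assumption $\Delta_t\neq0$ and the requirement that both predictions use the same $z_t$; if the states differed, the $z_t$ terms would not cancel.
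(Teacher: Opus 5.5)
Your proposal is correct and follows the paper's proof: subtracting the two instances of Eq.~\eqref{eq:app_general_action} at the same $z_t$ cancels the $\dot\sigma_t z_t/\Delta_t$ term and leaves $-(\sigma_t/\Delta_t)(v-v^*)$, and taking Euclidean norms gives the bound, in fact with equality. Your caveats, $\Delta_t\neq0$ and a shared state $z_t$, are the right ones, and the rectified-flow case follows from $\Delta_\sigma=1$ and $\sigma>0$.
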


\begin{proof}
From Eq.~\eqref{eq:app_general_action},
\begin{align*}
    y_v-y_{v^*}
    &=\frac{\dot{\sigma}_t z_t-\sigma_t v}{\Delta_t}
      -\frac{\dot{\sigma}_t z_t-\sigma_t v^*}{\Delta_t} \\
    &=-\frac{\sigma_t}{\Delta_t}(v-v^*).
\end{align*}
Taking the Euclidean norm gives Eq.~\eqref{eq:app_action_stability}.
\end{proof}

The proposition does not say that the query should be placed at zero noise. At exactly zero noise the sampler has almost no remaining controllable clean-output change. The useful regime is low but nonzero noise, where the decoded clean output is already semantically meaningful while the policy still has a local degree of freedom to improve the final sample.

Joint state and velocity perturbations can be controlled by the same inverse map. Let $(z_t,v_t)$ and $(z_t',v_t')$ be two input pairs at the same schedule coordinate. Subtracting the two inverse maps and applying the triangle inequality gives
\begin{equation}
\begin{aligned}
    y_t(z_t,v_t)-y_t(z_t',v_t')
    &=\frac{\dot{\sigma}_t(z_t-z_t')
    -\sigma_t(v_t-v_t')}{\Delta_t},\\
    \|y_t(z_t,v_t)-y_t(z_t',v_t')\|_2
    &=\left\|
    \frac{\dot{\sigma}_t}{\Delta_t}(z_t-z_t')
    -\frac{\sigma_t}{\Delta_t}(v_t-v_t')
    \right\|_2\\
    &\leq
    \left|\frac{\dot{\sigma}_t}{\Delta_t}\right|
    \|z_t-z_t'\|_2\\
    &+\left|\frac{\sigma_t}{\Delta_t}\right|
    \|v_t-v_t'\|_2.
\end{aligned}
\label{eq:app_joint_action_stability}
\end{equation}
For rectified flow this reduces to
\begin{align*}
    \|y-y'\|_2
    \leq \|z-z'\|_2+\sigma\|v-v'\|_2.
\end{align*}
Eq.~\eqref{eq:app_joint_action_stability} separates two sources of clean-output coordinate error. The first comes from storing or reconstructing an inaccurate sampler state, while the second comes from velocity prediction error. The determinant $\Delta_t$ controls both terms. A schedule coordinate with $|\Delta_t|$ close to zero makes the inverse clean-output map ill conditioned even when the noise level itself is moderate. Thus low noise alone is not sufficient for a general affine schedule. The query should also lie in a region where the schedule determinant stays bounded away from zero throughout target construction.

\subsection{Endpoint-Weighted Objectives and Reward Weighting}
\label{app:diffusionnft_relation}

The difference from DiffusionNFT~\citep{zheng2026diffusionnft} can be isolated at the level of the population objective. Let $\mathcal{Y}$ denote the clean-output space and $\mathcal{T}$ the space of query tuples $s=(\mathbf{c},z_q,\sigma_q)$. Suppressing branch details, a forward-process endpoint objective has the form
\begin{align*}
    \mathcal{J}_{\mathrm{NFT}}(\theta)
    =\mathbb{E}_{\substack{\mathbf{c}\sim\mathcal{C},\;x_0\sim\pi_{\mathrm{old}}(\cdot\mid\mathbf{c})\\
    \sigma\sim p_{\mathrm{train}}(\sigma),\;
    z_\sigma\sim p_{\mathrm{fwd}}(\cdot\mid x_0,\sigma)}}
    \left[\ell_\theta(z_\sigma,x_0,w(x_0,\mathbf{c}))\right],
\end{align*}
where $p_{\mathrm{train}}(\sigma)$ is the forward-training noise distribution and $w(x_0,\mathbf{c})$ is the reward-derived sample weight. The endpoint $x_0$ is both the object that receives this weight and the clean target from which the training state is synthesized. The corresponding \name{} objective is
\begin{equation}
    \mathcal{J}_{\mathrm{OPSD}}(\theta)
    =\mathbb{E}_{b\sim Q_b}
    \left[
    \mathcal{L}_{\mathrm{branch}}
    \bigl(y_\theta(s_b),T_+(b),T_-(b),\omega_b\bigr)
    \right],
    \label{eq:app_opsd_population}
\end{equation}
where $b=(s_b,y_0,r,\omega_b)$ is an augmented analysis record induced by one rollout batch, $Q_b$ denotes its joint law, $s_b=(\mathbf{c},z_q,\sigma_q)$, prompts follow $\mathbf{c}\sim\mathcal{C}$, and the state follows the conditional rollout slice at $\sigma_q$. The variable $y_0$ is the anchor clean-output prediction at that query, and $T_+$ and $T_-$ are the reward-gradient-derived target maps. The implementation does not store $y_0$ in the target dataset. It recomputes $y_0$ deterministically from the frozen behavior policy at the stored query during fitting. This population objective isolates the branch mechanism. The structural substitutions are
\begin{align*}
    P_{\mathrm{fwd},\sigma_q}^{\mathrm{old}}
    &\longmapsto Q_{\mathrm{roll},\sigma_q}^{\mathrm{old}},
    &
    x_0
    &\longmapsto y_0,
    &
    x_0
    &\longmapsto \bigl(T_+(y_0),T_-(y_0)\bigr).
\end{align*}
The first substitution changes where the model is queried. The second changes the local object represented by its prediction. The third turns a weighted reconstruction target into a reward-improving target.

These effects can be separated quantitatively. Let $\xi$ denote a complete sample record on a common measurable space. It includes the query tuple, anchor, fitting weight, and the rollout-batch statistics needed to obtain that weight. Let $\ell_\theta(\xi,\mathbf{T})$ denote the resulting branch loss, and write $\ell$ when $\theta$ is fixed. Let $\mathbf{T}(\xi)=(T_+(\xi),T_-(\xi))$ be the reward-gradient-derived target map and let $\mathbf{T}_0(\xi)$ be an endpoint-derived comparator map in the same product space. Both maps are assumed measurable on the union of the compared supports. Equip the target space with $\|\mathbf{T}\|_\times^2=\|T_+\|_2^2+\|T_-\|_2^2$. Let $Q$ and $P$ be the corresponding augmented sample laws whose query-state marginals are $Q_{\mathrm{roll},\sigma_q}^{\mathrm{old}}$ and $P_{\mathrm{fwd},\sigma_q}^{\mathrm{old}}$. Assume the local loss is bounded by $L_{\max}$ on the compared trust-region support and is $L_T$-Lipschitz in this product norm. Adding and subtracting the comparator risk under $Q$ yields the full decomposition
\begin{equation}
\begin{aligned}
    \left|
    \mathbb{E}_{Q}\ell(\xi,\mathbf{T}(\xi))
    -\mathbb{E}_{P}\ell(\xi,\mathbf{T}_0(\xi))
    \right|
    &=\left|
    \mathbb{E}_{Q}\bigl[\ell(\xi,\mathbf{T}(\xi))-\ell(\xi,\mathbf{T}_0(\xi))\bigr]
    +\mathbb{E}_{Q}\ell(\xi,\mathbf{T}_0(\xi))
    -\mathbb{E}_{P}\ell(\xi,\mathbf{T}_0(\xi))
    \right|\\
    &\leq
    \mathbb{E}_{Q}
    \left|\ell(\xi,\mathbf{T}(\xi))-\ell(\xi,\mathbf{T}_0(\xi))\right|
    +\left|
    \mathbb{E}_{Q}\ell(\xi,\mathbf{T}_0(\xi))
    -\mathbb{E}_{P}\ell(\xi,\mathbf{T}_0(\xi))
    \right|\\
    &\leq
    L_T\mathbb{E}_{Q}\|\mathbf{T}(\xi)-\mathbf{T}_0(\xi)\|_\times
    +L_{\max}\int |\mathrm{d}Q-\mathrm{d}P|\\
    &=L_T\mathbb{E}_{Q}\|\mathbf{T}(\xi)-\mathbf{T}_0(\xi)\|_\times
    +2L_{\max}D_{\mathrm{TV}}(Q,P).
\end{aligned}
\label{eq:app_objective_decomposition}
\end{equation}
Here $D_{\mathrm{TV}}$ denotes total variation distance.
The total change contains one term for the query-state source and another for target construction. DiffusionNFT primarily exploits reward through sample weighting. \name{} retains that weighting while changing the query-state source through rollout collection and changing target construction through local reward gradients.

The two factors can be isolated through a factorial family of population objectives. Let $P_0=P$ and $P_1=Q$ for the augmented laws above. Let $\mathbf{T}_0(\xi)$ be the endpoint-derived target map and $\mathbf{T}_1(\xi)=\mathbf{T}(\xi)$ the reward-gradient-derived target map. Define
\begin{align*}
    \mathcal{J}_{a,b}(\theta)
    =\mathbb{E}_{\xi\sim P_a}
    \left[\ell_\theta(\xi,\mathbf{T}_b(\xi))\right],
    \qquad a,b\in\{0,1\}.
\end{align*}
Then $\mathcal{J}_{0,0}$ represents endpoint-noised training, $\mathcal{J}_{1,0}$ changes only the queried state distribution, $\mathcal{J}_{0,1}$ changes only the target construction, and $\mathcal{J}_{1,1}$ is the full local OPSD abstraction. An order-independent two-factor attribution is
\begin{align*}
    \phi_{\mathrm{tgt}}
    &=\frac12\left[
    (\mathcal{J}_{0,1}-\mathcal{J}_{0,0})
    +(\mathcal{J}_{1,1}-\mathcal{J}_{1,0})
    \right],\\
    \phi_{\mathrm{query}}
    &=\frac12\left[
    (\mathcal{J}_{1,0}-\mathcal{J}_{0,0})
    +(\mathcal{J}_{1,1}-\mathcal{J}_{0,1})
    \right].
\end{align*}
It satisfies \(\phi_{\mathrm{tgt}}+\phi_{\mathrm{query}}=\mathcal{J}_{1,1}-\mathcal{J}_{0,0}\). Averaging the two possible factor orders splits their interaction evenly and avoids assigning it to either factor through an arbitrary order. This decomposition formalizes the two factors examined by the target variants and the forward-noised control.

\phantomsection
\label{app:reward_splits}

\noindent\textbf{Reward splits.}
DiffusionNFT motivates its implicit positive and negative branches by splitting the old endpoint distribution according to a normalized reward. Let $\mathbf{o}\in\{0,1\}$ be a binary optimality variable, let $r(x_0,\mathbf{c})=\Pr(\mathbf{o}=1\mid x_0,\mathbf{c})\in[0,1]$, and define
\begin{align*}
    p_+(\mathbf{c})=\mathbb{E}_{x_0\sim\pi_{\mathrm{old}}(\cdot\mid\mathbf{c})}r(x_0,\mathbf{c}).
\end{align*}
Assume $0<p_+(\mathbf{c})<1$. Then Bayes' rule gives
\begin{align*}
    \pi^+(x_0\mid\mathbf{c})
    &=\pi_{\mathrm{old}}(x_0\mid \mathbf{o}=1,\mathbf{c})
      =\frac{r(x_0,\mathbf{c})}{p_+(\mathbf{c})}\pi_{\mathrm{old}}(x_0\mid\mathbf{c}),\\
    \pi^-(x_0\mid\mathbf{c})
    &=\pi_{\mathrm{old}}(x_0\mid \mathbf{o}=0,\mathbf{c})
      =\frac{1-r(x_0,\mathbf{c})}{1-p_+(\mathbf{c})}\pi_{\mathrm{old}}(x_0\mid\mathbf{c}),\\
    \pi_{\mathrm{old}}(x_0\mid\mathbf{c})
    &=p_+(\mathbf{c})\pi^+(x_0\mid\mathbf{c})
      +(1-p_+(\mathbf{c}))\pi^-(x_0\mid\mathbf{c}).
\end{align*}
The forward-process objective then transports these endpoint splits through the forward noising kernel.

\name{} keeps the positive and negative idea but changes the object being reweighted. Let $Q_b$ be the joint law of the augmented analysis tuple
\begin{align*}
    b=(s,y_0,r,\omega),
    \qquad
    s=(\mathbf{c},z_q,\sigma_q),
\end{align*}
where $z_q\sim Q_{\mathrm{roll},\sigma_q}^{\mathrm{old}}$, $y_0$ is the anchor clean-output prediction, $r=R(D(x_0),\mathbf{c})$, and $\omega$ is the group-normalized fitting weight in Eq.~\eqref{eq:group_weight}. The tuple is used only to define the population law. The finite target dataset omits $y_0$ and recomputes it from the frozen behavior policy. Conditioning on the exact query state would make $y_0$ deterministic under the behavior policy, so the split must be defined on the joint law rather than on $y_0$ alone. For any measurable $A\subseteq\mathcal{T}\times\mathcal{Y}$, define
\begin{equation}
\begin{aligned}
    Q_{s,y}^+(A)
    &=\frac{\mathbb{E}_{Q_b}[\omega\mathbf{1}\{(s,y_0)\in A\}]}
    {\mathbb{E}_{Q_b}[\omega]},\\
    Q_{s,y}^-(A)
    &=\frac{\mathbb{E}_{Q_b}[(1-\omega)\mathbf{1}\{(s,y_0)\in A\}]}
    {\mathbb{E}_{Q_b}[1-\omega]}.
\end{aligned}
\label{eq:app_local_split}
\end{equation}
These measures are well defined when both denominators are positive.
If one only trained on these anchor splits, the objective would still be a weighted reconstruction of old clean-output predictions. \namew{} adds the missing policy-improvement step by applying reward-gradient-derived target maps to the anchor
\begin{align*}
    T_+(b)=\bar{y}_+,
    \qquad
    T_-(b)=\bar{y}_-.
\end{align*}
Therefore the branch targets are not merely positive and negative samples under the old endpoint distribution. They are targets for clean-output predictions obtained by applying reward-ascent and reward-descent target maps at the on-policy state.

Let $\bar{\omega}=\mathbb{E}_{Q_b}[\omega]$. The reweighted measures satisfy an exact mixture identity whenever $0<\bar{\omega}<1$.
\begin{equation}
    Q_{s,y}
    =\bar{\omega}Q_{s,y}^+
    +(1-\bar{\omega})Q_{s,y}^-,
    \label{eq:app_local_mixture}
\end{equation}
where $Q_{s,y}$ is the marginal law of $(s,y_0)$ under $Q_b$. To verify Eq.~\eqref{eq:app_local_mixture}, evaluate both sides on an arbitrary measurable set $A$ from the ambient sigma-algebra.
\begin{align*}
    \bar{\omega}Q_{s,y}^+(A)
    +(1-\bar{\omega})Q_{s,y}^-(A)
    &=
    \mathbb{E}_{Q_b}\left[
    \omega\mathbf{1}_A+(1-\omega)\mathbf{1}_A
    \right]\\
    &=Q_{s,y}(A).
\end{align*}
Thus the positive and negative laws are not two unrelated datasets. They form a soft decomposition of the same on-policy anchor distribution.

Target construction subsequently pushes these measures through different maps. Writing $T_{+\#}Q_{s,y}^+$ for the pushforward of the positive measure and $T_{-\#}Q_{s,y}^-$ for the negative measure makes the distinction precise. Reweighting selects which anchors matter to each branch, while target construction changes the clean-output prediction attached to each selected anchor. These are separate operations and only the second introduces a local reward direction.

\phantomsection
\label{app:group_weights}

\noindent\textbf{Group-Normalized Fitting Weights.}
The implementation uses per-prompt centering and a global rollout-batch standard deviation. Let \(\mathcal B\) index the complete rollout batch in one outer iteration, let \(r_{\mathbf c}^1,\ldots,r_{\mathbf c}^K\) be the rewards for prompt \(\mathbf c\), and define
\begin{align*}
    \bar r_{\mathbf c}
    &=\frac{1}{K}\sum_{k=1}^{K}r_{\mathbf c}^k,
    \qquad
    \bar r_{\mathcal B}
    =\frac{1}{|\mathcal B|}
      \sum_{(\mathbf c,k)\in\mathcal B}r_{\mathbf c}^k,\\
    \widehat\sigma_{\mathcal B}
    &=\left[
      \frac{1}{|\mathcal B|}
      \sum_{(\mathbf c,k)\in\mathcal B}
      (r_{\mathbf c}^k-\bar r_{\mathcal B})^2
      \right]^{1/2},\\
    Z_{\mathcal B}
    &=c_{\mathrm{adv}}(\widehat\sigma_{\mathcal B}+\epsilon_Z),\\
    A_{\mathbf c}^k
    &=\frac{r_{\mathbf c}^k-\bar r_{\mathbf c}}{Z_{\mathcal B}},
    \qquad
    \widetilde\omega_{\mathbf c}^k
    =\frac12+\frac12 A_{\mathbf c}^k,\\
    \omega_{\mathbf c}^k
    &=\frac12+\frac12\operatorname{clip}(A_{\mathbf c}^k,-1,1).
\end{align*}
Thus \(2\omega_{\mathbf c}^k-1\) is a clipped advantage whose numerator is local to a prompt group and whose scale is shared across the complete rollout batch. Per-prompt centering removes a common reward offset inside each prompt group; the global denominator avoids assigning a different scale to every group. Before clipping,
\begin{align*}
    \sum_{k=1}^K A_{\mathbf c}^k
    =\frac{1}{Z_{\mathcal B}}
      \sum_k(r_{\mathbf c}^k-\bar r_{\mathbf c})=0,
    \qquad
    \frac{1}{K}\sum_k\widetilde\omega_{\mathbf c}^k=\frac12.
\end{align*}
The clipped weights need not have mean exactly \(1/2\), because clipping can act asymmetrically on a finite group. They remain bounded in \([0,1]\), which prevents an outlier reward from becoming an unbounded loss multiplier.

\begin{opsdprop}[Ideal affine reward invariance]
\label{prop:weight_invariance}
Set \(\epsilon_Z=0\) and assume \(\widehat\sigma_{\mathcal B}>0\). If the same positive affine transformation \(r\mapsto cr+d\), \(c>0\), is applied to every reward in the rollout batch, then the normalized advantages and clipped weights are invariant.
\end{opsdprop}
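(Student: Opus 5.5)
The plan is a direct substitution: track how each normalization statistic in the group-weight definitions transforms under $r\mapsto cr+d$, and show that the factor $c$ cancels while the offset $d$ disappears. Write $r'^k_{\mathbf c}=c\,r^k_{\mathbf c}+d$ for every $(\mathbf c,k)\in\mathcal B$. First, by linearity of averaging, the per-prompt mean becomes $\bar r'_{\mathbf c}=c\,\bar r_{\mathbf c}+d$, and the batch mean becomes $\bar r'_{\mathcal B}=c\,\bar r_{\mathcal B}+d$. Consequently the centered numerators satisfy $r'^k_{\mathbf c}-\bar r'_{\mathbf c}=c\,(r^k_{\mathbf c}-\bar r_{\mathbf c})$, and likewise $r'^k_{\mathbf c}-\bar r'_{\mathcal B}=c\,(r^k_{\mathbf c}-\bar r_{\mathcal B})$. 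In both cases the offset $d$ cancels.

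Second, the batch standard deviation transforms as follows. Substituting into the square root gives $\widehat\sigma'_{\mathcal B}=(c^2\widehat\sigma_{\mathcal B}^2)^{1/2}=|c|\,\widehat\sigma_{\mathcal B}=c\,\widehat\sigma_{\mathcal B}$, where the last equality uses $c>0$. With $\epsilon_Z=0$ this gives $Z'_{\mathcal B}=c_{\mathrm{adv}}\,c\,\widehat\sigma_{\mathcal B}=c\,Z_{\mathcal B}$. This quantity is nonzero because $\widehat\sigma_{\mathcal B}>0$, $c>0$, and $c_{\mathrm{adv}}>0$, where $c_{\mathrm{adv}}>0$ is implicit in its role as a scale. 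Hence both the original and the transformed advantages are well defined.

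Third, the advantage is unchanged:
\[
A'^k_{\mathbf c}=\frac{c\,(r^k_{\mathbf c}-\bar r_{\mathbf c})}{c\,Z_{\mathcal B}}=A^k_{\mathbf c}.
\]
The unclipped weights $\widetilde\omega$ and the clipped weights $\omega=\tfrac12+\tfrac12\operatorname{clip}(A,-1,1)$ are deterministic functions of $A$ alone, so they are invariant as well.

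No step is a real obstacle. The only points needing care are the role of the hypotheses. The sign condition $c>0$ is what makes $\sqrt{c^2}=c$; a negative $c$ would flip the sign of $A$. The condition $\epsilon_Z=0$ is needed because otherwise $Z'_{\mathcal B}=c_{\mathrm{adv}}(c\,\widehat\sigma_{\mathcal B}+\epsilon_Z)$ is not proportional to $Z_{\mathcal B}$. Finally, $\widehat\sigma_{\mathcal B}>0$ guarantees that the denominators are nonzero.
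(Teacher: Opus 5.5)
Your proposal is correct and matches the paper's proof: track the prompt and batch means, the batch standard deviation, and $Z_{\mathcal B}$ under $r\mapsto cr+d$, cancel $c$ in the advantage, and note that the clipped weight depends only on the advantage. Your remarks on why $c>0$, $\epsilon_Z=0$, and $\widehat\sigma_{\mathcal B}>0$ are each needed are accurate and slightly more explicit than the paper's version.
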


\begin{proof}
Under \(r_{\mathbf c}^{\prime k}=cr_{\mathbf c}^k+d\),
\begin{align*}
    \bar r_{\mathbf c}'=c\bar r_{\mathbf c}+d,
    \qquad
    \bar r_{\mathcal B}'=c\bar r_{\mathcal B}+d,
    \qquad
    \widehat\sigma_{\mathcal B}'=c\widehat\sigma_{\mathcal B},
    \qquad
    Z_{\mathcal B}'=cZ_{\mathcal B}.
\end{align*}
Therefore
\begin{align*}
    A_{\mathbf c}^{\prime k}
    =\frac{cr_{\mathbf c}^k+d-c\bar r_{\mathbf c}-d}
    {cZ_{\mathcal B}}
    =A_{\mathbf c}^k.
\end{align*}
Applying the same clipping map gives
\(\omega_{\mathbf c}^{\prime k}=\omega_{\mathbf c}^k\).
\end{proof}

With a fixed \(\epsilon_Z>0\), translation invariance remains exact but positive-scale invariance is approximate. For \(r'=cr+d\), direct substitution gives
\begin{equation}
\begin{aligned}
    A_{\epsilon_Z,\mathbf c}^{\prime k}
    &=\frac{c(r_{\mathbf c}^k-\bar r_{\mathbf c})}
    {c_{\mathrm{adv}}(c\widehat\sigma_{\mathcal B}+\epsilon_Z)}\\
    &=\frac{r_{\mathbf c}^k-\bar r_{\mathbf c}}
    {c_{\mathrm{adv}}(\widehat\sigma_{\mathcal B}+\epsilon_Z/c)},\\
    A_{\epsilon_Z,\mathbf c}^{\prime k}
    -A_{\epsilon_Z,\mathbf c}^k
    &=
    \frac{(r_{\mathbf c}^k-\bar r_{\mathbf c})\epsilon_Z(1-1/c)}
    {c_{\mathrm{adv}}
    (\widehat\sigma_{\mathcal B}+\epsilon_Z/c)
    (\widehat\sigma_{\mathcal B}+\epsilon_Z)}.
\end{aligned}
\label{eq:app_approx_scale_invariance}
\end{equation}
The discrepancy vanishes when \(\epsilon_Z=0\) and becomes small relative to the standardized advantage when \(\widehat\sigma_{\mathcal B}\gg\epsilon_Z\). The same clipping map can only contract this discrepancy.

The weights also have a useful interpretation inside the branch objective. Let
\begin{align*}
    a_{\mathbf c}^k=\frac{\omega_{\mathbf c}^k}{\gamma_{+,\mathbf c}^k},
    \qquad
    b_{\mathbf c}^k=\frac{1-\omega_{\mathbf c}^k}{\gamma_{-,\mathbf c}^k},
    \qquad
    f_{\mathbf c}^k
    =a_{\mathbf c}^k d_{+,\mathbf c}^k-b_{\mathbf c}^k d_{-,\mathbf c}^k,
\end{align*}
where \(d_{+,\mathbf c}^k=\bar y_{+,\mathbf c}^k-y_{0,\mathbf c}^k\) and \(d_{-,\mathbf c}^k=\bar y_{-,\mathbf c}^k-y_{0,\mathbf c}^k\). App Sec.~\ref{app:implicit_branch} shows that \(f_{\mathbf c}^k\) is proportional to the first-order force applied to the trainable clean output. When target construction is locally symmetric, let \(\bar h\) denote the nominal aggregate displacement, so \(d_{+,\mathbf c}^k=\bar h u_{\mathrm{grad},\mathbf c}^k\) and \(d_{-,\mathbf c}^k=-\bar h u_{\mathrm{grad},\mathbf c}^k\). This scaled force becomes
\begin{align*}
    f_{\mathbf c}^k=\bar h(a_{\mathbf c}^k+b_{\mathbf c}^k)u_{\mathrm{grad},\mathbf c}^k.
\end{align*}
Thus both branches agree on the same reward-improving direction. When the targets are asymmetric, \(\omega_{\mathbf c}^k\) determines how much the sample behaves as a positive example versus a negative example. High-reward samples within a prompt group place more mass on imitation of \(\bar y_+\), while low-reward samples place more mass on repulsion from \(\bar y_-\).

The bounded weights also control variance. Let \(d\) be the clean-output dimension, let \(r_{+,\mathbf c}^k=y_\theta^{+,\mathbf c,k}-\bar y_{+,\mathbf c}^k\), and let \(r_{-,\mathbf c}^k=y_\theta^{-,\mathbf c,k}-\bar y_{-,\mathbf c}^k\). If \(\|r_{+,\mathbf c}^k\|_2\leq B_+\) and \(\|r_{-,\mathbf c}^k\|_2\leq B_-\) on a minibatch, then
\begin{equation}
\begin{aligned}
    \|\nabla_{y_\theta}\mathcal L_{\mathrm{branch},\mathbf c}^k\|_2
    &\leq
    \frac{2\beta}{d}\left(
    \omega_{\mathbf c}^k
    \frac{\|r_{+,\mathbf c}^k\|_2}{\gamma_{+,\mathbf c}^k}
    +(1-\omega_{\mathbf c}^k)
    \frac{\|r_{-,\mathbf c}^k\|_2}{\gamma_{-,\mathbf c}^k}
    \right)\\
    &\leq
    \frac{2\beta}{d}\left(
    \frac{B_+}{\gamma_{+,\mathbf c}^k}
    +\frac{B_-}{\gamma_{-,\mathbf c}^k}
    \right),
\end{aligned}
\label{eq:app_group_gradient_bound_chain}
\end{equation}
independently of the raw reward magnitude.

If every reward in one prompt group is identical, then every centered numerator in that group is zero and Eq.~\eqref{eq:group_weight} gives \(\omega_{\mathbf c}^k=1/2\), even when other prompt groups make \(\widehat\sigma_{\mathcal B}>0\). That group supplies no endpoint-ranking information, although the local reward gradient can still define target directions. The group size \(K\) controls the precision of the per-prompt mean, while the number of groups and their \(K\) samples jointly determine the global scale estimate. Increasing either can stabilize the corresponding statistic without changing the bounded range of the fitting weight.

\subsection{Bounded Target Construction}
\label{app:trust_region}

Let $\widetilde{R}(y,\mathbf{c})=R(D(y),\mathbf{c})$ denote the reward of a decoded clean-output prediction. The target step in Eq.~\eqref{eq:target_steps} can be derived as a small trust-region policy-improvement step, in the same conservative spirit as classical trust-region and conservative policy updates~\citep{kakade2002approximately,schulman2015trust}; the local smoothness arguments below use standard descent-lemma tools from numerical optimization~\citep{nocedal2006numerical,bubeck2015convex}. Around an anchor $y_0$, the first-order model is
\begin{align*}
    \widetilde{R}(y_0+\delta,\mathbf{c})
    =\widetilde{R}(y_0,\mathbf{c})+
    \langle g_0,\delta\rangle+o(\|\delta\|_2),
    \qquad
    g_0=\nabla_y\widetilde{R}(y_0,\mathbf{c}).
\end{align*}
The ideal first-order ascent target solves the constrained linear problem
\begin{align*}
    \max_{\delta} \; \langle g_0,\delta\rangle
    \quad \text{s.t.} \quad \|\delta\|_2\leq r_{\mathrm{tr}}.
\end{align*}
Cauchy-Schwarz gives
\begin{align*}
    \langle g_0,\delta\rangle
    \leq \|g_0\|_2\|\delta\|_2
    \leq r_{\mathrm{tr}}\|g_0\|_2,
\end{align*}
with equality at $\delta=r_{\mathrm{tr}} g_0/\|g_0\|_2$ when $g_0\neq0$. The corresponding descent direction is obtained by changing the sign.
\begin{equation}
    \delta_+^*=r_{\mathrm{tr}}\frac{g_0}{\|g_0\|_2},
    \qquad
    \delta_-^*=-r_{\mathrm{tr}}\frac{g_0}{\|g_0\|_2}.
    \label{eq:app_trust_target}
\end{equation}
Eq.~\eqref{eq:target_steps} uses the numerically stable direction $g_0/(\|g_0\|_2+\epsilon_g)$. For a nominal radius $r_{\mathrm{tr}}$, the resulting step stays inside the intended ball.
\begin{align*}
    \left\|r_{\mathrm{tr}}\frac{g_0}{\|g_0\|_2+\epsilon_g}\right\|_2
    = r_{\mathrm{tr}}\frac{\|g_0\|_2}{\|g_0\|_2+\epsilon_g}
    \leq r_{\mathrm{tr}}.
\end{align*}

The multi-step construction is a conservative discretization of the same idea. To distinguish per-step and aggregate displacements, let $h_{\mathrm{step}}=\eta_{\mathrm{tgt}}\rho\|y_0\|_2/M_{\mathrm{tgt}}$ and let $\bar h=M_{\mathrm{tgt}}h_{\mathrm{step}}=\eta_{\mathrm{tgt}}\rho\|y_0\|_2$ denote the nominal aggregate displacement. Before projection, each normalized increment has norm at most $h_{\mathrm{step}}$. Because the per-step projection in Eq.~\eqref{eq:target_projection} cannot increase the distance from $y_0$,
\begin{align*}
    \|y_+^{(M_{\mathrm{tgt}})}-y_0\|_2
    &\leq
    \min\!\left\{
      \sum_{m=0}^{M_{\mathrm{tgt}}-1}
      \|y_+^{(m+1)}-y_+^{(m)}\|_2,\,
      \rho\|y_0\|_2
    \right\}\\
    &\leq \min\!\left\{\bar h,\,\rho\|y_0\|_2\right\}
    \leq\rho\|y_0\|_2,
\end{align*}
and the same bound holds for $y_-^{(M_{\mathrm{tgt}})}$. When $0<\eta_{\mathrm{tgt}}\leq1$, the projection is inactive in exact arithmetic and this reduces to the original triangle bound. When $\eta_{\mathrm{tgt}}>1$, the projection enforces the trust region explicitly. The relative radius $\rho$ therefore retains a direct geometric meaning for every multiplier used in the ablation.

The normalized step is also nearly invariant to reward scaling. If a reward model is replaced by $R'(y)=cR(y)+d$ with $c>0$, the gradient becomes $g'=cg$, and
\begin{align*}
    \frac{g'}{\|g'\|_2+\epsilon_g}
    =\frac{cg}{c\|g\|_2+\epsilon_g}
    =\frac{g}{\|g\|_2+\epsilon_g/c}.
\end{align*}
When $\|g\|_2\gg\epsilon_g/c$, the direction and length are essentially unchanged. This matters in practice because preference, aesthetic, and multimodal reward models can have very different numerical calibrations even when their local improvement directions are useful.

The stabilized direction realizes only part of the nominal radius. For one step,
\begin{align*}
    \|h_{\mathrm{step}} u_{\mathrm{grad}}\|_2
    =h_{\mathrm{step}}\frac{\|g\|_2}{\|g\|_2+\epsilon_g}.
\end{align*}
When the gradient is large relative to $\epsilon_g$, the step nearly reaches length $h_{\mathrm{step}}$. When the gradient is small, the step shrinks continuously toward zero. The stabilizer therefore acts as an implicit confidence gate in addition to preventing division by zero.

Together, stabilized normalization and the per-step projection in Eq.~\eqref{eq:target_projection} keep target construction well behaved. The default $\eta_{\mathrm{tgt}}=1$ can use the nominal radius when gradients are informative, while larger stress-test multipliers remain inside the same trust region.

\phantomsection
\label{app:reward_improvement}

\noindent\textbf{Local reward improvement.}
We now show the local effect of the target construction. The argument is intentionally local. It justifies a small reward-gradient target around the clean-output prediction the policy already produces, rather than claiming global optimality.

\begin{opsdassumption}[Local smoothness]
\label{assump:smooth}
For a fixed prompt, $\widetilde{R}$ is differentiable and has $L$-Lipschitz gradients in a neighborhood of the target-construction path.
\end{opsdassumption}

\begin{opsdlemma}[One-step ascent and descent]
\label{lemma:one_step_reward}
Let $g=\nabla_y\widetilde{R}(y)$ and $u_{\mathrm{grad}}=g/(\|g\|_2+\epsilon_g)$. Under Assumption~\ref{assump:smooth},
\begin{align*}
    \widetilde{R}(y+h_{\mathrm{step}} u_{\mathrm{grad}})-\widetilde{R}(y)
    &\geq
    h_{\mathrm{step}}\frac{\|g\|_2^2}{\|g\|_2+\epsilon_g}
    -\frac{Lh_{\mathrm{step}}^2}{2},\\
    \widetilde{R}(y-h_{\mathrm{step}} u_{\mathrm{grad}})-\widetilde{R}(y)
    &\leq
    -h_{\mathrm{step}}\frac{\|g\|_2^2}{\|g\|_2+\epsilon_g}
    +\frac{Lh_{\mathrm{step}}^2}{2}.
\end{align*}
\end{opsdlemma}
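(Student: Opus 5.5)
The plan is to apply the standard descent lemma to $\widetilde R$ along the segment from $y$ to $y\pm h_{\mathrm{step}}u_{\mathrm{grad}}$. We assume, as Assumption~\ref{assump:smooth} intends, that this segment lies in the neighborhood where $\nabla\widetilde R$ is $L$-Lipschitz.

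First, for any displacement $\delta$ with $[y,y+\delta]$ inside that neighborhood, write
\begin{align*}
\widetilde R(y+\delta)-\widetilde R(y)=\langle g,\delta\rangle+\int_0^1\langle\nabla\widetilde R(y+t\delta)-g,\delta\rangle\,dt .
\end{align*}
This is the fundamental theorem of calculus applied to $t\mapsto\widetilde R(y+t\delta)$. By Cauchy--Schwarz and the Lipschitz property, the integral term has absolute value at most $\int_0^1 Lt\|\delta\|_2^2\,dt=\tfrac L2\|\delta\|_2^2$. This gives the two-sided bound
\begin{align*}
\bigl|\widetilde R(y+\delta)-\widetilde R(y)-\langle g,\delta\rangle\bigr|\le\tfrac L2\|\delta\|_2^2 .
\end{align*}

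Second, specialize to $\delta=\pm h_{\mathrm{step}}u_{\mathrm{grad}}$. The linear term is
\begin{align*}
\langle g,\pm h_{\mathrm{step}}u_{\mathrm{grad}}\rangle=\pm h_{\mathrm{step}}\frac{\|g\|_2^2}{\|g\|_2+\epsilon_g}.
\end{align*}
For the quadratic term, note that $\|u_{\mathrm{grad}}\|_2=\|g\|_2/(\|g\|_2+\epsilon_g)\le1$. This is the same observation used in App Sec.~\ref{app:trust_region} for the stabilized step. Hence $\|\delta\|_2^2\le h_{\mathrm{step}}^2$. Taking the lower side of the two-sided bound for the $+$ sign and the upper side for the $-$ sign yields the two displayed inequalities. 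If $g=0$, both sides reduce trivially, since $u_{\mathrm{grad}}=0$ and the reward change is zero.

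The only delicate point is not computational. It is ensuring that the whole segment, not just the endpoints, lies in the region where the Lipschitz-gradient assumption holds, so that the integral bound applies. I would state this explicitly as the reading of ``a neighborhood of the target-construction path.''

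The bound can be tightened to $Lh_{\mathrm{step}}^2\|u_{\mathrm{grad}}\|_2^2/2$, but the stated form follows from $\|u_{\mathrm{grad}}\|_2\le1$.
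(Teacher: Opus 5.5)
Your proof is correct and follows the paper's route: the two-sided smoothness (descent-lemma) bound, specialized to $\Delta=\pm h_{\mathrm{step}}u_{\mathrm{grad}}$, with $\|u_{\mathrm{grad}}\|_2\le 1$ controlling the quadratic term. You go slightly further than the paper by deriving the two-sided bound from the fundamental theorem of calculus, which is where your explicit segment-containment requirement comes from, and by checking the case $g=0$.
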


\begin{proof}
The smoothness lower and upper bounds give
\begin{align*}
    \widetilde{R}(y+\Delta)
    &\geq \widetilde{R}(y)+\langle g,\Delta\rangle-\frac{L}{2}\|\Delta\|_2^2,\\
    \widetilde{R}(y+\Delta)
    &\leq \widetilde{R}(y)+\langle g,\Delta\rangle+\frac{L}{2}\|\Delta\|_2^2.
\end{align*}
For ascent, set $\Delta=h_{\mathrm{step}} u_{\mathrm{grad}}$. Then
\begin{align*}
    \langle g,h_{\mathrm{step}} u_{\mathrm{grad}}\rangle
    =h_{\mathrm{step}}\frac{\|g\|_2^2}{\|g\|_2+\epsilon_g},
    \qquad
    \|h_{\mathrm{step}} u_{\mathrm{grad}}\|_2^2
    =h_{\mathrm{step}}^2\frac{\|g\|_2^2}{(\|g\|_2+\epsilon_g)^2}
    \leq h_{\mathrm{step}}^2.
\end{align*}
Substitution gives the first inequality. For descent, set $\Delta=-h_{\mathrm{step}} u_{\mathrm{grad}}$ and use the smoothness upper bound.
\end{proof}

\begin{opsdprop}[Multi-step target bound]
\label{prop:multi_step_bound}
Let $y_+^{(0)}=y_0$ and
$y_+^{(m+1)}=y_+^{(m)}+h_{\mathrm{step}} g_m/(\|g_m\|_2+\epsilon_g)$, where $g_m=\nabla_y\widetilde{R}(y_+^{(m)})$. If all iterates stay in the smooth neighborhood, then
\begin{equation}
    \widetilde{R}(y_+^{(M_{\mathrm{tgt}})})-\widetilde{R}(y_0)
    \geq
    \sum_{m=0}^{M_{\mathrm{tgt}}-1}
    h_{\mathrm{step}}\frac{\|g_m\|_2^2}{\|g_m\|_2+\epsilon_g}
    -\frac{L M_{\mathrm{tgt}} h_{\mathrm{step}}^2}{2}.
    \label{eq:app_multistep_bound}
\end{equation}
An analogous upper bound holds for the negative target path.
\end{opsdprop}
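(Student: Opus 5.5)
The plan is to telescope the reward change along the target-construction path and bound each increment with Lemma~\ref{lemma:one_step_reward}. Write
\begin{align*}
    \widetilde{R}(y_+^{(M_{\mathrm{tgt}})})-\widetilde{R}(y_0)
    =\sum_{m=0}^{M_{\mathrm{tgt}}-1}
    \left[\widetilde{R}(y_+^{(m+1)})-\widetilde{R}(y_+^{(m)})\right].
\end{align*}
For each $m$, apply the ascent inequality of Lemma~\ref{lemma:one_step_reward} at the base point $y=y_+^{(m)}$, with gradient $g=g_m$ and direction $u_{\mathrm{grad}}=g_m/(\|g_m\|_2+\epsilon_g)$. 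The recursion in the statement is exactly $y_+^{(m+1)}=y_+^{(m)}+h_{\mathrm{step}}u_{\mathrm{grad}}$. Each increment is therefore at least $h_{\mathrm{step}}\|g_m\|_2^2/(\|g_m\|_2+\epsilon_g)-Lh_{\mathrm{step}}^2/2$. Summing the $M_{\mathrm{tgt}}$ per-step bounds gives Eq.~\eqref{eq:app_multistep_bound}, because the quadratic penalties add to $LM_{\mathrm{tgt}}h_{\mathrm{step}}^2/2$.

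The only point needing care is that the lemma's smoothness bounds are applied on the segment between consecutive iterates, not just at the iterates themselves. The descent-lemma inequality uses the Lipschitz gradient along $[y,y+\Delta]$. So I will read the hypothesis ``all iterates stay in the smooth neighborhood'' as covering the polygonal path joining them. That is what Assumption~\ref{assump:smooth} states, since it refers to a neighborhood of the target-construction path. If the neighborhood is convex, this follows automatically from containing the iterates.

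I will also note that the statement's recursion has no projection. In the default regime $0<\eta_{\mathrm{tgt}}\le1$, the projection in Eq.~\eqref{eq:target_projection} is inactive, so the bound applies verbatim to the implemented targets.

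For the negative path, take $y_-^{(m+1)}=y_-^{(m)}-h_{\mathrm{step}}g_m^-/(\|g_m^-\|_2+\epsilon_g)$ with $g_m^-=\nabla_y\widetilde R(y_-^{(m)})$. The same telescoping, using the descent inequality of the lemma, yields
\begin{align*}
    \widetilde{R}(y_-^{(M_{\mathrm{tgt}})})-\widetilde{R}(y_0)
    \le
    -\sum_{m=0}^{M_{\mathrm{tgt}}-1}
    h_{\mathrm{step}}\frac{\|g_m^-\|_2^2}{\|g_m^-\|_2+\epsilon_g}
    +\frac{LM_{\mathrm{tgt}}h_{\mathrm{step}}^2}{2}.
\end{align*}
There is no real obstacle beyond the neighborhood caveat above; the argument is a direct summation.
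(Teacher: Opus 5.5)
Your proposal is correct and matches the paper's proof. The paper also telescopes the reward along the iterates, applies the smoothness lower bound at each $y_+^{(m)}$ (inlined rather than cited as Lemma~\ref{lemma:one_step_reward}), and uses $\|g_m/(\|g_m\|_2+\epsilon_g)\|_2\leq 1$ to collect the curvature terms into $LM_{\mathrm{tgt}}h_{\mathrm{step}}^2/2$. Your remark that smoothness is needed on the segments between consecutive iterates, not only at the iterates, is a fair sharpening and is consistent with Assumption~\ref{assump:smooth}.
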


\begin{proof}
Let $u_{\mathrm{grad},m}=g_m/(\|g_m\|_2+\epsilon_g)$. Applying the smoothness lower bound at every iterate and then telescoping yields the accumulated bound below.
\begin{align*}
    \widetilde{R}(y_+^{(M_{\mathrm{tgt}})})-\widetilde{R}(y_0)
    &=\sum_{m=0}^{M_{\mathrm{tgt}}-1}
    \left[\widetilde{R}(y_+^{(m+1)})-\widetilde{R}(y_+^{(m)})\right]\\
    &\geq\sum_{m=0}^{M_{\mathrm{tgt}}-1}
    \left[
    h_{\mathrm{step}}\langle g_m,u_{\mathrm{grad},m}\rangle
    -\frac{Lh_{\mathrm{step}}^2}{2}\|u_{\mathrm{grad},m}\|_2^2
    \right]\\
    &=\sum_{m=0}^{M_{\mathrm{tgt}}-1}
    \left[
    h_{\mathrm{step}}\frac{\|g_m\|_2^2}{\|g_m\|_2+\epsilon_g}
    -\frac{Lh_{\mathrm{step}}^2}{2}
    \frac{\|g_m\|_2^2}{(\|g_m\|_2+\epsilon_g)^2}
    \right]\\
    &\geq
    \sum_{m=0}^{M_{\mathrm{tgt}}-1}
    h_{\mathrm{step}}\frac{\|g_m\|_2^2}{\|g_m\|_2+\epsilon_g}
    -\frac{LM_{\mathrm{tgt}}h_{\mathrm{step}}^2}{2}.
\end{align*}
The last line is Eq.~\eqref{eq:app_multistep_bound}.
\end{proof}

With the implementation choice $h_{\mathrm{step}}=\eta_{\mathrm{tgt}}\rho\|y_0\|_2/M_{\mathrm{tgt}}$, the total curvature penalty becomes
\begin{align*}
    \frac{L M_{\mathrm{tgt}} h_{\mathrm{step}}^2}{2}
    =\frac{L\eta_{\mathrm{tgt}}^2\rho^2\|y_0\|_2^2}{2M_{\mathrm{tgt}}}.
\end{align*}
For a fixed relative clean-output target radius $\rho$, using several small target-construction steps reduces the second-order penalty compared with one large step, while the normalized gradient keeps the move size independent of the reward model's scale and numerical units.

Lemma~\ref{lemma:one_step_reward} gives an explicit sufficient step-size condition. If $g\neq0$, the one-step lower bound is positive whenever
\begin{equation}
\begin{aligned}
    h_{\mathrm{step}}\frac{\|g\|_2^2}{\|g\|_2+\epsilon_g}
    -\frac{Lh_{\mathrm{step}}^2}{2}>0
    &\Longleftrightarrow
    h_{\mathrm{step}}\left[
    \frac{\|g\|_2^2}{\|g\|_2+\epsilon_g}
    -\frac{Lh_{\mathrm{step}}}{2}
    \right]>0\\
    &\Longleftrightarrow
    0<h_{\mathrm{step}}<
    \frac{2\|g\|_2^2}
    {L(\|g\|_2+\epsilon_g)}.
\end{aligned}
\label{eq:app_one_step_size_chain}
\end{equation}
This condition becomes more conservative near stationary points because the reliable first-order signal vanishes faster than the curvature term.

A uniform multi-step version follows when $\|g_m\|_2\geq g_{\min}>0$ along the positive path.
\begin{align*}
    \widetilde{R}(y_+^{(M_{\mathrm{tgt}})})-\widetilde{R}(y_0)
    \geq
    M_{\mathrm{tgt}} h_{\mathrm{step}}\frac{g_{\min}^2}{g_{\min}+\epsilon_g}
    -\frac{L M_{\mathrm{tgt}} h_{\mathrm{step}}^2}{2}.
\end{align*}
Substituting $h_{\mathrm{step}}=\eta_{\mathrm{tgt}}\rho\|y_0\|_2/M_{\mathrm{tgt}}$ gives
\begin{align*}
    \widetilde{R}(y_+^{(M_{\mathrm{tgt}})})-\widetilde{R}(y_0)
    \geq
    \eta_{\mathrm{tgt}}\rho\|y_0\|_2
    \frac{g_{\min}^2}{g_{\min}+\epsilon_g}
    -\frac{L\eta_{\mathrm{tgt}}^2\rho^2\|y_0\|_2^2}{2M_{\mathrm{tgt}}}.
\end{align*}
The first-order term depends on the total radius, while the curvature penalty decreases with the number of target steps. This explains why several short steps can be safer than one step of the same aggregate length.

\subsection{Fixed-Suffix Reward}
\label{app:surrogate_validity}

Target construction uses the reward of the decoded clean-output prediction $\widetilde{R}(y,\mathbf{c})$, whereas training progress is measured with endpoint reward. This subsection makes the approximation explicit. Under rectified flow, inserting a clean-output prediction $y$ at the query state means replacing the query velocity by
\begin{align*}
    v_y=\frac{z_q-y}{\sigma_q},
    \qquad
    z_{q-1}=\Phi_q(z_q;v_y,\mathbf{c}),
\end{align*}
where $\Phi_q$ is the first numerical solver update after the query. Let $S_{q\rightarrow0}^{\mathrm{old}}(y,z_q,\mathbf{c})$ denote the endpoint obtained by applying this update once and then continuing all remaining updates with the behavior policy. Define
\begin{align*}
    F_q(y,z_q,\mathbf{c})
    =R\left(D(S_{q\rightarrow0}^{\mathrm{old}}(y,z_q,\mathbf{c})),\mathbf{c}\right).
\end{align*}
When $z_q$ and $\mathbf{c}$ are fixed, we abbreviate this function as $F_q(y)$. The following local conditions characterize when a local reward movement also improves endpoint reward.

\begin{opsdassumption}[Fixed-suffix reward regularity]
\label{assump:suffix_regular}
In a neighborhood of $y_0$, the fixed-suffix reward $F_q$ is $L_F$-smooth. Its gradient $g_F=\nabla_yF_q(y_0,z_q,\mathbf{c})$ is positively aligned with the local-reward gradient $g=\nabla_y\widetilde{R}(y_0,\mathbf{c})$.
\begin{align*}
    \langle g_F,g\rangle\geq \kappa\|g_F\|_2\|g\|_2
    \quad\text{for some }\kappa>0.
\end{align*}
\end{opsdassumption}

\begin{opsdprop}[Local reward ascent improves fixed-suffix reward]
\label{prop:surrogate_suffix}
Under Assumption~\ref{assump:suffix_regular}, let $u_{\mathrm{grad}}=g/(\|g\|_2+\epsilon_g)$ and $y_+=y_0+h_{\mathrm{suffix}} u_{\mathrm{grad}}$. Then
\begin{equation}
    F_q(y_+)-F_q(y_0)
    \geq
    h_{\mathrm{suffix}}\kappa\frac{\|g_F\|_2\|g\|_2}{\|g\|_2+\epsilon_g}
    -\frac{L_Fh_{\mathrm{suffix}}^2}{2}.
    \label{eq:app_suffix_improvement}
\end{equation}
Thus a sufficiently small local reward ascent step also improves the fixed-suffix reward.
\end{opsdprop}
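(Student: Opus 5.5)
The plan is to reuse the descent-lemma argument behind Lemma~\ref{lemma:one_step_reward}, applied now to the fixed-suffix reward $F_q$ instead of $\widetilde R$. By Assumption~\ref{assump:suffix_regular}, $F_q$ is $L_F$-smooth in a neighborhood of $y_0$. For any displacement $\Delta$ keeping the segment $[y_0,y_0+\Delta]$ in that neighborhood, this gives the standard lower bound
\begin{align*}
    F_q(y_0+\Delta)\geq F_q(y_0)+\langle g_F,\Delta\rangle-\frac{L_F}{2}\|\Delta\|_2^2 .
\end{align*}
It follows by integrating $\nabla F_q$ along the segment and using the Lipschitz gradient. We will take $h_{\mathrm{suffix}}$ small enough, implicitly, that $y_+$ stays in the neighborhood; that is the same convention as the other local results.

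Next we set $\Delta=h_{\mathrm{suffix}}u_{\mathrm{grad}}$ and handle the two terms separately.

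\emph{Linear term.} We have
\begin{align*}
    \langle g_F,\Delta\rangle=\frac{h_{\mathrm{suffix}}\langle g_F,g\rangle}{\|g\|_2+\epsilon_g}.
\end{align*}
The alignment condition $\langle g_F,g\rangle\geq\kappa\|g_F\|_2\|g\|_2$ then bounds this below by
\begin{align*}
    h_{\mathrm{suffix}}\kappa\frac{\|g_F\|_2\|g\|_2}{\|g\|_2+\epsilon_g}.
\end{align*}
This step uses $h_{\mathrm{suffix}}\geq 0$ and a positive denominator, which holds because $\epsilon_g>0$ or $g\neq0$.

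\emph{Quadratic term.} As in the proof of Lemma~\ref{lemma:one_step_reward},
\begin{align*}
    \|u_{\mathrm{grad}}\|_2=\frac{\|g\|_2}{\|g\|_2+\epsilon_g}\leq 1,
\end{align*}
so $\frac{L_F}{2}\|\Delta\|_2^2\leq \frac{L_F h_{\mathrm{suffix}}^2}{2}$. Substituting both bounds gives Eq.~\eqref{eq:app_suffix_improvement}.

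For the qualitative conclusion, suppose $g\neq0$ and $g_F\neq0$. Then the right-hand side is $h(a-\tfrac{L_F}{2}h)$ with
\begin{align*}
    a=\kappa\frac{\|g_F\|_2\|g\|_2}{\|g\|_2+\epsilon_g}>0,
\end{align*}
which is positive for $0<h_{\mathrm{suffix}}<2a/L_F$. This mirrors Eq.~\eqref{eq:app_one_step_size_chain}. We will state this range explicitly and intersect it with the neighborhood radius.

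No step is a real obstacle. The one point needing care is that the smoothness neighborhood must contain the whole segment, not only $y_0$. If the neighborhood is not convex, we simply shrink it to a ball around $y_0$ and require $h_{\mathrm{suffix}}$ to be at most that ball's radius, which is possible since $\|u_{\mathrm{grad}}\|_2\leq 1$.
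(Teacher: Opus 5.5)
Your proof is correct and follows essentially the same route as the paper: the $L_F$-smoothness lower bound for $F_q$ at $y_0$, the alignment condition to bound $\langle g_F,u_{\mathrm{grad}}\rangle$ from below, and $\|u_{\mathrm{grad}}\|_2\leq 1$ to bound the curvature term. Your extra care goes beyond what the paper states explicitly: you require the segment to stay in the smoothness neighborhood, and you give the explicit range $0<h_{\mathrm{suffix}}<2a/L_F$, which needs $g\neq0$ and $g_F\neq0$. Both make precise what the paper leaves implicit in its closing sentence.
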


\begin{proof}
Using smoothness of $F_q$,
\begin{align*}
    F_q(y_0+h_{\mathrm{suffix}} u_{\mathrm{grad}})
    \geq F_q(y_0)+h_{\mathrm{suffix}}\langle g_F,u_{\mathrm{grad}}\rangle-\frac{L_Fh_{\mathrm{suffix}}^2}{2}\|u_{\mathrm{grad}}\|_2^2.
\end{align*}
The alignment condition gives
\begin{align*}
    \langle g_F,u_{\mathrm{grad}}\rangle
    =\frac{\langle g_F,g\rangle}{\|g\|_2+\epsilon_g}
    \geq
    \kappa\frac{\|g_F\|_2\|g\|_2}{\|g\|_2+\epsilon_g},
\end{align*}
and $\|u_{\mathrm{grad}}\|_2\leq1$. Substituting these two facts gives Eq.~\eqref{eq:app_suffix_improvement}.
\end{proof}

Low-noise querying makes the alignment condition more plausible. The remaining path from the query state to the endpoint is short, and the decoded clean-output prediction is close to the image that would be obtained by continuing the sampler. One can formalize this through a simple Lipschitz argument. Let $L_R$ be a local Lipschitz constant of the reward with respect to its decoded image input. If $D$ and $R$ are locally Lipschitz and the remaining sampler map satisfies
\begin{align*}
    \|D(S_{q\rightarrow0}^{\mathrm{old}}(y,z_q,\mathbf{c}))-D(y)\|_2
    \leq C_0\sigma_q+C_1\|y-y_0\|_2,
\end{align*}
then
\begin{equation}
\begin{aligned}
    |F_q(y,z_q,\mathbf{c})-\widetilde{R}(y,\mathbf{c})|
    &=\left|
    R\left(D(S_{q\rightarrow0}^{\mathrm{old}}(y,z_q,\mathbf{c})),\mathbf{c}\right)
    -R(D(y),\mathbf{c})
    \right|\\
    &\leq L_R
    \left\|
    D(S_{q\rightarrow0}^{\mathrm{old}}(y,z_q,\mathbf{c}))
    -D(y)
    \right\|_2\\
    &\leq L_R(C_0\sigma_q+C_1\|y-y_0\|_2).
\end{aligned}
\label{eq:app_surrogate_composition_chain}
\end{equation}
The local and fixed-suffix rewards become closer when $\sigma_q$ and the relative clean-output target radius are small. This is why \name{} uses local target moves at a low-noise on-policy state rather than global reward-gradient optimization from arbitrary noisy states.

A gradient-discrepancy condition provides a more direct version of the alignment assumption. Suppose
\begin{align*}
    \|g_F-g\|_2\leq\delta_g.
\end{align*}
Then
\begin{align*}
    \langle g_F,g\rangle
    &=\|g\|_2^2+\langle g_F-g,g\rangle\\
    &\geq \|g\|_2(\|g\|_2-\delta_g).
\end{align*}
Hence the two gradients have positive inner product whenever $\delta_g<\|g\|_2$. Combining this inequality with the smoothness argument yields
\begin{align*}
    F_q(y_0+h_{\mathrm{suffix}} u_{\mathrm{grad}})-F_q(y_0)
    \geq
    h_{\mathrm{suffix}}\frac{\|g\|_2(\|g\|_2-\delta_g)}
    {\|g\|_2+\epsilon_g}
    -\frac{L_Fh_{\mathrm{suffix}}^2}{2}.
\end{align*}
This form separates local reward approximation error from fixed-suffix curvature. It also suggests a diagnostic. One can estimate the cosine similarity between occasional fixed-suffix-reward gradients and local-reward gradients on a small validation subset without making endpoint differentiation part of the training loop.

\subsection{Fitting-Branch Geometry and Ideal Output Improvement}
\label{app:implicit_branch}

We next analyze the two-branch fitting objective by treating the adaptive normalizers as stop-gradient constants and using the following detached weights and displacements. The derivation suppresses the common factor \(1/d\) from the implemented elementwise mean because it does not change the minimizer or the branch geometry.
\begin{align*}
    a=\frac{\omega}{\gamma_+},
    \qquad
    b=\frac{1-\omega}{\gamma_-},
    \qquad
    \delta=y_\theta-y_0,
    \qquad
    d_+=\bar{y}_+-y_0,
    \qquad
    d_-=\bar{y}_- - y_0.
\end{align*}
Using Eq.~\eqref{eq:branches}, the branch residuals are
\begin{align*}
    y_\theta^+-\bar{y}_+
    &=\beta y_\theta+(1-\beta)y_0-(y_0+d_+)
      =\beta\delta-d_+,\\
    y_\theta^- -\bar{y}_-
    &=(1+\beta)y_0-\beta y_\theta-(y_0+d_-)
      =-\beta\delta-d_-.
\end{align*}
Squaring removes the sign in the second residual, so the local branch loss is
\begin{align*}
    \mathcal{L}_{\mathrm{branch}}(\delta)
    =a\|\beta\delta-d_+\|_2^2
    +b\|\beta\delta+d_-\|_2^2.
\end{align*}
Let
\begin{align*}
    F=a d_+-b d_-,
    \qquad
    C=a\|d_+\|_2^2+b\|d_-\|_2^2.
\end{align*}
Substituting the branch residuals, expanding both squares, collecting the terms that depend on $\delta$, and completing the square gives the following derivation chain
\begin{equation}
\begin{aligned}
    \mathcal{L}_{\mathrm{branch}}(\delta)
    &=a(\beta^2\|\delta\|_2^2-2\beta\langle\delta,d_+\rangle+\|d_+\|_2^2)\\
    &\quad+b(\beta^2\|\delta\|_2^2+2\beta\langle\delta,d_-\rangle+\|d_-\|_2^2)\\
    &=\beta^2(a+b)\|\delta\|_2^2
      -2\beta\langle\delta,F\rangle+C\\
    &=\beta^2(a+b)
      \left[
      \|\delta\|_2^2
      -2\left\langle
      \delta,\frac{F}{\beta(a+b)}
      \right\rangle
      \right]+C\\
    &=\beta^2(a+b)
      \left\|
      \delta-\frac{F}{\beta(a+b)}
      \right\|_2^2
      +C-\frac{\|F\|_2^2}{a+b}.
\end{aligned}
\label{eq:app_branch_square_chain}
\end{equation}
The gradient and Hessian are therefore
\begin{align*}
    \nabla_\delta\mathcal{L}_{\mathrm{branch}}
    =2\beta^2(a+b)\delta-2\beta(a d_+-b d_-),
    \qquad
    \nabla_\delta^2\mathcal{L}_{\mathrm{branch}}
    =2\beta^2(a+b)I.
\end{align*}
For $a+b>0$ and $\beta>0$, the unique minimizer is
\begin{equation}
    \delta^*
    =\frac{a d_+-b d_-}{\beta(a+b)}.
    \label{eq:app_branch_optimum}
\end{equation}
This derivation is the local analogue of DiffusionNFT's implicit positive and negative branch analysis, but with reward-gradient-derived clean-output targets replacing forward-process endpoint targets.

Several cases are informative. If the target construction is locally symmetric, $d_+=\bar h u_{\mathrm{grad}}$ and $d_-=-\bar h u_{\mathrm{grad}}$, then
\begin{align*}
    \delta^*=\frac{a\bar h u_{\mathrm{grad}}+b\bar h u_{\mathrm{grad}}}{\beta(a+b)}=\frac{\bar h}{\beta}u_{\mathrm{grad}}.
\end{align*}
The result is independent of the positive and negative mixture weights because both fitting branches agree on the same improvement direction. If only the positive fitting branch is active, $b=0$, the same optimum is obtained. If the two targets are not exactly symmetric, the optimum becomes a weighted displacement that interpolates between attraction to $\bar{y}_+$ and repulsion from $\bar{y}_-$.

At the anchor, $\delta=0$, the gradient is
\begin{align*}
    \nabla_\delta\mathcal{L}_{\mathrm{branch}}(0)
    =-2\beta(a d_+-b d_-).
\end{align*}
A gradient-descent step therefore moves $y_\theta$ in the direction $a d_+-b d_-$. When $d_+\approx \bar h u_{\mathrm{grad}}$ and $d_-\approx-\bar h u_{\mathrm{grad}}$, this update direction is approximately $\bar h(a+b)u_{\mathrm{grad}}$ and is therefore aligned with the reward gradient.

The completed-square form in Eq.~\eqref{eq:app_branch_square_chain} shows that the objective is isotropic in clean-output space. Its Hessian has the single eigenvalue $2\beta^2(a+b)$, repeated in every clean-output dimension. Thus $\beta$ and the adaptive branch weights control curvature magnitude but do not introduce an anisotropic condition number at the clean-output level. Any anisotropy in parameter optimization comes from the network Jacobian rather than from this quadratic branch construction.

The minimum value is
\begin{align*}
    \mathcal{L}_{\mathrm{branch}}(\delta^*)
    =C-\frac{\|F\|_2^2}{a+b}.
\end{align*}
This expression measures how much of the two target requests can be satisfied by a single clean-output displacement. If the weighted target forces cancel, $F=0$ and the optimum remains at the anchor. If they agree, $\|F\|_2$ is large and the single policy can reduce both branch errors simultaneously.

\phantomsection
\label{app:target_asymmetry}

\noindent\textbf{Target asymmetry.}
The symmetric calculation above is exact for one target step when both branches use the same gradient at $y_0$. Multiple target steps evaluate the reward gradient at different points, so curvature and numerical error can make the final displacements asymmetric. Let $\bar h$ denote the nominal aggregate displacement, which equals $h_{\mathrm{step}}$ for one step and $M_{\mathrm{tgt}}h_{\mathrm{step}}$ when the normalized direction stays constant. Write
\begin{align*}
    d_+=\bar h u_{\mathrm{grad}}+e_+,
    \qquad
    d_-=-\bar h u_{\mathrm{grad}}+e_-,
    \qquad
    u_{\mathrm{grad}}=\frac{g}{\|g\|_2+\epsilon_g}.
\end{align*}
Substitution into Eq.~\eqref{eq:app_branch_optimum} gives the exact decomposition
\begin{equation}
    \delta^*
    =\frac{\bar h}{\beta}u_{\mathrm{grad}}+e_{\mathrm{br}},
    \qquad
    e_{\mathrm{br}}
    =\frac{a e_+-b e_-}{\beta(a+b)}.
    \label{eq:app_asymmetric_optimum}
\end{equation}
The departure from the ideal reward direction obeys
\begin{equation}
\begin{aligned}
    \left\|\delta^*-\frac{\bar h}{\beta}u_{\mathrm{grad}}\right\|_2
    &=\|e_{\mathrm{br}}\|_2\\
    &=\frac{1}{\beta(a+b)}
    \|a e_+-b e_-\|_2\\
    &\leq
    \frac{a\|e_+\|_2+b\|e_-\|_2}{\beta(a+b)}
    =E_{\mathrm{br}}.
\end{aligned}
    \label{eq:app_asymmetry_bound}
\end{equation}
This bound exposes the roles of the two branch weights. An inaccurate positive target is attenuated when $a$ is small, and an inaccurate negative target is attenuated when $b$ is small. The factor $1/\beta$ amplifies both the intended displacement and its approximation error.

Combining Eq.~\eqref{eq:app_asymmetric_optimum} with local smoothness yields a robust reward bound. Cauchy-Schwarz and $\|u_{\mathrm{grad}}\|_2\leq1$ give
\begin{equation}
\begin{aligned}
    \widetilde{R}(y_0+\delta^*)-\widetilde{R}(y_0)
    &\geq
    \langle g,\delta^*\rangle
    -\frac{L}{2}\|\delta^*\|_2^2\\
    &=
    \frac{\bar h}{\beta}\langle g,u_{\mathrm{grad}}\rangle
    +\langle g,e_{\mathrm{br}}\rangle
    -\frac{L}{2}
    \left\|\frac{\bar h}{\beta}u_{\mathrm{grad}}+e_{\mathrm{br}}\right\|_2^2\\
    &=
    \frac{\bar h}{\beta}
    \frac{\|g\|_2^2}{\|g\|_2+\epsilon_g}
    +\langle g,e_{\mathrm{br}}\rangle
    -\frac{L}{2}
    \left\|\frac{\bar h}{\beta}u_{\mathrm{grad}}+e_{\mathrm{br}}\right\|_2^2\\
    &\geq
    \frac{\bar h}{\beta}
    \frac{\|g\|_2^2}{\|g\|_2+\epsilon_g}
    -\|g\|_2 E_{\mathrm{br}}\\
    &\quad-
    \frac{L}{2}
    \left(\frac{\bar h}{\beta}+E_{\mathrm{br}}\right)^2.
\end{aligned}
\label{eq:app_robust_improvement}
\end{equation}
The ideal first-order gain survives whenever it exceeds the linear error and curvature terms. This provides a concrete reason to use a small relative clean-output target radius, a moderate branch coefficient, and detached target construction.

A simple uniform error corollary follows from Eq.~\eqref{eq:app_asymmetry_bound}. If
\begin{align*}
    \|e_+\|_2\leq\varepsilon_T,
    \qquad
    \|e_-\|_2\leq\varepsilon_T,
\end{align*}
then
\begin{align*}
    E_{\mathrm{br}}
    \leq\frac{\varepsilon_T}{\beta}.
\end{align*}
The branch coefficient amplifies target approximation error by the same factor that amplifies the intended displacement. Reducing $\beta$ is therefore not free even when the ideal optimum appears more aggressive.

The error can be further separated into components parallel and orthogonal to the normalized reward direction. Let $\widehat{u}_{\mathrm{grad}}=u_{\mathrm{grad}}/\|u_{\mathrm{grad}}\|_2$ when $u_{\mathrm{grad}}\neq0$ and write
\begin{align*}
    e_{\mathrm{br}}
    =e_{\parallel}\widehat{u}_{\mathrm{grad}}+e_{\perp},
    \qquad
    \langle e_{\perp},\widehat{u}_{\mathrm{grad}}\rangle=0.
\end{align*}
The parallel component directly increases or decreases the first-order reward gain. The orthogonal component contributes only through reward anisotropy and the curvature penalty at first order around $y_0$. This distinction suggests measuring both target displacement error and cosine alignment in ablations rather than reporting only Euclidean target error.

\phantomsection
\label{app:policy_improvement}

\noindent\textbf{Ideal output improvement.}
Combining the target bound with the branch optimum yields a local statement about the \emph{ideal output-space minimizer} of the detached branch loss. It does not describe the output reached by finite fitting. The general branch optimum from Eq.~\eqref{eq:app_branch_optimum} is
\begin{align*}
    y_\theta^*=y_0+\delta^*,
    \qquad
    \delta^*=\frac{a d_+-b d_-}{\beta(a+b)},
\end{align*}
where $d_+=\bar{y}_+-y_0$ and $d_-=\bar{y}_- - y_0$. Under Assumption~\ref{assump:smooth}, the smoothness lower bound gives the generic sufficient condition
\begin{equation}
    \widetilde{R}(y_\theta^*)-\widetilde{R}(y_0)
    \geq
    \langle g,\delta^*\rangle
    -\frac{L}{2}\|\delta^*\|_2^2,
    \qquad
    g=\nabla_y\widetilde{R}(y_0,\mathbf{c}).
\label{eq:app_general_policy_improvement}
\end{equation}
Let $F=a d_+-b d_-$. Substituting $\delta^*=F/[\beta(a+b)]$ into the right-hand side gives
\begin{equation}
\begin{aligned}
    \langle g,\delta^*\rangle
    -\frac{L}{2}\|\delta^*\|_2^2
    &=\frac{\langle g,F\rangle}{\beta(a+b)}
    -\frac{L\|F\|_2^2}{2\beta^2(a+b)^2}\\
    &=\frac{1}{\beta(a+b)}
    \left[
    \langle g,F\rangle
    -\frac{L}{2\beta(a+b)}\|F\|_2^2
    \right].
\end{aligned}
\label{eq:app_general_improvement_chain}
\end{equation}
Thus the supervised branch optimum is reward-improving whenever its alignment with the local reward gradient dominates the curvature penalty.
\begin{equation}
    \langle g,a d_+-b d_-\rangle
    > \frac{L}{2\beta(a+b)}\|a d_+-b d_-\|_2^2.
\label{eq:app_general_improvement_condition}
\end{equation}
This condition is local and checkable at the level of the constructed target field. It does not require the supervised model to solve a global reward maximization problem.

In the symmetric small-step case, $d_+=\bar h u_{\mathrm{grad}}$ and $d_-=-\bar h u_{\mathrm{grad}}$ with $u_{\mathrm{grad}}=g/(\|g\|_2+\epsilon_g)$. Then
\begin{align*}
    \delta^*=\frac{\bar h}{\beta}\frac{g}{\|g\|_2+\epsilon_g},
\end{align*}
and Eq.~\eqref{eq:app_general_policy_improvement} reduces to
\begin{equation}
\boxed{
\begin{aligned}
    \widetilde{R}(y_\theta^*)-\widetilde{R}(y_0)
    &\geq
    \frac{\bar h}{\beta}\frac{\|g\|_2^2}{\|g\|_2+\epsilon_g}
    -\frac{L\bar h^2}{2\beta^2}.
\end{aligned}
}
\label{eq:app_policy_improvement}
\end{equation}
For sufficiently small $\bar h/\beta$, the first-order term is positive and dominates the second-order term. This characterizes the reward of the ideal fitted output. Actual finite fitting can under-realize, rotate, or overshoot this displacement; App Sec.~\ref{app:target_update_gap} treats that gap explicitly.

The expression also clarifies the role of $\beta$. A smaller $\beta$ amplifies the displacement implied by the branch optimum, because the fitting branches move only a $\beta$ fraction of the trainable clean output. A larger $\beta$ makes the supervised optimum more conservative. At initialization, the branch gradient scale also grows with $\beta$. Under a fixed fitting budget, a large $\beta$ can therefore produce larger observed changes before approaching its more conservative optimum. In implementation, $\beta$ and $\rho$ should therefore be viewed together. The ratio $\bar h/\beta$ sets the effective clean-output step, whereas the trust-region construction sets $\bar h$ itself.

The symmetric policy-improvement bound also yields an explicit condition on the effective clean-output step. The right-hand side of Eq.~\eqref{eq:app_policy_improvement} is positive whenever
\begin{align*}
    0<\frac{\bar h}{\beta}
    <
    \frac{2\|g\|_2^2}
    {L(\|g\|_2+\epsilon_g)}.
\end{align*}
This makes the interaction between $\rho$ and $\beta$ quantitative. Increasing the relative clean-output target radius or decreasing the branch coefficient enlarges the same effective displacement and should therefore be accompanied by a corresponding smoothness margin for stable fitting.

The pointwise statement extends to the rollout distribution by integration. If the lower bound in Eq.~\eqref{eq:app_general_policy_improvement} is integrable, then
\begin{align*}
    \mathbb{E}_{Q_{\mathrm{roll},\sigma_q}^{\mathrm{old}}}
    \left[
    \widetilde{R}(y_\theta^*)-\widetilde{R}(y_0)
    \right]
    \geq
    \mathbb{E}_{Q_{\mathrm{roll},\sigma_q}^{\mathrm{old}}}
    \left[
    \langle g,\delta^*\rangle
    -\frac{L}{2}\|\delta^*\|_2^2
    \right].
\end{align*}
A positive expected lower bound is weaker than pointwise improvement but better matches minibatch training, where updating the model on a minibatch can trade small losses on some samples for larger gains on others.

\subsection{Velocity-Space Formulation}
\label{app:gradient_direction}

The loss is written in clean-output space, but the network predicts velocity. This subsection makes the chain rule explicit. Under the general affine clean-output map in Eq.~\eqref{eq:app_general_action},
\begin{align*}
    y_t(z_t,v_t)=\frac{\dot{\sigma}_t z_t-\sigma_t v_t}{\Delta_t},
    \qquad
    \frac{\partial y_t}{\partial v_t}=-\frac{\sigma_t}{\Delta_t}I.
\end{align*}
For any clean-output loss $\mathcal{L}(y_t)$ evaluated at a fixed rollout query state $z_t$, the velocity gradient is therefore
\begin{equation}
    \nabla_{v_t}\mathcal{L}
    =-\frac{\sigma_t}{\Delta_t}\nabla_{y_t}\mathcal{L}.
    \label{eq:app_general_velocity_gradient}
\end{equation}
For rectified flow, $\Delta_t=1$ and the factor becomes $-\sigma_q$. At the anchor and in the symmetric target case,
\begin{align*}
    -\nabla_{y_\theta}\mathcal{L}_{\mathrm{branch}}(y_0)
    \propto g,
    \qquad
    -\nabla_{v_\theta}\mathcal{L}_{\mathrm{branch}}(y_0)
    \propto -\sigma_q g.
\end{align*}
The sign difference is exactly the sign in the clean-output map. Decreasing the velocity in a direction increases the clean-output prediction in that direction.

Let $\Delta_q=\Delta_{t_q}$ denote the affine determinant at the query state. Applying the chain rule to network parameters yields the following relation.
\begin{align*}
    \nabla_\theta\mathcal{L}_{\mathrm{branch}}
    =\left(\frac{\partial v_\theta}{\partial\theta}\right)^\top
    \nabla_{v_\theta}\mathcal{L}_{\mathrm{branch}}
    =-\frac{\sigma_q}{\Delta_q}
    \left(\frac{\partial v_\theta}{\partial\theta}\right)^\top
    \nabla_{y_\theta}\mathcal{L}_{\mathrm{branch}}.
\end{align*}
Thus the reward-gradient clean-output update is implemented through the ordinary velocity-model training interface; no special policy-gradient estimator is required.

Eq.~\eqref{eq:app_general_velocity_gradient} also explains the low-noise trade-off. If $\sigma_q$ is very large, a small velocity change produces a large clean-output change, and the decoded clean output may be too noisy for the reward model to provide reliable gradients. If $\sigma_q$ is too close to zero, the velocity gradient is multiplied by a very small factor, making the clean-output target hard to realize through velocity regression. The useful regime is therefore low but nonzero noise. This is consistent with modern diffusion sampler design, where the numerical schedule strongly affects the semantics and stability of intermediate states~\citep{karras2022elucidating}.

A simple norm bound makes the same point. If the parameter Jacobian satisfies $\|\partial v_\theta/\partial\theta\|_{\mathrm{op}}\leq J_v$, then
\begin{align*}
    \|\nabla_\theta\mathcal{L}_{\mathrm{branch}}\|_2
    \leq
    \left|\frac{\sigma_q}{\Delta_q}\right|J_v
    \|\nabla_{y_\theta}\mathcal{L}_{\mathrm{branch}}\|_2.
\end{align*}
The query noise level controls both stability of the decoded clean-output prediction and the magnitude of the trainable velocity update.

The chain rule also determines the effective clean-output learning rate induced by a velocity update. Let one gradient step in velocity space be
\begin{align*}
    \Delta v_t=-\eta_v\nabla_{v_t}\mathcal{L}.
\end{align*}
At a fixed state, the resulting first-order clean-output change is
\begin{align*}
    \Delta y_t
    &=-\frac{\sigma_t}{\Delta_t}\Delta v_t\\
    &=-\eta_v
    \left(\frac{\sigma_t}{\Delta_t}\right)^2
    \nabla_{y_t}\mathcal{L}.
\end{align*}
Thus the same optimizer learning rate produces different clean-output step sizes at different schedule coordinates. Near zero noise, the effective clean-output update is quadratically attenuated. This is another reason to use a low but nonzero query state rather than the final endpoint.

The exact norm identity
\begin{align*}
    \|\nabla_{v_t}\mathcal{L}\|_2
    =
    \left|\frac{\sigma_t}{\Delta_t}\right|
    \|\nabla_{y_t}\mathcal{L}\|_2
\end{align*}
also clarifies preconditioning. A variable-noise extension would need to account for this schedule factor if it sought equal clean-output influence across query times. The fixed-noise design avoids introducing such a time-dependent correction into the main method.

\phantomsection
\label{app:velocity_mse_equivalence}

\noindent\textbf{Equivalent velocity-space objective.}
The clean-output regression can be written exactly as a velocity-space regression at the fixed query state. This connects the proposed update to the simple velocity MSE used for low-noise on-policy prediction matching in DanceOPD~\citep{zhou2026danceopd}, while preserving the different source of supervision in \name{}. Define target velocities
\begin{align*}
    \bar{v}_+
    =\frac{z_q-\bar{y}_+}{\sigma_q},
    \qquad
    \bar{v}_-
    =\frac{z_q-\bar{y}_-}{\sigma_q},
\end{align*}
which are well defined because the query noise level is nonzero. The two clean-output branches in Eq.~\eqref{eq:branches} map to the following velocity fields.
\begin{align*}
    v_\theta^+
    &=\beta v_\theta+(1-\beta)v_{\mathrm{old}},\\
    v_\theta^-
    &=(1+\beta)v_{\mathrm{old}}-\beta v_\theta,
\end{align*}
because $y=z_q-\sigma_q v$ is affine in the velocity. Therefore
\begin{equation}
\begin{aligned}
    y_\theta^+-\bar{y}_+
    &=-\sigma_q(v_\theta^+-\bar{v}_+)\\
    &=-\sigma_q\left[
    \beta(v_\theta-v_{\mathrm{old}})
    +(v_{\mathrm{old}}-\bar{v}_+)
    \right]\\
    &=-\sigma_q\left[
    \beta(v_\theta-v_{\mathrm{old}})
    +\frac{d_+}{\sigma_q}
    \right]\\
    &=\beta(y_\theta-y_0)-d_+,\\[2pt]
    y_\theta^- -\bar{y}_-
    &=-\sigma_q(v_\theta^- -\bar{v}_-)\\
    &=-\sigma_q\left[
    -\beta(v_\theta-v_{\mathrm{old}})
    +(v_{\mathrm{old}}-\bar{v}_-)
    \right]\\
    &=-\sigma_q\left[
    -\beta(v_\theta-v_{\mathrm{old}})
    +\frac{d_-}{\sigma_q}
    \right]\\
    &=-\beta(y_\theta-y_0)-d_-.
\end{aligned}
\label{eq:app_action_velocity_residuals}
\end{equation}
The intermediate equalities use
\begin{align*}
    v_{\mathrm{old}}-\bar{v}_+
    =\frac{d_+}{\sigma_q},
    \qquad
    v_{\mathrm{old}}-\bar{v}_-
    =\frac{d_-}{\sigma_q}.
\end{align*}
For a direct velocity-space derivation, let $\Delta v=v_\theta-v_{\mathrm{old}}$, $a=\omega/\gamma_+$, and $b=(1-\omega)/\gamma_-$. Reuse $F=a d_+-b d_-$ and $C=a\|d_+\|_2^2+b\|d_-\|_2^2$. Substituting the two residual chains and completing the square gives
\begin{equation}
\begin{aligned}
    \mathcal{L}_{\mathrm{branch}}(\Delta v)
    &=a\sigma_q^2
    \left\|\beta\Delta v+\frac{d_+}{\sigma_q}\right\|_2^2
    +b\sigma_q^2
    \left\|-\beta\Delta v+\frac{d_-}{\sigma_q}\right\|_2^2\\
    &=a\|\beta\sigma_q\Delta v+d_+\|_2^2
    +b\|-\beta\sigma_q\Delta v+d_-\|_2^2\\
    &=\beta^2\sigma_q^2(a+b)\|\Delta v\|_2^2
    +2\beta\sigma_q\langle\Delta v,F\rangle+C\\
    &=\beta^2\sigma_q^2(a+b)
    \left\|
    \Delta v+\frac{F}{\beta\sigma_q(a+b)}
    \right\|_2^2
    +C-\frac{\|F\|_2^2}{a+b}.
\end{aligned}
\label{eq:app_velocity_branch_loss}
\end{equation}
The center of the final square gives
\begin{equation}
    \Delta v^*
    =-\frac{F}{\beta\sigma_q(a+b)}
    =-\frac{\delta^*}{\sigma_q},
    \qquad
    v_\theta^*=v_{\mathrm{old}}-\frac{\delta^*}{\sigma_q}.
    \label{eq:app_velocity_optimum}
\end{equation}
Hence the branch objective is a pair of velocity MSE terms with detached branch weights $a\sigma_q^2$ and $b\sigma_q^2$. When a run uses a fixed $\sigma_q$, the common $\sigma_q^2$ factor changes the overall gradient scale but not the minimizer. In the symmetric case, $d_+=\bar h u_{\mathrm{grad}}$ and $d_-=-\bar h u_{\mathrm{grad}}$, so Eq.~\eqref{eq:app_velocity_optimum} reduces to
\begin{align*}
    v_\theta^*=v_{\mathrm{old}}-\frac{\bar h}{\beta\sigma_q}u_{\mathrm{grad}}.
\end{align*}
The distinction from teacher-field distillation lies in the targets. DanceOPD queries a frozen capability field, whereas \namew{} converts reward-ascent and reward-descent clean-output targets into the induced velocities $\bar{v}_+$ and $\bar{v}_-$.

The induced velocity targets make the scaling cancellation explicit. A fixed clean-output displacement corresponds to a larger velocity displacement when $\sigma_q$ is smaller. However, the first line of Eq.~\eqref{eq:app_velocity_branch_loss} multiplies the squared velocity residual by $\sigma_q^2$, and the second line shows that these factors cancel exactly. The velocity target may therefore look numerically large at low noise without changing the underlying clean-output geometry seen by the optimizer.

The complete velocity-space form of the implemented branch objective is therefore
\begin{align*}
    \mathcal{L}_{\mathrm{OPSD}}
    =
    \omega\frac{\sigma_q^2}{\gamma_+}
    \operatorname{mean}[(v_\theta^+-\bar{v}_+)^2]
    +(1-\omega)\frac{\sigma_q^2}{\gamma_-}
    \operatorname{mean}[(v_\theta^- -\bar{v}_-)^2].
\end{align*}
Both terms encode detached reward-gradient-derived supervision through $\bar v_\pm$.

\subsection{Adaptive Scaling}
\label{app:adaptive_scaling}

The adaptive normalizers $\gamma_+$ and $\gamma_-$ in Eq.~\eqref{eq:opsd_loss} are stop-gradient quantities. They make the branch gradients less sensitive to the absolute magnitude of target residuals. For a residual $r=y_\theta^+-\bar{y}_+\in\mathbb{R}^d$, the normalized squared loss is
\begin{align*}
    \ell(r)=\frac{d^{-1}\|r\|_2^2}
    {\max\{\operatorname{sg}(m(r)),\epsilon_\gamma\}},
    \qquad
    m(r)=\frac{1}{d}\sum_{j=1}^d |r_j|.
\end{align*}
Because $m(r)$ is stopped, its gradient is
\begin{equation}
    \nabla_r\ell(r)=
    \frac{2r}{d\max\{\operatorname{sg}(m(r)),\epsilon_\gamma\}}.
    \label{eq:app_adaptive_grad}
\end{equation}
If the residual is rescaled to $cr$ with $c>0$, absolute homogeneity of $m$ gives the complete cancellation chain
\begin{equation}
\begin{aligned}
    m(cr)
    &=\frac{1}{d}\sum_{j=1}^d|cr_j|
    =c\,m(r),\\
    \nabla_{cr}\ell(cr)
    &=\frac{2cr}
    {d\max\{c\operatorname{sg}(m(r)),\epsilon_\gamma\}}\\
    &=\frac{2cr}{dc\,m(r)}
    =\frac{2r}{d\,m(r)}
    \qquad
    \text{when }c\,m(r)>\epsilon_\gamma.
\end{aligned}
\label{eq:app_adaptive_scale_chain}
\end{equation}
The final expression is independent of $c$. Therefore the branch update depends primarily on residual direction and relative structure, not on arbitrary target scale. This is important because reward-gradient directions can produce different residual magnitudes across prompts and reward models.

The stop-gradient is important. If the denominator were not stopped and $\epsilon_\gamma=0$, differentiation would give
\begin{align*}
    \nabla_r\left(\frac{d^{-1}\|r\|_2^2}{m(r)}\right)
    =\frac{2r}{d\,m(r)}
    -\frac{\|r\|_2^2}{d\,m(r)^2}\nabla_r m(r),
    \qquad
    \nabla_r m(r)=\frac{1}{d}\operatorname{sign}(r)
\end{align*}
away from zero coordinates. The second term couples all coordinates through the scalar residual magnitude and can rotate the gradient away from the direct regression direction. By stopping $m(r)$, \name{} keeps the adaptive scale but preserves the simple supervised-regression gradient direction in Eq.~\eqref{eq:app_adaptive_grad}.

The stabilizer $\epsilon_\gamma$ controls the near-zero regime. Since
\begin{align*}
    \|\nabla_r\ell(r)\|_2
    =\frac{2\|r\|_2}{d\max\{m(r),\epsilon_\gamma\}}
    \leq \frac{2\|r\|_2}{d\epsilon_\gamma},
\end{align*}
the gradient cannot blow up when the residual is already nearly solved. Outside the stabilizer floor,
\begin{equation*}
    \frac{\|r\|_2}{d\,m(r)}
    =\frac{\|r\|_2}{\|r\|_1}
    \in\left[\frac{1}{\sqrt d},1\right].
\end{equation*}
The gradient norm therefore lies between \(2/\sqrt d\) and \(2\), with its scale determined by residual structure rather than reward-model calibration.

The same reasoning applies independently to the positive and negative fitting branches. Separate normalizers are preferable because the positive target can be easier or harder than the negative target depending on the local reward landscape. A shared denominator would let the larger residual branch suppress gradients from the smaller one; branch-wise scaling keeps both signals active.

The complete clean-output gradient of the two normalized branches can be written explicitly. Let
\begin{align*}
    r_+=y_\theta^+-\bar{y}_+,
    \qquad
    r_-=y_\theta^- -\bar{y}_-.
\end{align*}
Because the normalizers are detached,
\begin{equation}
    \nabla_{y_\theta}\mathcal{L}_{\mathrm{branch}}
    =
    \frac{2\beta}{d}\left[
    \omega\frac{r_+}{\gamma_+}
    -(1-\omega)\frac{r_-}{\gamma_-}
    \right].
    \label{eq:app_full_adaptive_gradient}
\end{equation}
The negative sign in the second term comes from the negative fitting branch. At the symmetric anchor, $r_+=-\bar h u_{\mathrm{grad}}$ and $r_-=\bar h u_{\mathrm{grad}}$, so Eq.~\eqref{eq:app_full_adaptive_gradient} reduces to
\begin{align*}
    \nabla_{y_\theta}\mathcal{L}_{\mathrm{branch}}(y_0)
    =-\frac{2\beta\bar h}{d}
    \left(
    \frac{\omega}{\gamma_+}
    +\frac{1-\omega}{\gamma_-}
    \right)u_{\mathrm{grad}}.
\end{align*}
Thus separate adaptive scales change the magnitude contributed by each branch but preserve their common reward-improving direction in the symmetric case.

\subsection{Finite Fitting and the Fitting Gap}
\label{app:detached_objective}

At iteration $i$, let
\begin{align*}
    \mathcal D_i
    =\{(s,\omega,\bar y_+,\bar y_-)\},
    \qquad s=(\mathbf c,z_q,\sigma_q),
\end{align*}
be the temporary data object induced by the frozen behavior policy and reward construction. All stored entries are detached as in Eq.~\eqref{eq:detach_rules}. During fitting, the implementation recomputes $y_0$ from the frozen behavior policy at each stored query. The empirical objective is
\begin{align*}
    \widehat{\mathcal J}_i(\theta)
    =\frac{1}{|\mathcal D_i|}
    \sum_{\xi\in\mathcal D_i}
    c_{\mathrm{adv}}\mathcal L_{\mathrm{OPSD}}(\theta;\operatorname{sg}(\xi)),
\end{align*}
and its gradient contains no terms of the forms
\begin{align*}
    \frac{\partial\bar y_\pm}{\partial\theta},
    \qquad
    \frac{\partial z_q}{\partial\theta},
    \qquad
    \frac{\partial y_0}{\partial\theta},
    \qquad
    \frac{\partial\omega}{\partial\theta}.
\end{align*}
Reward is differentiated only with respect to the temporary clean-output variable during target construction. Finite fitting then uses an ordinary supervised field gradient and does not retain the decoder, reward, or sampling computation graph.

The actual update is
\begin{align*}
    \theta_{i+1/2}
    =\operatorname{Fit}_{M_{\mathrm{fit}}}
    (\theta_i;\operatorname{sg}(\mathcal D_i)),
\end{align*}
not an exact $\arg\min$. The operator depends jointly on parameterization, optimizer state, learning rate, precision, batching, and the finite fitting budget. Reusing a target buffer for more optimizer passes reduces construction cost but increases staleness; it can also move the field beyond the neighborhood in which the target was constructed.

\phantomsection
\label{app:target_update_gap}

\noindent\textbf{Fitting accounting.}
For the same fixed query $s$ and fixed-suffix reward $F_q$, let $y_0$ be the anchor, $\bar y_+$ the detached positive target, and $\hat y_{M_{\mathrm{fit}}}$ the clean-output prediction after $M_{\mathrm{fit}}$ optimizer updates during finite fitting. Define
\begin{align*}
    G_{\mathrm{realized}}
    &=F_q(\hat y_{M_{\mathrm{fit}}})-F_q(y_0),\\
    G_{\mathrm{construct}}
    &=F_q(\bar y_+)-F_q(y_0),\\
    G_{\mathrm{fit}}
    &=F_q(\bar y_+)-F_q(\hat y_{M_{\mathrm{fit}}}).
\end{align*}
Then the exact identity
\begin{equation}
    G_{\mathrm{realized}}
    =G_{\mathrm{construct}}-G_{\mathrm{fit}}
    \label{eq:app_target_update_accounting}
\end{equation}
shows why target construction and the realized gain are non-substitutable measurements. The fitting gap is signed. Finite fitting may under-realize, rotate, or overshoot. Extending this accounting to end-to-end online training performance would require additional distribution shift terms that we do not estimate. Therefore Eq.~\eqref{eq:app_target_update_accounting} is not a mediation proof.

A local linearization explains why the gap can arise. For positive-target fitting, write $d_i=\bar y_{+,i}-y_{0,i}$ and $J_i=\partial y_\theta(s_i)/\partial\theta$. A small preconditioned update with local preconditioner $\mathsf P$ gives
\begin{align*}
    \Delta\theta
    &\approx\eta_{\mathrm{fit}}\mathsf P\sum_i J_i^\top d_i,\\
    \Delta y(s)
    &\approx\eta_{\mathrm{fit}}\sum_i
    \underbrace{J_s\mathsf P J_i^\top}_{\mathcal K(s,s_i)}d_i.
\end{align*}
Thus construction specifies $d_i$, whereas the resulting model update produces the kernel-filtered response $\sum_i\mathcal K(s,s_i)d_i$. The response can suppress or rotate reward-relevant directions, and multi-query fitting adds cross-query interference. This is an explanatory local formalization, not a theorem for finite AdamW~\citep{loshchilov2017decoupled}.

\begin{opsdprop}[Local rank preservation]
\label{prop:local_rank_preservation}
Fix one query and one linearization. Let
\begin{equation}
    g=\nabla_y\widetilde R(y_0,\mathbf c)\neq 0,
    \qquad
    C(d)=g^\top d
    \label{eq:app_construction_score}
\end{equation}
be the first-order construction score of a candidate displacement \(d\). For a fixed kernel \(\mathcal K\) and step size \(\eta_{\mathrm{fit}}>0\), define the one-step realized score
\begin{equation}
    U(d)=\eta_{\mathrm{fit}} g^\top\mathcal Kd
    =\eta_{\mathrm{fit}}(\mathcal K^\top g)^\top d.
    \label{eq:app_realized_score}
\end{equation}
The complete weak ordering, including ties,
\begin{equation}
    C(d_1)\leq C(d_2)
    \quad\Longleftrightarrow\quad
    U(d_1)\leq U(d_2)
    \label{eq:app_rank_ordering}
\end{equation}
holds for every \(d_1,d_2\in\mathbb R^{d_y}\), where \(d_y\) is the clean-output dimension, if and only if
\begin{equation}
    \mathcal K^\top g=\lambda g
    \qquad\text{for some }\lambda>0.
    \label{eq:app_rank_condition}
\end{equation}
For symmetric \(\mathcal K\), this reduces to \(\mathcal Kg=\lambda g\).
\end{opsdprop}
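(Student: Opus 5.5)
Both $C$ and $U$ are linear functionals on $\mathbb R^{d_y}$: $C(d)=\langle g,d\rangle$ and $U(d)=\langle w,d\rangle$ with $w=\eta_{\mathrm{fit}}\mathcal K^\top g$. Because the functionals are linear, Eq.~\eqref{eq:app_rank_ordering} for all pairs is equivalent to a statement about single vectors $e=d_2-d_1$. Indeed, $C(d_1)\leq C(d_2)$ iff $\langle g,e\rangle\geq0$, and likewise for $U$. Since $e$ ranges over all of $\mathbb R^{d_y}$ as $d_1,d_2$ vary, the ordering condition is equivalent to
\begin{equation*}
\langle g,e\rangle\geq 0\iff\langle w,e\rangle\geq0\qquad\text{for all }e\in\mathbb R^{d_y},
\end{equation*}
that is, the closed half-spaces $H_g=\{e:\langle g,e\rangle\geq0\}$ and $H_w$ coincide. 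The proof then reduces to showing $H_g=H_w$ iff $w=\mu g$ with $\mu>0$, and translating back: $\mu=\eta_{\mathrm{fit}}\lambda$ and $\eta_{\mathrm{fit}}>0$.

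For sufficiency, assume $\mathcal K^\top g=\lambda g$ with $\lambda>0$. Then $U(d)=\eta_{\mathrm{fit}}\lambda C(d)$ is a positive multiple of $C$, so every inequality and every tie is preserved. This direction is immediate.

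Necessity is the main step. Assume $H_g=H_w$. If $w=0$, then $H_w=\mathbb R^{d_y}$, whereas $H_g$ is a proper half-space because $g\neq0$ (take $e=-g$); so $w\neq0$. Next, take any $e\perp g$. Both $e$ and $-e$ lie in $H_g$, hence in $H_w$, which forces $\langle w,e\rangle=0$. Therefore $g^\perp\subseteq w^\perp$, and taking orthogonal complements gives $w\in\operatorname{span}(g)$, i.e.\ $w=\mu g$. Finally, $e=g\in H_g$ gives $\mu\|g\|_2^2\geq0$, and $\mu\neq0$ because $w\neq0$, so $\mu>0$. The delicate points are the degenerate cases: excluding $w=0$, and using ties (the $\Leftrightarrow$ on equalities, encoded here through $\pm e$) to pin down the orthogonal complement. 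Without the tie part, positivity alone would still give the result via the same argument, but ties make the step cleaner. Setting $\lambda=\mu/\eta_{\mathrm{fit}}$ yields Eq.~\eqref{eq:app_rank_condition}.

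For symmetric $\mathcal K$ we have $\mathcal K^\top=\mathcal K$, so the condition reads $\mathcal Kg=\lambda g$, i.e.\ $g$ is an eigenvector of $\mathcal K$ with positive eigenvalue.
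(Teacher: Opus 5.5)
Your proof is correct and follows essentially the same route as the paper. Sufficiency comes from $U=\eta_{\mathrm{fit}}\lambda C$. Necessity rules out $U\equiv 0$ (the paper compares a $d$ with $C(d)>0$ to the zero displacement; you use $e=-g$), and then shows that two nonzero linear functionals inducing the same weak ordering share a null hyperplane and orientation, hence differ by a positive scalar. Your $\pm e$ argument giving $g^\perp\subseteq w^\perp$, together with the check $\langle w,g\rangle\geq 0$, spells out the step that the paper only asserts.
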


\begin{proof}
If Eq.~\eqref{eq:app_rank_condition} holds, then
\(U(d)=\eta_{\mathrm{fit}}\lambda C(d)\), so the weak ordering and all ties are preserved.
Conversely, \(C\) is nonzero because \(g\neq0\). Choose \(d\) with \(C(d)>0\). Ordering equivalence against the zero displacement gives \(U(d)>0\), so \(U\) is also nonzero. Two nonzero linear functionals that induce the same weak ordering over the entire output space must have the same null hyperplane and
orientation. They are therefore related by a positive scalar, which proves the stated condition for the complete ordering.
\end{proof}

The condition is deliberately local. If candidates are restricted to a
finite set, Eq.~\eqref{eq:app_rank_condition} remains sufficient but is not
necessary. Exact preservation on that set only requires every candidate pair
to satisfy
\begin{equation}
    \operatorname{sign}
    \left[g^\top(d_a-d_b)\right]
    =
    \operatorname{sign}
    \left[(\mathcal K^\top g)^\top(d_a-d_b)\right],
    \label{eq:app_finite_candidate_rank}
\end{equation}
including zero differences. Proposition~\ref{prop:local_rank_preservation}
does not cover adaptive AdamW, nonlinear finite-\(M_{\mathrm{fit}}\) fitting, the full
positive and negative objective, or online training. The isolated same-query fitting audit ranks exact
pre-fit and post-update rewards rather than the linear scores \(C\) and \(U\);
its empirical reversal is therefore a controlled counterexample to universal
monotonicity, not a verification of the proposition.

The isolated audit in Sec.~\ref{sec:target_update_gap} intentionally removes cross-query interference. Every held-out state starts from the same checkpoint, uses a fresh AdamW state, receives one positive-target MSE update, is evaluated at the same query, and is then restored before the next state. It is not the full batched finite-fitting procedure with positive and negative targets used by \name{}. The HPSv$2.1$ ordering reversal therefore establishes a protocol-conditioned counterexample, not a universal law about all rewards or optimizers.

\noindent\textbf{First-order relation to ReFL.}
Let $y_{\mathrm{tar}}=y_0+h_{\mathrm{lin}}g$ with positive scale $h_{\mathrm{lin}}$, where $g=\nabla_y\widetilde R(y_0,\mathbf c)$ and $J=\partial y_\theta(s)/\partial\theta$. At the anchor,
\begin{align*}
    \nabla_\theta\frac12
    \|y_\theta(s)-\operatorname{sg}(y_{\mathrm{tar}})\|_2^2
    =-h_{\mathrm{lin}}J^\top g.
\end{align*}
The infinitesimal descent direction of detached target MSE therefore matches the ascent direction of direct reward optimization. \name{} combines bounded finite construction, explicit detached data, and positive and negative fitting within an online protocol; neither the infinitesimal reward direction nor the behavior-policy EMA update is claimed as an independent contribution.

\subsection{Behavior-Policy EMA Update}
\label{app:ema_stability}

The frozen behavior policy is refreshed by exponential moving average, a standard device in stochastic approximation and iterate averaging~\citep{polyak1992acceleration}. The refresh is applied once at the end of each outer iteration, after the finite-fitting operator has completed all $M_{\mathrm{fit}}$ inner optimizer updates. Let $u_i$ denote the cumulative optimizer-update count after fitting in outer iteration $i$. Regenerating supervision as behavior evolves is consistent with online diffusion post-training such as DiffusionNFT~\citep{zheng2026diffusionnft}. This refresh supplies new queries and clean-output predictions in the next iteration.
\begin{align*}
    \theta_{\mathrm{old}}^{i+1}
    =\eta_{\mathrm{beh}}^{(u_i)}\theta_{\mathrm{old}}^i
    +(1-\eta_{\mathrm{beh}}^{(u_i)})\theta^{i+1/2}.
\end{align*}
Subtracting the trainable parameter gives
\begin{align*}
    \theta_{\mathrm{old}}^{i+1}-\theta^{i+1/2}
    =\eta_{\mathrm{beh}}^{(u_i)}(\theta_{\mathrm{old}}^i-\theta^{i+1/2}),
\end{align*}
so
\begin{align*}
    \|\theta_{\mathrm{old}}^{i+1}-\theta^{i+1/2}\|_2
    =\eta_{\mathrm{beh}}^{(u_i)}\|\theta_{\mathrm{old}}^i-\theta^{i+1/2}\|_2.
\end{align*}
The movement of the behavior policy itself is
\begin{align*}
    \theta_{\mathrm{old}}^{i+1}-\theta_{\mathrm{old}}^i
    =(1-\eta_{\mathrm{beh}}^{(u_i)})(\theta^{i+1/2}-\theta_{\mathrm{old}}^i),
\end{align*}
which shows that $1-\eta_{\mathrm{beh}}^{(u_i)}$ is the outer-loop step size of the data-collection policy.

If the query state collected from the rollout distribution is locally Lipschitz in the policy parameters with constant $L_Q$, then adjacent rollout distributions satisfy
\begin{equation}
    D_{\mathrm{TV}}(Q_{\mathrm{roll},\sigma_q}^{\theta_{\mathrm{old}}^{i+1}},
    Q_{\mathrm{roll},\sigma_q}^{\theta_{\mathrm{old}}^i})
    \leq L_Q(1-\eta_{\mathrm{beh}}^{(u_i)})\|\theta^{i+1/2}-\theta_{\mathrm{old}}^i\|_2.
    \label{eq:app_ema_tv_step}
\end{equation}
Thus a larger $\eta_{\mathrm{beh}}^{(u_i)}$ reduces state-distribution jumps between consecutive training sets. On the other hand, an excessively large $\eta_{\mathrm{beh}}^{(u_i)}$ makes $\theta_{\mathrm{old}}$ stale and slows the incorporation of improvements. The EMA rate therefore controls the trade-off between fresh on-policy data and a stable data-collection distribution.

\subsection{Computational Cost}
\label{app:computational_cost}

The default training cost can be written in terms of its main operations. Let $N_B$ be the number of stored query states, $M_{\mathrm{tgt}}$ the number of target steps per branch, and $M_{\mathrm{fit}}$ the number of policy optimizer updates applied to the temporary dataset. Let $C_{\mathrm{roll}}$ be the amortized per-query rollout cost, $C_{D+R}^{\nabla}$ the per-query decoder and reward backward cost, and $C_{\mathrm{pol}}$ the amortized per-query cost of one policy optimizer update. The leading-order cost is
\begin{equation}
    C_{\mathrm{OPSD}}
    \approx
    N_B C_{\mathrm{roll}}
    +N_B(2M_{\mathrm{tgt}}-1)C_{D+R}^{\nabla}
    +N_B M_{\mathrm{fit}} C_{\mathrm{pol}}.
\label{eq:app_opsd_cost}
\end{equation}
The first positive and negative steps use the same reward gradient at \(y_0\), so the implementation evaluates it once and shares it across the two branches. Each branch then requires \(M_{\mathrm{tgt}}-1\) additional gradients, giving \(2M_{\mathrm{tgt}}-1\) reward backward calls per target microbatch rather than \(2M_{\mathrm{tgt}}\). Target computations remain independently micro-batchable once query states have been stored.

To isolate target-construction overhead, define the rollout and policy-training baseline
\begin{align*}
    C_{\mathrm{base}}
    =N_BC_{\mathrm{roll}}
    +N_BM_{\mathrm{fit}}C_{\mathrm{pol}}.
\end{align*}
Subtracting this baseline and cancelling the common factor $N_B$ gives
\begin{equation}
\begin{aligned}
    C_{\mathrm{OPSD}}-C_{\mathrm{base}}
    &\approx N_B(2M_{\mathrm{tgt}}-1)C_{D+R}^{\nabla},\\
    \frac{C_{\mathrm{OPSD}}-C_{\mathrm{base}}}
    {C_{\mathrm{base}}}
    &\approx
    \frac{(2M_{\mathrm{tgt}}-1)C_{D+R}^{\nabla}}
    {C_{\mathrm{roll}}+M_{\mathrm{fit}}C_{\mathrm{pol}}}.
\end{aligned}
\label{eq:app_target_overhead_ratio}
\end{equation}
The relative overhead is independent of dataset size at leading order. It is controlled by the number of target steps and by the reward-backward cost relative to rollout and policy optimization.

The canonical method stores one query state per sample rather than a differentiable trajectory. The optional endpoint-checking control retains the immediately preceding detached state as well. If $M_{\mathrm{state}}$ is the memory required by one stored sampler state and $M_{D+R}$ is the peak memory of one target-construction microbatch, then the additional memory beyond ordinary policy training is approximately
\begin{align*}
    M_{\mathrm{extra}}
    =O(N_B M_{\mathrm{state}})+M_{D+R}.
\end{align*}
It does not scale with the number of sampler steps through an autograd graph. Direct trajectory backpropagation instead retains activations whose leading dependence is $O(N_B T)$ for $T$ denoising steps unless checkpointing or recomputation is used.

Microbatching changes wall-clock time without changing leading arithmetic cost. Let $B_R$ be the reward-target microbatch size, let $P_R$ be the number of parallel workers, and let $C_{D+R,B_R}^{\nabla}$ be the time for one target microbatch. Under ideal load balancing, the target-construction wall time is approximately
\begin{align*}
    T_{\mathrm{target}}
    \approx
    \left\lceil
    \frac{N_B(2M_{\mathrm{tgt}}-1)}{P_RB_R}
    \right\rceil
    C_{D+R,B_R}^{\nabla}.
\end{align*}
The arithmetic factor is \(N_B(2M_{\mathrm{tgt}}-1)\); independent target microbatches still allow substantial parallelism. Decoder and reward memory determine the largest feasible \(B_R\).

If each stored query state contains $d_z$ scalar values and each scalar uses $b_z$ bytes, raw query-state storage is
\begin{align*}
    S_{\mathrm{query}}=N_Bd_zb_z.
\end{align*}
Storing multiple query states per trajectory would multiply this term and the target-construction cost by approximately the number of stored states.

These expressions separate arithmetic cost, peak memory, storage, and wall-clock parallelism. They also show where the method differs from trajectory backpropagation. Increasing sampler length raises rollout time, but it does not multiply the stored autograd activations because the rollout remains detached.

\section{Experiment Details}
\label{app:experiment_details}

\subsection{Main Results}
\label{app:main_result_details}

\noindent\textbf{Main-table scope. } Tab.~\ref{tab:main_results} uses model-based text-to-image evaluators. For rows marked with \cmark, each column reports the held-out score of the checkpoint trained on that evaluator. The two direct joint-training rows instead optimize PickScore, CLIPScore, and HPSv$2.1$ together and evaluate the resulting jointly trained checkpoint under all ten evaluator columns, providing a compatibility check for one jointly trained policy. Unless stated otherwise, method hyperparameters below refer to our locally trained rows and controls; public-checkpoint reference rows retain their released training and inference configurations.

\noindent\textbf{SD$3.5$-M protocol. } All trainable SD$3.5$-M rows~\citep{esser2024scaling} start from \texttt{stable-diffusion-$3.5$-medium} and fine-tune LoRA adapters~\citep{hu2021lora} with rank \(32\) and alpha \(64\) at \(512\times512\) resolution. Training prompts are drawn from Pick-a-Pic~\citep{kirstain2023pick}. DiffusionNFT, ReFL, and \namew{} collect CFG-free trajectories with guidance scale \(1.0\) and deterministic \(10\)-step DPM-Solver++ $2$M~\citep{lu2025dpm}. FlowGRPO differs because its PPO ratio is computed from per-transition log probabilities, so it uses the method's stochastic SDE-flow rollout with noise level \(\eta_{\mathrm{SDE}}=0.7\) and trains on all transitions. Following the released implementation, DiffusionNFT adds \(10^{-4}\|v_\theta-v_{\mathrm{base}}\|_2^2\) to its objective. For our locally trained FlowGRPO controls, we set the reference-policy coefficient to \(\beta_{\mathrm{KL}}=0\), so no frozen-base velocity-MSE or transition-KL term is applied. Each outer iteration contains \(48\) prompt groups with \(24\) images per group and corresponds to one optimizer update. Evaluation of our locally trained main-table rows is deterministic, using DrawBench prompts~\citep{saharia2022photorealistic}, \(40\)-step flow sampling, and guidance scale \(1.0\) except for the explicit CFG baseline; public-checkpoint reference rows retain their released evaluation configurations.

\noindent\textbf{Z-Image-Turbo protocol. } Z-Image-Turbo rows~\citep{cai2025z} use \(1024\times1024\) resolution and guidance scale \(0.0\). The released checkpoint was obtained with the Decoupled DMD and DMDR pipeline described in its technical report~\citep{liu2025decoupleddmdcfgaugmentation,jiang2025distribution}; our experiments further adapt it without access to the original teacher or distillation components. DiffusionNFT, ReFL, and \namew{} collect trajectories with the native deterministic \(9\)-step FlowMatchEuler schedule. FlowGRPO uses the stochastic SDE version of the same \(9\)-step schedule with \(\eta_{\mathrm{SDE}}=0.7\), because non-degenerate transition densities are required for its per-step log-probability ratios; evaluation returns to the native deterministic Euler sampler. DiffusionNFT retains the released implementation's frozen-base velocity MSE with coefficient \(10^{-4}\), while our locally trained FlowGRPO controls use \(\beta_{\mathrm{KL}}=0\) and therefore no reference-policy regularizer. Training again uses Pick-a-Pic prompts and \(48\) prompt groups per outer iteration, with \(12\) images per group. The numeric CFG-free values differ only because the two inference APIs use different conventions. The SD$3.5$ pipeline uses \(1.0\) for the conditional prediction without guidance extrapolation, whereas the Z-Image-Turbo pipeline uses \(0.0\) to disable CFG and the unconditional branch. Both settings are CFG-free.

\noindent\textbf{Checkpoint types and updates. } Rows marked with \cmark use reward-specific checkpoints, so each reward column is filled by the checkpoint trained for that reward. Rows marked with \xmark evaluate a single checkpoint across all metric columns. Unless stated otherwise, reward-specific rows use $100$ optimizer updates. The direct joint PickScore, CLIPScore, and HPSv$2.1$ rows use $300$ optimizer updates with the composite reward objective $\frac{\mathrm{PickScore}}{26}+\mathrm{CLIPScore}+\mathrm{HPSv2.1}$. Throughout the table, the Updates column denotes optimizer updates.

\noindent\textbf{Locally trained baselines. } DiffusionNFT, FlowGRPO, and ReFL use the backbone-specific training, rollout, and evaluation settings described above.

\noindent\textbf{Optimization and reproducibility. } All \name{} runs use AdamW with learning rate $3\!\times\!10^{-4}$, first and second beta values $0.9$ and $0.999$, weight decay $10^{-4}$, and optimizer epsilon $10^{-8}$. The behavior-policy EMA is applied once after each outer iteration, after all $M_{\mathrm{fit}}$ optimizer updates, with adapter retention $\eta_{\mathrm{beh}}^{(u)}=\min\{10^{-3}u,0.5\}$ at cumulative optimizer update $u$. In the canonical runs $M_{\mathrm{fit}}=1$. A distinct checkpoint EMA uses retention $\eta_{\mathrm{ckpt}}^{(u)}=\min\{\frac{u+1}{u+10},0.9\}$. All locally trained main-table checkpoints are saved and evaluated with these checkpoint-averaged parameters. We use $M_{\mathrm{tgt}}=2$, $\eta_{\mathrm{tgt}}=1.0$, $\rho=0.10$, branch coefficient $\beta=1.0$ for reward-specific runs and $0.1$ for direct joint three-reward runs, $c_{\mathrm{adv}}=5$ for reward normalization and policy-loss scaling, and stabilizers $\epsilon_Z=10^{-4}$, $\epsilon_g=10^{-12}$, and $\epsilon_\gamma=10^{-5}$. The requested query levels are $\sigma^\star=0.278$ for SD$3.5$-M and $\sigma^\star=0.273$ for Z-Image-Turbo. The implementation uses the nearest available schedule value as $\sigma_q$. On eight-policy-GPU jobs, the per-GPU training microbatch is $9$ with gradient accumulation $16$ for SD$3.5$-M and $6$ with gradient accumulation $12$ for Z-Image-Turbo. Dedicated reward-server jobs use six policy GPUs with microbatch $8$ and gradient accumulation $24$ for SD$3.5$-M and microbatch $6$ and gradient accumulation $16$ for Z-Image-Turbo. The target-construction microbatch is $6$ for SD$3.5$-M and $2$ for Z-Image-Turbo, reduced to $1$ for the memory-heavy SD$3.5$-M ImageReward, DeQA, VLM-Pointwise, and VLM-Pairwise rows and for the two Z-Image-Turbo VLM rows. Runs use a single node with eight NVIDIA A$800$-SXM$4$-$80$GB GPUs; VLM reward-server rows allocate two of these GPUs to the reward model.

\noindent\textbf{Metric provenance and scales. } For rows marked with $^\ddagger$, metrics available in DiffusionNFT are inherited from the original report when applicable, while additional reward columns are evaluated under the same DrawBench protocol. The released public evaluators used for our rows on the generated DrawBench images are PickScore v1~\citep{kirstain2023pick}, CLIPScore with CLIP ViT-L/14~\citep{hessel2021clipscore}, the official HPSv2.1 checkpoint~\citep{wu2023human}, the LAION aesthetic predictor with CLIP ViT-L/14~\citep{schuhmann2022laion}, ImageReward v1.0~\citep{xu2023imagereward}, HPSv3 7B~\citep{ma2025hpsv3}, and DeQA-Score-Mix3~\citep{you2025teaching}. PickScore is shown on its raw released-logit scale; CLIPScore, HPSv2.1, Aesthetic, ImageReward, and HPSv3 are reported in their native evaluator units, while DeQA reports its expected mean-opinion score on the $[1,5]$ scale. Higher is better for every column. VLM-Pairwise reports $P(\mathrm{generated}>\mathrm{reference})$ against fixed, prompt-matched reference images generated by Seedream $5.0$ Pro~\citep{seedream2025seedream}.

\noindent\textbf{Internal reward models. } The internal columns evaluate image and text preference beyond the public reward suite. The AltCLIP column uses an internal alignment reward model trained on our internal data; only its encoder architecture follows AltCLIP~\citep{chen2023altclip}, and it is not the public AltCLIP checkpoint. VLM-Pointwise scores a prompt and image pair with a scalar preference model that combines semantic alignment with perceptual quality factors such as aesthetics and realism. VLM-Pairwise compares a generated image against a prompt-matched reference image and reports the probability that the generated image is preferred. For DrawBench VLM-Pairwise evaluation, the reference images are generated once and kept fixed for every method under the same controlled comparison protocol used throughout the table.

\noindent\textbf{Signed reward values.} ImageReward and HPSv$3$ are signed reward scales. Consequently, the negative SD$3.5$-M without CFG$^\ddagger$ entries should be read as low reward-model preference under weak unguided generation, not as an exceptional evaluation case.

\subsection{Two-Stage On-Policy Distillation Baselines}
\label{app:opd_baselines}

The on-policy distillation block in Tab.~\ref{tab:main_results} evaluates a two-stage alternative for combining PickScore, CLIPScore, and HPSv$2.1$. We first train three separate, reward-specific \name{} policies for $100$ optimizer updates, optimizing PickScore, CLIPScore, and HPSv$2.1$, respectively. Their final checkpoints are the PickScore, CLIPScore, and HPSv$2.1$ specialists summarized by the SD$3.5$-M \namew{} \emph{Reward-Specific}=\cmark row. We freeze these specialists as teachers, initialize a shared student from \texttt{stable-diffusion-$3.5$-medium}, and distill it for $300$ optimizer updates with equal teacher weights. The $300$ entries in the main table denote these $300$ second-stage OPD optimizer updates. The preceding specialist-training runs are described here and are not included in the Updates column.

All three variants use the same SD$3.5$-M backbone, Pick-a-Pic training prompts, $512{\times}512$ resolution, and LoRA rank $32$ with alpha $64$. DanceOPD distillation~\citep{zhou2026danceopd} generates a deterministic, CFG-free $10$-step student trajectory, selects one low-noise query state, and minimizes the equally weighted mean velocity-matching loss to the three teachers at that state. DiffusionOPD distillation~\citep{li2026diffusionopd} uses the same on-policy trajectory but matches the teacher transition means over all $10$ denoising steps, where each transition mean is constructed from the current state and the teacher velocity under the same solver step. FlowOPD distillation~\citep{fang2026flow} instead samples a stochastic SDE-flow trajectory, evaluates each realized student transition under the three teacher transition distributions, averages their log probabilities to form a per-step teacher-consistency reward, and applies a PPO-style clipped likelihood-ratio update~\citep{schulman2017proximal} against the rollout behavior policy with globally normalized step-wise advantages. The reward models are used only to train the preceding specialist teachers and to evaluate the distilled student; they do not enter the second-stage OPD objectives.

To keep the $300$-update distillation stages comparable in wall-clock budget to direct multi-reward \name{}, we calibrate the rollout batch size of each variant against a measured reference of $175.2$ seconds per update on eight GPUs, including teacher queries and checkpoint writing. This yields $2,688$, $528$, and $544$ sampled images per update for DanceOPD, DiffusionOPD, and FlowOPD, respectively, or $806,400$, $158,400$, and $163,200$ sampled images over $300$ updates. Each second-stage run is approximately $117$ GPU-hours. This GPU-hour figure excludes the three preceding specialist-training runs.

We evaluate the resulting shared policies with the same protocol as the SD$3.5$-M multi-reward rows. The evaluation uses $1,000$ images from the $200$ DrawBench prompts, with five images per prompt, at $512{\times}512$ resolution with deterministic $40$-step flow sampling, CFG-free guidance scale $1.0$, and the identical scorer suite. VLM-Pairwise uses the same fixed, prompt-matched Seedream $5.0$ Pro references for every method. These rows are two-stage policy-composition controls with a single final policy whose \emph{Reward-Specific} entry is \xmark rather than additional reward-to-target interfaces.

\subsection{Efficiency Profiling}
\label{app:efficiency_profile}

We measure end-to-end training cost with matched settings for each backbone. The \namew{} rows profile the current code. For SD$3.5$-M, we average $12$ optimizer steps after $3$ warmup steps; for Z-Image-Turbo, we average $2$ steps after $1$ warmup step. DiffusionNFT and FlowGRPO use their short profiling runs, while ReFL uses the mean training time from its full $100$-update run. We exclude initialization, calibration, and evaluation. GPU-hours per $100$ updates are computed from step time on eight GPUs. The SD$3.5$-M block uses CLIPScore at $512\times512$; the Z-Image-Turbo block uses HPSv$2.1$ at $1024\times1024$. FlowGRPO requires stochastic SDE-flow transitions and their log probabilities.

\begin{table}[t]
\centering
\scriptsize
\setlength{\tabcolsep}{3.0pt}
\caption{\textbf{Training efficiency profiling.} Per-step time, throughput, reward forward passes, target-gradient calls, diffusion-model backward passes, and GPU-hours per $100$ updates.}
\label{tab:efficiency_profile}
\resizebox{\linewidth}{!}{%
\begin{tabular}{llccccccc}
\toprule
\textbf{Backbone} & \textbf{Method} & \textbf{Time per step} & \textbf{Rel.} & \textbf{Peak VRAM} & \textbf{Reward fwd; target grad} & \textbf{Diffusion bwd} & \textbf{Images per s} & \textbf{GPU-h per 100} \\
& & \textbf{Seconds} & \textbf{vs NFT} & \textbf{GB} & & & & \\
\midrule
\multirow{4}{*}{SD3.5-M}
& DiffusionNFT & $212.4\pm0.4$ & $1.00\times$ & 47.8 & 16\,;\,0 & 144 & 5.42 & 47.2 \\
& ReFL & 214.8 & $1.01\times$ & -- & 144\,;\,0 & 144 & 5.36 & 47.7 \\
& FlowGRPO & 191.1 & $0.90\times$ & 48.0 & 16\,;\,0 & 160 & 6.03 & 42.5 \\
\rowcolor{vefoursrow}
& \name{} & $126.9\pm0.2$ & $0.60\times$ & 50.0 & 88\,;\,72 & 16 & 9.08 & 28.2 \\
\midrule
\multirow{4}{*}{Z-Image-Turbo}
& DiffusionNFT & $1826.2\pm0.7$ & $1.00\times$ & 49.9 & 12\,;\,0 & 96 & 0.32 & 405.8 \\
& ReFL & 459.5 & $0.25\times$ & -- & 72\,;\,0 & 72 & 1.25 & 102.1 \\
& FlowGRPO & $1475.2\pm2.1$ & $0.81\times$ & 51.6 & 12\,;\,0 & 108 & 0.39 & 327.8 \\
\rowcolor{vefoursrow}
& \name{} & $674.0\pm0.6$ & $0.37\times$ & 61.5 & 120\,;\,108 & 12 & 0.85 & 149.8 \\
\bottomrule
\end{tabular}%
}
\end{table}

Counts are per policy rank for one optimizer update. DiffusionNFT backpropagates through all trained transitions. On SD$3.5$-M, $9$ transitions with $16$ accumulation microbatches give $144$ backward calls, while on Z-Image-Turbo, $8$ transitions with $12$ microbatches give $96$. FlowGRPO also uses all stochastic transitions, giving $10\times16=160$ and $9\times12=108$ backward calls. ReFL samples one late state per trajectory. Each microbatch uses one reward-only hinge evaluation and one joint reward-and-diffusion backward. This gives $144$ reward forward passes and $144$ diffusion backward calls on SD$3.5$-M, and $72$ of each on Z-Image-Turbo. ReFL has no separate target-gradient call because it backpropagates directly through the one-step clean-output prediction. Peak VRAM was not recorded for the full ReFL runs. The current \namew{} code trains one query per trajectory and uses one diffusion backward per accumulation microbatch, giving $16$ and $12$ backward calls. Its positive and negative paths share the first reward gradient at $y_0$. With $M_{\mathrm{tgt}}=2$, each target microbatch uses three target-gradient calls. The two backbones therefore use $72$ and $108$ target gradients and $88$ and $120$ total reward forward passes.

On SD$3.5$-M, \name{} takes $0.59\times$ ReFL's time per optimizer step and $28.2$ instead of $47.7$ GPU-hours per $100$ updates. On Z-Image-Turbo, ReFL takes $0.68\times$ \namew{}'s step time and $102.1$ instead of $149.8$ GPU-hours. However, \name{} has higher final held-out quality in all ten reward-matched settings. FlowGRPO is not sampler-matched because it requires stochastic SDE transitions and their log probabilities. The Z-Image-Turbo \namew{} run uses a smaller reward-gradient microbatch to fit in memory, so its peak VRAM is not directly comparable with the other rows.

\subsection{Qualitative Component Ablations}
\label{app:qualitative_component_ablations}

The following figures provide the component-ablation cases referenced in Sec.~\ref{sec:qualitative_results}.

\begin{figure}[t]
    \centering
    \includegraphics[width=\linewidth,height=0.96\textheight,keepaspectratio]{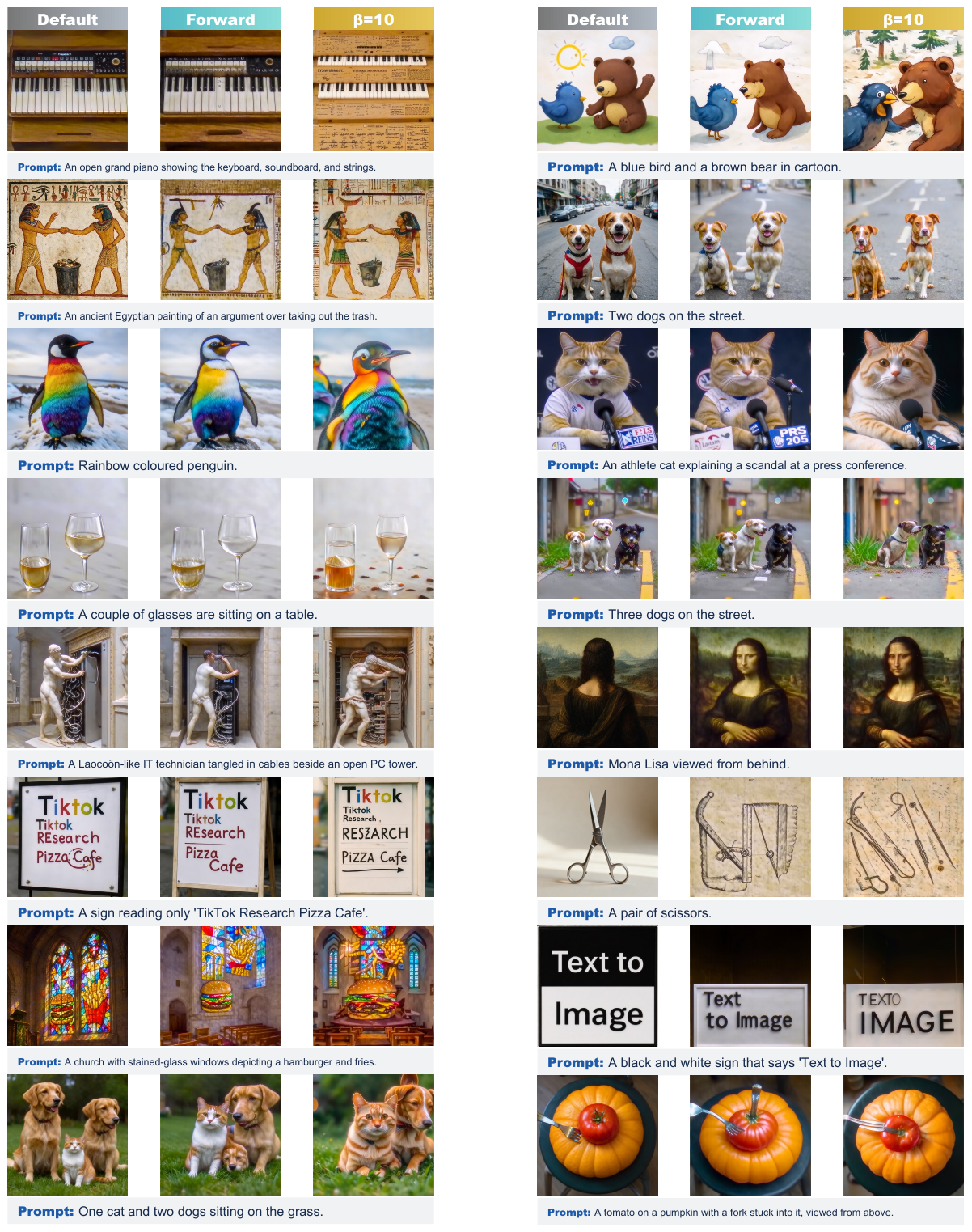}
    \caption{\textbf{Qualitative component ablation.} Rollout query state, forward-noised control, and large branch-coefficient controls on the shared visualization pool. The large branch-coefficient setting can damage prompt semantics under finite fitting, while the first two variants remain closer.}
    \label{fig:ablation_mechanism_qual}
\end{figure}

\begin{figure}[p]
    \centering
    \includegraphics[width=\linewidth,height=0.90\textheight,keepaspectratio]{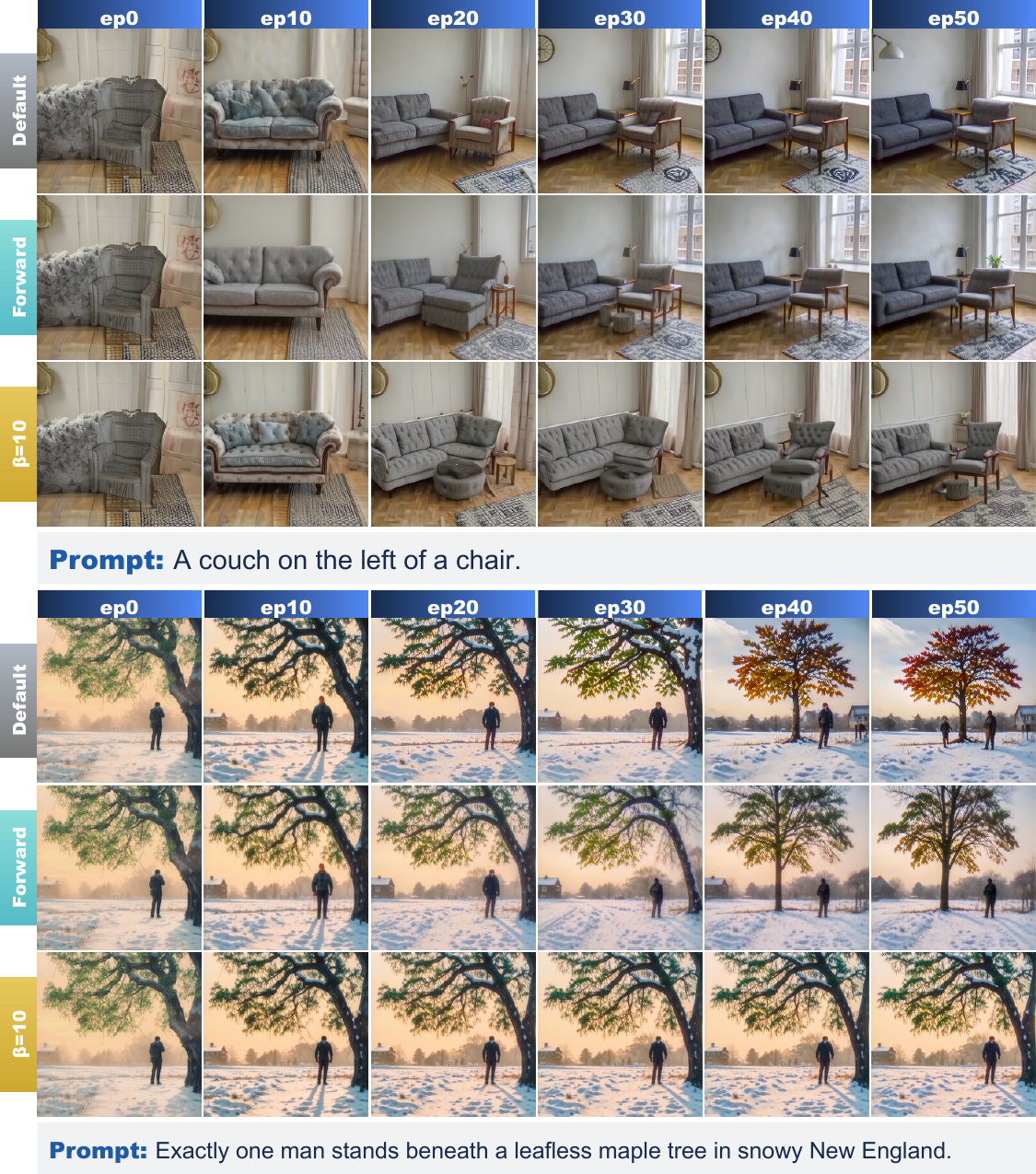}
    \caption{\textbf{Component evolution on additional prompts.} Rollout query state, forward-noised control, and large branch-coefficient variants use the same update grid. Across these disjoint prompts, changing query-state source has limited effect, whereas an excessively large branch coefficient produces visible semantic drift.}
    \label{fig:ablation_progress_mechanism_p02}
\end{figure}

\begin{figure}[p]
    \centering
    \includegraphics[width=\linewidth,height=0.90\textheight,keepaspectratio]{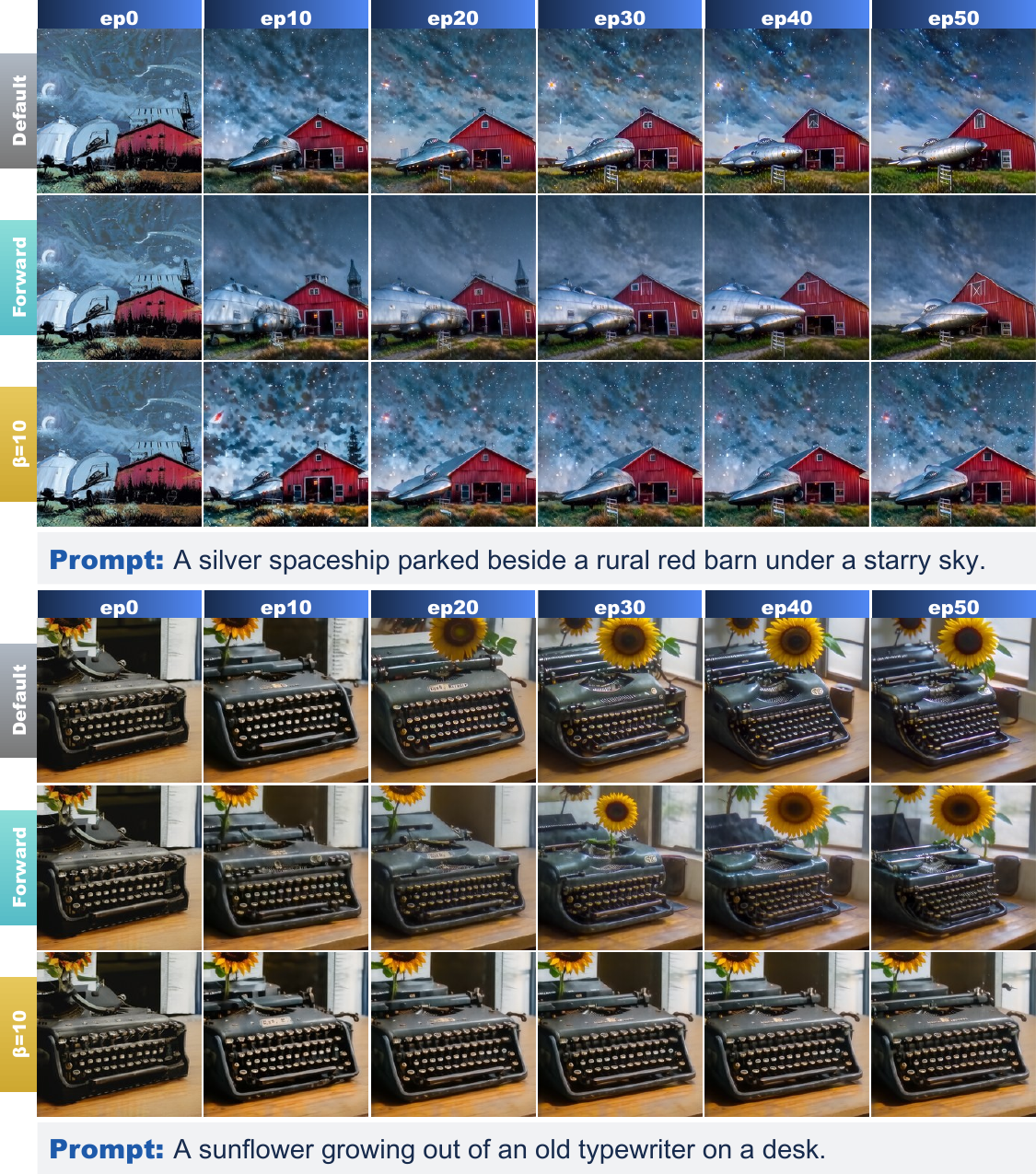}
    \caption{\textbf{Additional component-evolution cases.} Query-state source again has little effect while a large branch coefficient increasingly harms scene and typography details.}
    \label{fig:ablation_progress_mechanism_p03}
\end{figure}

\FloatBarrier

\subsection{Training Reward Curves}
\label{app:native_dynamics}

The training-curve inventory mirrors the trainer's \texttt{metrics.jsonl} for $73$ complete runs. It contains $70$ reward-specific $100$-update runs underlying Tab.~\ref{tab:main_results}, one additional matched SD$3.5$-M FlowGRPO CLIPScore run used only in this analysis, and two $300$-update runs jointly trained on PickScore, CLIPScore, and HPSv$2.1$. The curves are canonical training records. Raw per-update rewards remain on their native scales. Dispersion is defined consistently for every method as the mean within-prompt rollout reward standard deviation. This quantity is not a confidence interval.

Fig.~\ref{fig:native_dynamics_sd35m} and Fig.~\ref{fig:native_dynamics_zimage} separate the cross-cell summary by backbone. For each backbone and reward cell, the gain denominator is the joint smoothed best across methods minus the median method value at update $0$. Each run is measured from its own reward at update $0$, which removes small rollout-sampling offsets without forcing pointwise method ordering. Solid curves are cross-cell medians and bands are cross-cell interquartile ranges. The end-of-run statistic is the final smoothed reward's position between the minimum and maximum observed smoothed rewards in that run. The normalized-gain plot in Fig.~\ref{fig:native_dynamics_zimage} retains the full negative DiffusionNFT curve for Z-Image-Turbo rather than clipping it to the scale of the other methods shown in the figure.

\begin{figure}[!b]
    \centering
    \includegraphics[width=\linewidth]{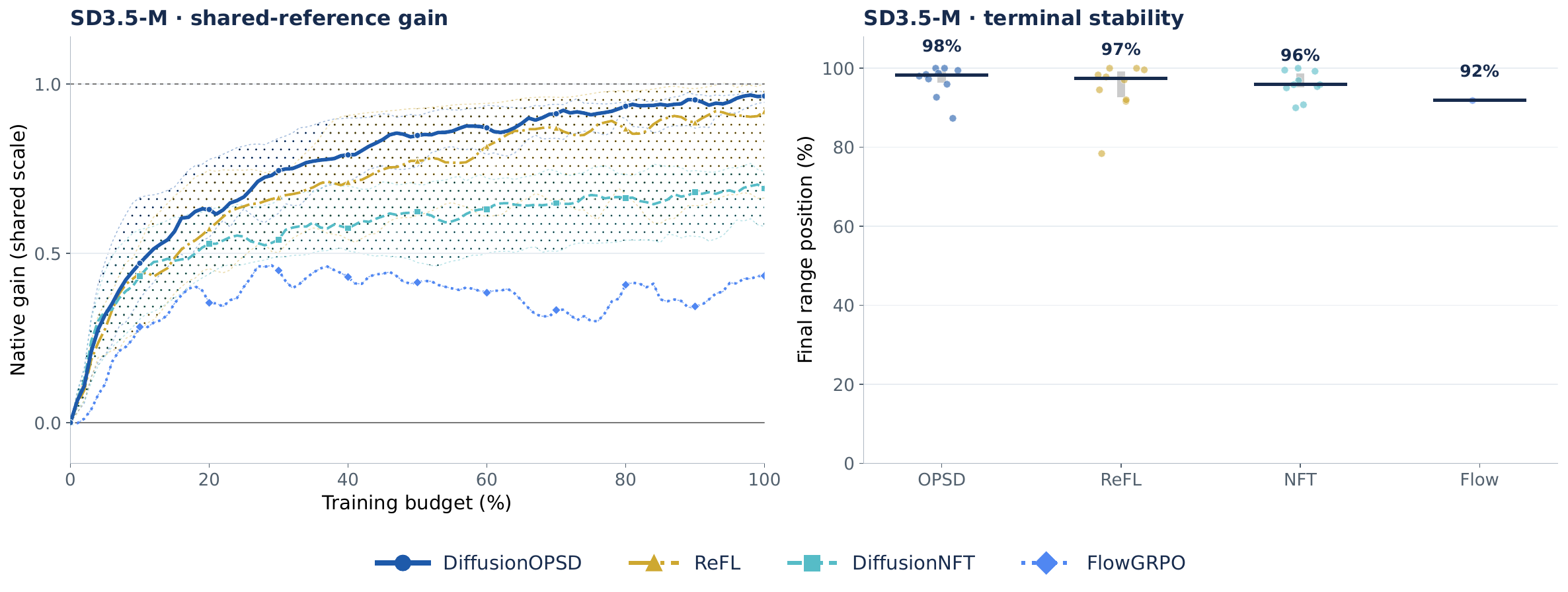}
    \caption{\textbf{SD$3.5$-M training dynamics.} DiffusionOPSD achieves the strongest average normalized gain and stable final rewards across the ten objectives.}
    \label{fig:native_dynamics_sd35m}
\end{figure}

\begin{figure}[!b]
    \centering
    \includegraphics[width=\linewidth]{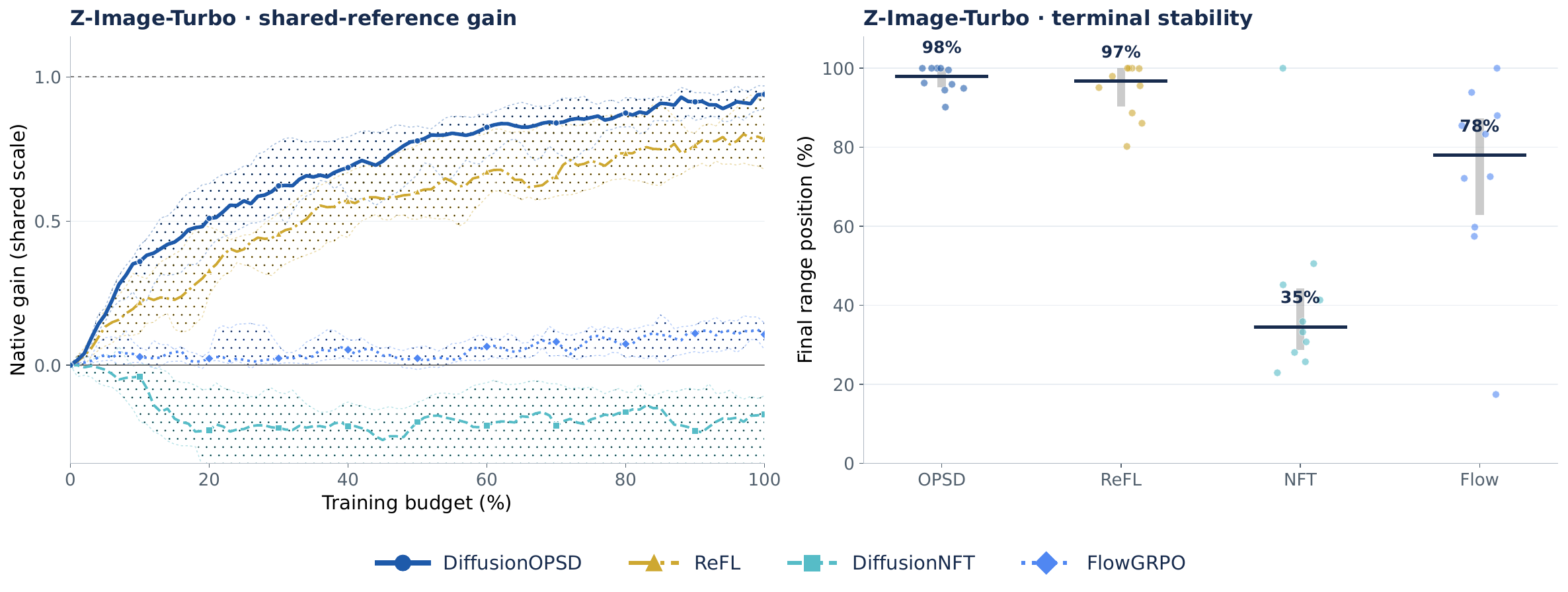}
    \caption{\textbf{Z-Image-Turbo training dynamics.} DiffusionOPSD improves consistently across the native schedule, while DiffusionNFT shows several below-base regressions.}
    \label{fig:native_dynamics_zimage}
\end{figure}

\begin{figure}[t]
    \centering
    \includegraphics[width=\linewidth,height=0.98\textheight,keepaspectratio]{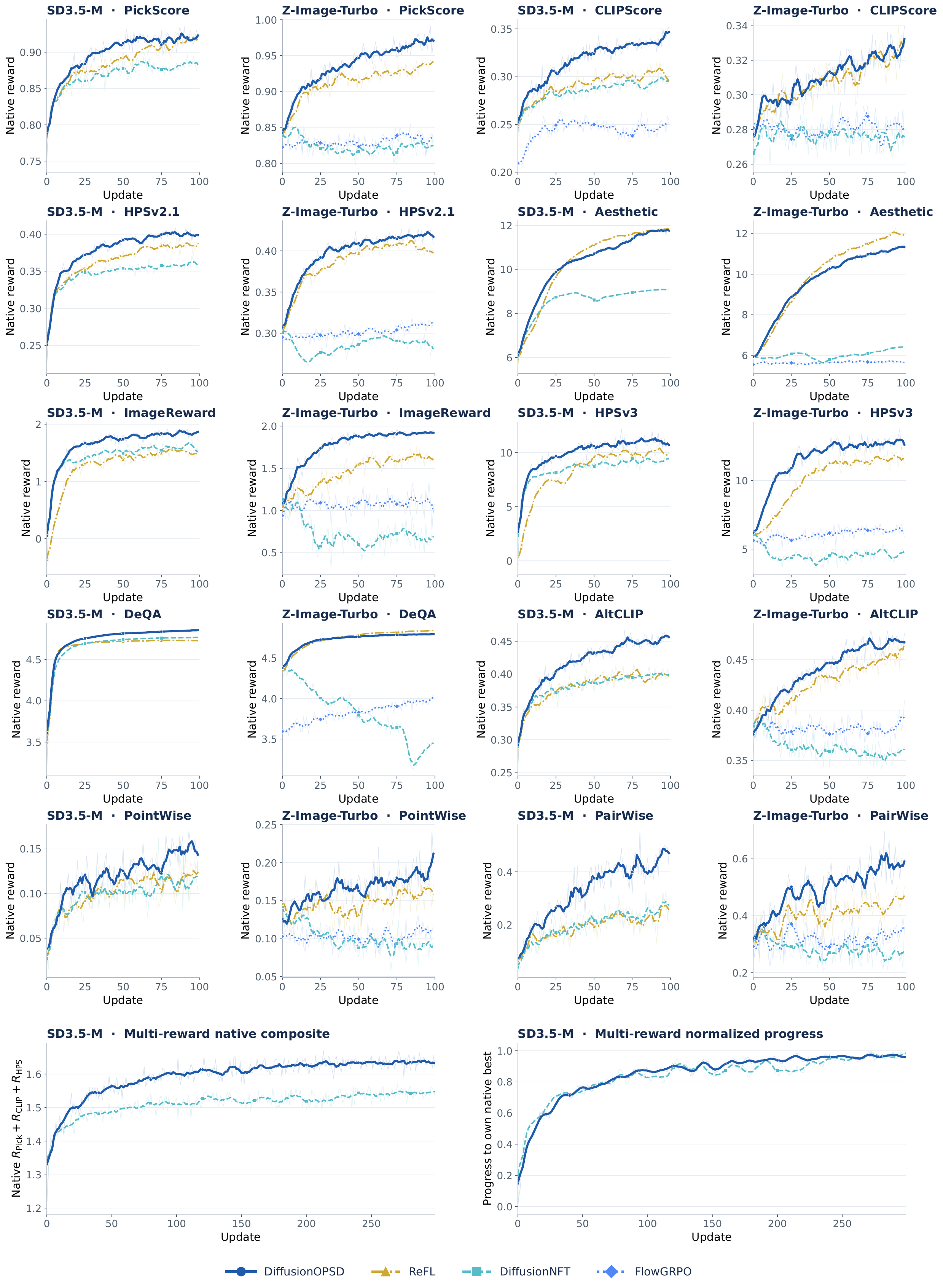}
    \caption{\textbf{Complete training records.} Raw per-update reward curves for all $73$ runs show the unnormalized histories underlying the aggregate analyses. Reward-specific and joint-training runs remain on their native scales.}
    \label{fig:native_dynamics_atlas}
\end{figure}

\FloatBarrier
\subsection{Fitting Audit}
\label{app:finite_fitting_audit}

This section gives the detailed protocols and configurations behind Fig.~\ref{fig:mechanism_target_update}. The figure combines three distinct protocols that must not be interpreted as one causal chain. Endpoint attribution measures trained variants, the construction and realized-gain audits hold a query fixed, and the reversal probe restores parameters between prompts. End-to-end online training efficacy is established separately in Sec.~\ref{sec:complete_results}.

\noindent\textbf{Expanded endpoint attribution.}
The expanded endpoint-attribution evaluation in Fig.~\ref{fig:mechanism_target_update} uses the $50$-update SD$3.5$-M CLIPScore variants at $512\times512$ on $999$ images from $200$ DrawBench prompts, with approximately five shared initial latents per prompt. Generation uses the deterministic $40$-step first-order ODE sampler, CFG-free scale $1.0$. Paired differences use the same prompt and latent across variants. The $10,000$-resample prompt-clustered bootstrap intervals account for repeated latents within each prompt.

\noindent\textbf{Construction gain.}
The construction-gain comparison in Fig.~\ref{fig:mechanism_target_update} uses $512$ distinct held-out prompts and freezes the SD$3.5$-M CLIPScore policy at update $50$. It fixes the query noise level at $\sigma_q\simeq0.278$, the anchor prediction, and the relative clean-output target radius at $\rho=0.10$. The same suffix sampler is used for every candidate. A candidate is evaluated before any parameter update. $\Delta F_q$ is its fixed-suffix reward minus that of the anchor, and alignment is the cosine between candidate displacement and the local reward gradient. FlowGRPO uses $1,024$ observations over the same $512$ prompts because each finite-group direction is estimated twice; the remaining rows use one anchor per prompt. Reward calls count calls used to form one candidate direction, not suffix evaluation.

\noindent\textbf{Matched realized-gain audit.}
The matched realized-gain comparison in Fig.~\ref{fig:mechanism_target_update} uses the same frozen CLIPScore policy, $512$ distinct prompts, query construction, fixed-suffix reward, and matched pooled-RMS query-output displacement. Every interface uses one calibrated plain-SGD step. ReFL backpropagates reward through a one-step clean-output prediction at the queried late state after a no-gradient prefix, while detached rows fit an explicit target at the same query. Parameters are restored between prompts. The drift ratio is the off-direction component of the realized query displacement relative to its total magnitude. This protocol measures an isolated realized gain, not the batched online training procedure with positive and negative targets.

\noindent\textbf{Isolated same-query positive-target fitting probe.}
The reversal analysis in Fig.~\ref{fig:mechanism_target_update} is a separate probe at update $100$ on CLIPScore and HPSv$2.1$. For every held-out query and candidate target, policy parameters are restored to the same checkpoint and a fresh AdamW instance performs exactly one positive-target MSE update. With zero moment state and gradient \(g\), its first parameter update is approximately
\begin{equation}
    \Delta\theta
    \approx
    -\eta_{\mathrm{Adam}}\frac{g}{|g|+\epsilon_{\mathrm{Adam}}}
    -\eta_{\mathrm{Adam}}\lambda\theta,
    \label{eq:app_fresh_adamw_step}
\end{equation}
where the fraction is elementwise. Here \(\eta_{\mathrm{Adam}}\) is the learning rate, \(\lambda\) is the weight-decay coefficient, and \(\epsilon_{\mathrm{Adam}}\) is the optimizer stabilizer. When gradient magnitudes dominate \(\epsilon_{\mathrm{Adam}}\), this step is approximately sign normalized. Matching target radius therefore does not match the parameter-step magnitude. The updated clean-output prediction and fixed-suffix reward are measured at the same query, then the checkpoint is restored before the next state. The probe uses $512$ distinct prompts per reward, no cross-query interference, no negative fitting branch, no batch accumulation, and no behavior-policy EMA update. It is deliberately narrower than the full \name{} training procedure. $\kappa_{\mathrm{fit}}$ is the signed fraction of the target displacement realized along the target direction. Orthogonal drift is reported jointly because $\kappa_{\mathrm{fit}}$ alone is not sufficient.

The HPSv$2.1$ reward-gradient positive target has larger construction gain but produces a smaller realized gain after one fresh-AdamW update than the matched-radius random-direction positive target. This is a protocol-specific counterexample to the claim that a larger construction gain always produces a larger realized gain. CLIPScore is a weaker counterpoint. Ranking is only partially retained, but grad-versus-random ordering does not reverse on a majority of anchors. The separate plain-SGD audit reports a positive realized gain for bounded \namew{} on CLIPScore but does not test HPSv$2.1$. These results make construction-gain evaluation and realized gain non-substitutable under the measured optimization protocol. They do not explain end-to-end online training performance or isolate the marginal contribution of the behavior-policy EMA update.

\subsection{Stress-Test Protocols}
\label{app:stress_protocols}

The few-step study uses Z-Image-Turbo at $1024\times1024$ with its native $9$-step FlowMatchEuler sampler and guidance scale $0.0$ for both training and evaluation. Each of the ten reward-specific checkpoints trains for $100$ optimizer updates. Outcome counts in the main text are computed per optimized reward; paired base-model significance is available for nine rewards with aligned per-prompt scores.

The joint policy is trained directly for $300$ updates on $\frac{\mathrm{PickScore}}{26}+\mathrm{CLIPScore}+\mathrm{HPSv2.1}$. Each single-reward specialist trains for $100$ updates. The different update budgets are intentional and prevent interpreting this stress test as a matched multi-objective leaderboard. The comparison asks only whether one complete \name{} policy can retain high performance on all three reward objectives in this setting. Weighted-sum reward gradients are available to any differentiable method and may suffer conflict or dilution~\citep{zhao2026marble}, the result does not establish a general conflict-resolution mechanism.

\subsection{Ablation Details}
\label{app:ablation_details}

\noindent\textbf{Shared screening protocol. } All ablation variants in Sec.~\ref{sec:ablations} use the same CLIPScore-only screening protocol~\citep{hessel2021clipscore}. The backbone is SD$3.5$-M~\citep{esser2024scaling} at $512\times512$ resolution. Training prompts are sampled from the same Pick-a-Pic training split~\citep{kirstain2023pick} used by the corresponding reward-optimization runs, while evaluation uses a fixed DrawBench $10\%$ subset~\citep{saharia2022photorealistic}. The subset contains $20$ unique DrawBench prompts and five generated images per prompt, giving $100$ images per reported scalar. We report raw CLIPScore on this subset as mean $\pm$ standard error. The subset is intentionally small so that many component variants can be compared under matched compute; conclusions that affect the main table should be confirmed on the full DrawBench protocol before being treated as general evidence.

\noindent\textbf{Training schedule and sampler. } Each ablation trains a LoRA~\citep{hu2021lora} on SD$3.5$-M for $50$ optimizer updates. Checkpoint index $u$ equals the optimizer-update count exactly; update $50$ is the checkpoint after $50$ optimizer updates, not the completion of $50$ dataset passes. Unless a variant explicitly changes rollout function evaluations or sampler type, data collection uses the rollout sampler of the behavior policy with deterministic DPM-Solver++ $2$M~\citep{lu2025dpm}, $10$ steps, $512\times512$ resolution, and classifier-free guidance scale $1.0$~\citep{ho2022classifier}. Evaluation is always held fixed with a deterministic flow sampler, $40$ steps, guidance scale $1.0$. The scalar endpoint-attribution result in Fig.~\ref{fig:mechanism_target_update} and the curves or controls in Fig.~\ref{fig:ablation_dynamics},~\ref{fig:ablation_endpoint_controls}, and~\ref{fig:ablation_robustness} use the checkpoint at update $50$. The training-evolution visualizations in Fig.~\ref{fig:ablation_progress_direction} and~\ref{fig:ablation_progress_mechanism_p02} additionally render updates $0$, $10$, $20$, $30$, $40$, and $50$, where update $0$ denotes the untrained LoRA initialization.

\noindent\textbf{Canonical OPSD configuration. } The default configuration used as the reference line in Fig.~\ref{fig:ablation_endpoint_controls} and~\ref{fig:ablation_robustness} disables endpoint checking, enables both positive and negative reward-gradient-derived targets, uses query noise level $\sigma_q=0.278$, relative clean-output target radius $\rho=0.10$, two reward ascent and descent steps per branch, target-step multiplier $\eta_{\mathrm{tgt}}=1.0$, branch coefficient $\beta=1.0$, and the DPM-Solver++ $2$M $10$-step rollout sampler. Endpoint checking, endpoint-check margin, positive-only training, query noise level, relative clean-output target radius, number of reward-ascent steps, target-step multiplier, rollout function evaluations, branch coefficient, behavior-policy EMA update, and rollout sampler are varied one at a time around this default.

\noindent\textbf{Target and component variants. } Fig.~\ref{fig:mechanism_target_update} compares four target variants. These are the reward-gradient target used by \name{}, a random-direction target with the same relative clean-output radius, a no-op target that leaves $y_+=y_-=y_0$, and a rollout-residual target along $x_0-y_0$. It also reports a query-state source check that replaces the on-policy rollout query state with a forward-noised control. Unlike the $100$-image screening curves in Fig.~\ref{fig:ablation_dynamics}, its target-direction comparison uses the expanded $999$-image protocol described in App Sec.~\ref{app:finite_fitting_audit}.

\noindent\textbf{Qualitative and progress figures. } The qualitative grids in Fig.~\ref{fig:ablation_direction_qual} and~\ref{fig:ablation_mechanism_qual} use $40$ prompts. They combine the $20$ prompts from the fixed DrawBench-subset visualization cases with $20$ additional DrawBench prompts sampled only for visualization. The progress grids in Fig.~\ref{fig:ablation_progress_direction} and~\ref{fig:ablation_progress_mechanism_p02} use two other disjoint $20$-prompt sets, so the final qualitative grids and the training-evolution grids do not reuse prompts. These images are diagnostic and are not used to compute the scalar CLIPScore means.

\FloatBarrier

\begin{figure}[p]
    \centering
    \includegraphics[width=\linewidth,height=0.92\textheight,keepaspectratio]{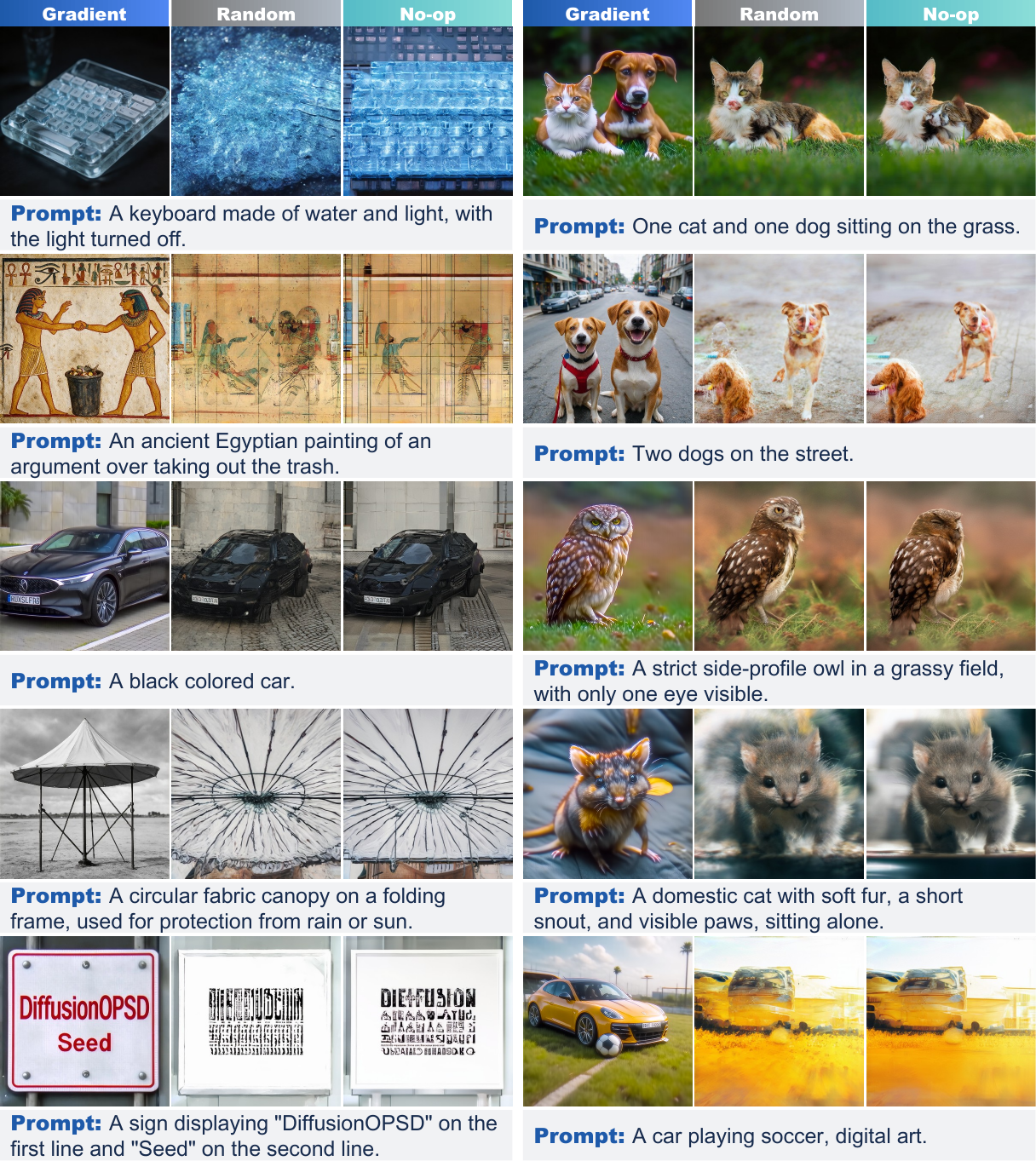}
    \caption{\textbf{Qualitative target-direction ablation.} Update-$50$ reward-gradient, random-direction, and no-op target variants on held-out prompts. Across the $40$-prompt pool, the reward-gradient direction preserves requested entities and style more consistently, while random and no-op controls under-edit or drift.}
    \label{fig:ablation_direction_qual}
\end{figure}

\begin{figure}[p]
    \centering
    \includegraphics[width=\linewidth,height=0.92\textheight,keepaspectratio]{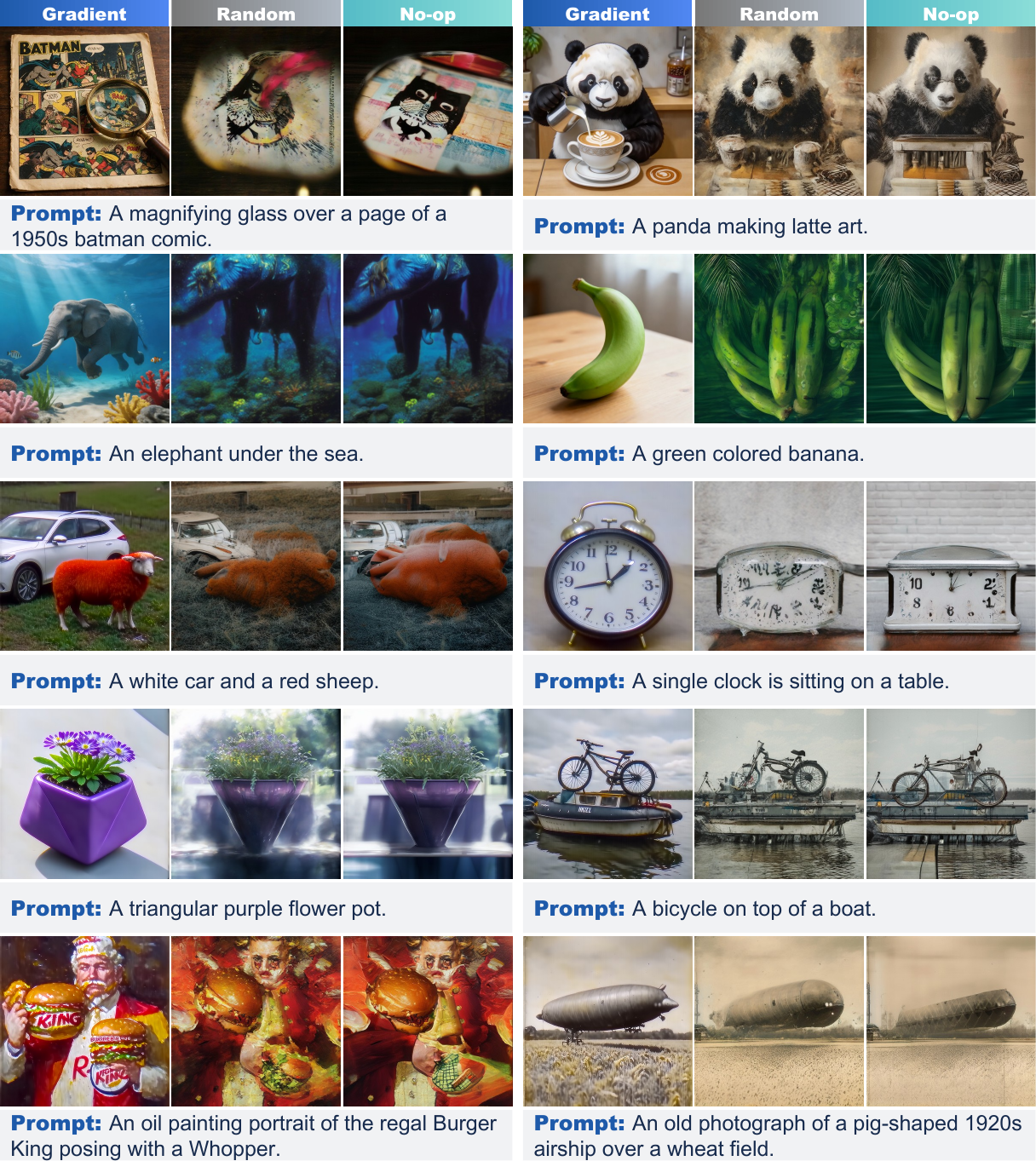}
    \caption{\textbf{Target-direction ablation on additional prompts.} Reward-gradient, random-direction, and no-op target variants use the Fig.~\ref{fig:ablation_direction_qual} layout. The added cases cover text rendering, object identity, color, counting, and spatial relations. The reward-gradient direction retains requested attributes more reliably than both controls.}
    \label{fig:ablation_direction_qual_p02}
\end{figure}

\begin{figure}[p]
    \centering
    \includegraphics[width=\linewidth,height=0.96\textheight,keepaspectratio]{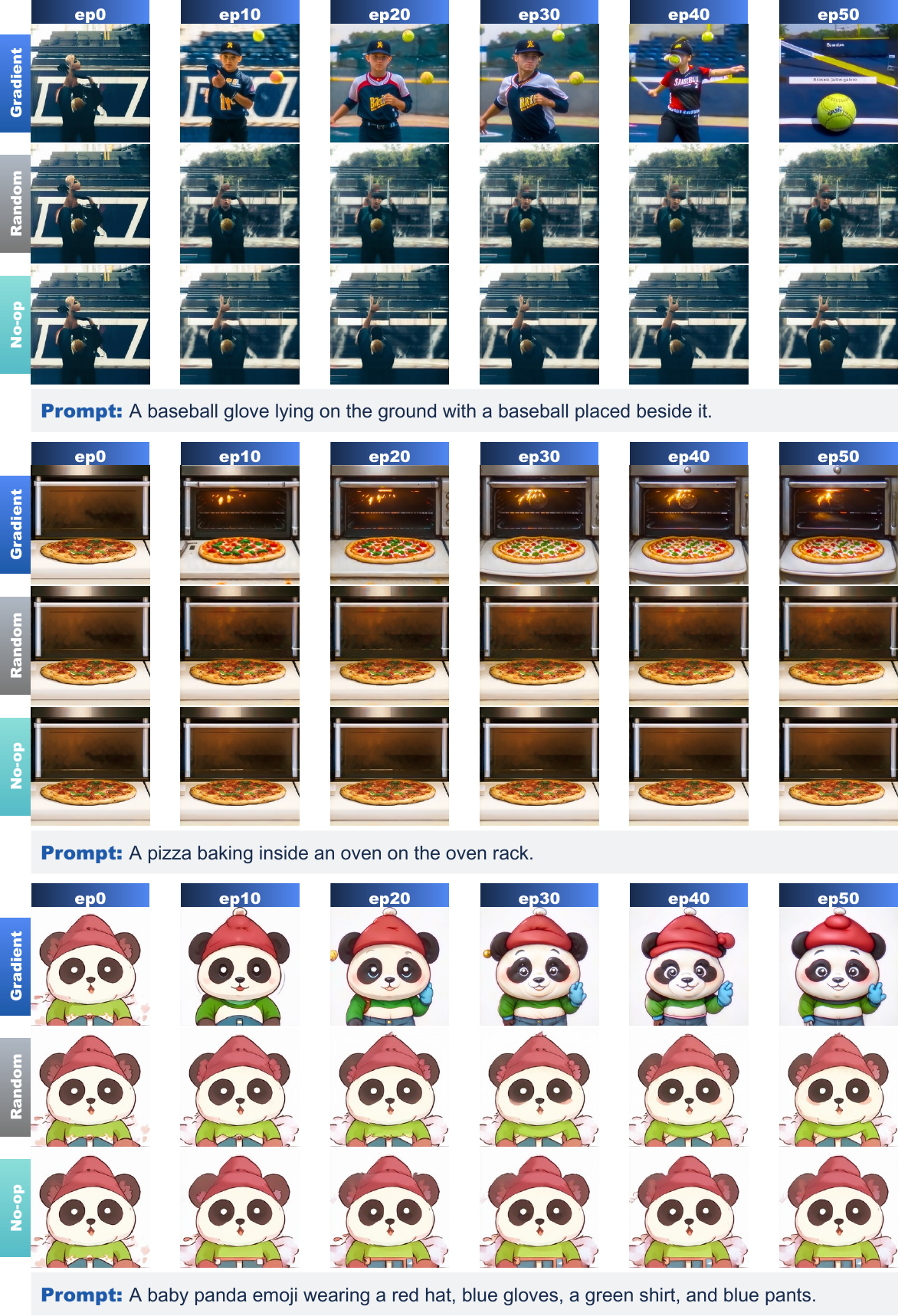}
    \caption{\textbf{Training evolution of target variants.} Reward-gradient, random, and no-op variants are shown on fixed prompts from update $0$ to update $50$. Reward-gradient outputs begin separating early and preserve the intended action or attributes through update $50$, whereas both controls remain closer to their initial behavior.}
    \label{fig:ablation_progress_direction}
\end{figure}

\begin{figure}[p]
    \centering
    \includegraphics[width=\linewidth,height=0.96\textheight,keepaspectratio]{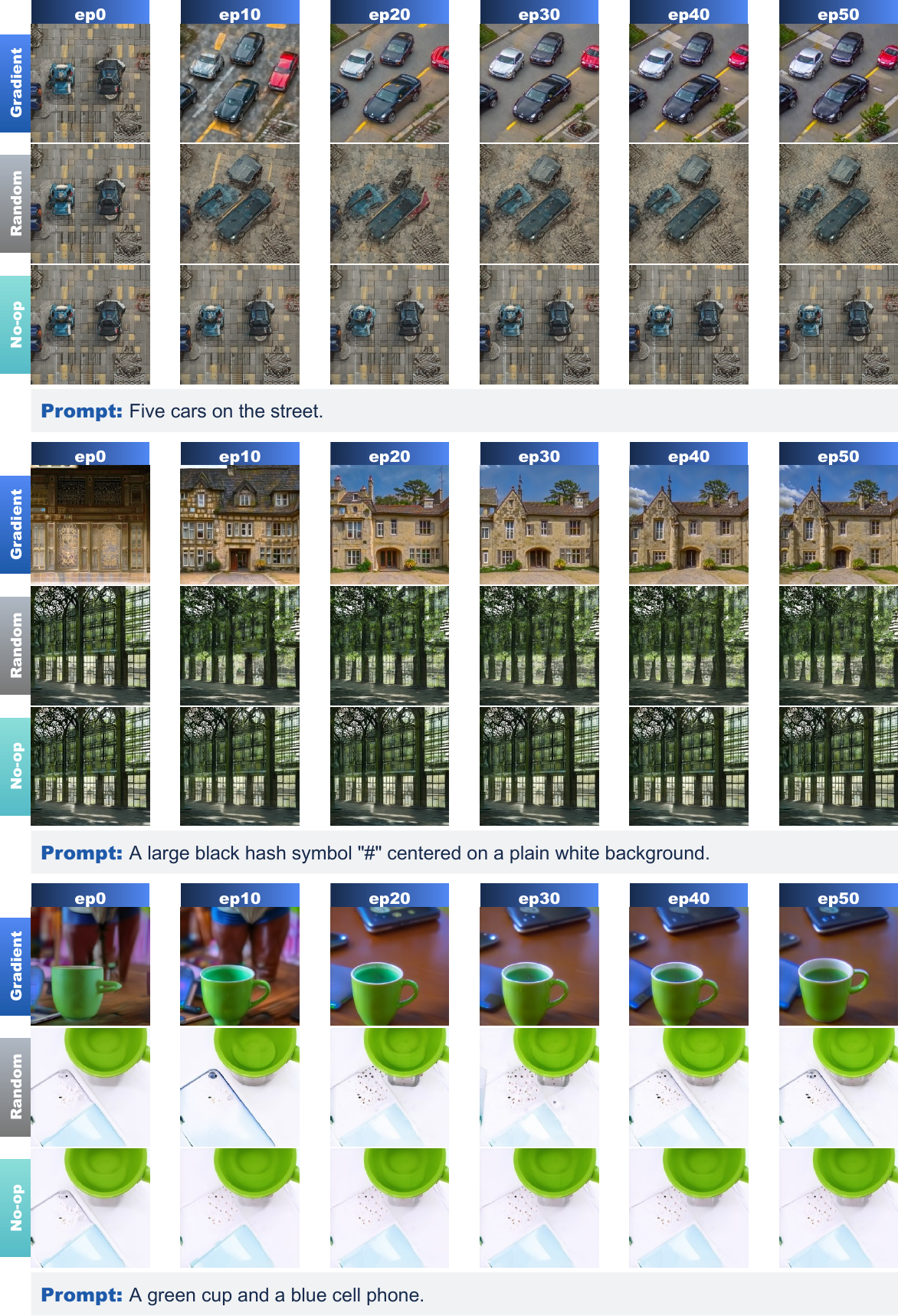}
    \caption{\textbf{Target-direction evolution on additional prompts.} Reward-gradient, random, and no-op variants use the same update grid as Fig.~\ref{fig:ablation_progress_direction}. Across these disjoint prompts, reward-gradient variants preserve the requested vehicle, architecture, and object and phone relation, while the controls show weaker task-specific change.}
    \label{fig:ablation_progress_direction_p02}
\end{figure}

\FloatBarrier

\noindent\textbf{Human preference protocol. } The human-preference results in Fig.~\ref{fig:ablation_robustness} use the frozen Z-Image-Turbo qualitative archive for the VLM-Pointwise reward setting. We compare \name{} against the base model, FlowGRPO, DiffusionNFT, and ReFL on $100$ held-out prompts. Annotators hold at least a bachelor's degree in a STEM field and evaluate blinded paired comparisons with randomized method order. They choose the image that is better overall for the prompt, allowing ties, and assign one primary reason among prompt alignment, visual quality, realism and artifacts, and text or fine details. Win rates use half-tie scoring with prompt-clustered bootstrap intervals over prompts; the reason breakdown in the same figure is conditioned on decisions that prefer \namew{} and normalized for each baseline comparison.

\subsection{CFG Compensation}
\label{app:cfg_compensation}

\noindent\textbf{Guided-field constraint. }
At a fixed state, timestep, and prompt, classifier-free guidance forms the affine velocity field~\citep{ho2022classifier}
\begin{equation}
    \mathcal{A}_{\zeta}(v_c,v_u)
    =v_u+\zeta(v_c-v_u)
    =\zeta v_c+(1-\zeta)v_u,
    \label{eq:cfg_affine_operator}
\end{equation}
where $\zeta$ is the guidance scale and $v_c$ and $v_u$ are the conditional and unconditional predictions. In the train-CFG variant of OPSD, both predictions share the trainable parameters, and the same operator is used for rollout construction, the anchor prediction from the behavior policy, and the trainable and frozen behavior velocities in the loss. This setting differs from frozen-teacher CFG absorption, where a fixed guided field is used as the regression target of a single student field during distillation~\citep{zhou2026danceopd}.

Let $\Delta v_c$ and $\Delta v_u$ be the changes of the two branch outputs relative to the frozen behavior policy, and let $\Delta v_{\mathrm{tar}}$ denote the reward-gradient velocity correction requested by the local OPSD target. Training at guidance scale $\zeta_{\mathrm{tr}}$ minimizes
\begin{equation}
    \mathcal{L}_{\zeta_{\mathrm{tr}}}
    =
    \left\|
    \mathcal{A}_{\zeta_{\mathrm{tr}}}(\Delta v_c,\Delta v_u)-\Delta v_{\mathrm{tar}}
    \right\|_2^2.
    \label{eq:cfg_train_loss}
\end{equation}
Define the residual and the branch difference as
\begin{equation}
    e_{\mathrm{tr}}
    =\mathcal{A}_{\zeta_{\mathrm{tr}}}(\Delta v_c,\Delta v_u)-\Delta v_{\mathrm{tar}},
    \qquad
    \Delta v_{\mathrm{gap}}=\Delta v_c-\Delta v_u.
    \label{eq:cfg_residual_and_hidden_branch}
\end{equation}
The loss constrains one affine combination of the two branch outputs and therefore leaves the individual branches non-identifiable under this objective.

\noindent\textbf{Branch decomposition. }
Using $\Delta v_{\mathrm{gap}}=\Delta v_c-\Delta v_u$ in Eq.~\eqref{eq:cfg_residual_and_hidden_branch} gives
\begin{align}
    \Delta v_{\mathrm{tar}}+e_{\mathrm{tr}}
    &=\Delta v_u+\zeta_{\mathrm{tr}}\Delta v_{\mathrm{gap}}, \label{eq:cfg_solve_u_first}\\
    \Delta v_u
    &=\Delta v_{\mathrm{tar}}+e_{\mathrm{tr}}-\zeta_{\mathrm{tr}}\Delta v_{\mathrm{gap}}, \label{eq:cfg_solve_u}\\
    \Delta v_c
    &=\Delta v_{\mathrm{tar}}+e_{\mathrm{tr}}+(1-\zeta_{\mathrm{tr}})\Delta v_{\mathrm{gap}}. \label{eq:cfg_solve_c}
\end{align}
Thus every branch pair with the same train-scale residual can be written as
\begin{equation}
    \begin{bmatrix}\Delta v_c\\ \Delta v_u\end{bmatrix}
    =
    \underbrace{
    \begin{bmatrix}\Delta v_{\mathrm{tar}}+e_{\mathrm{tr}}\\ \Delta v_{\mathrm{tar}}+e_{\mathrm{tr}}\end{bmatrix}}_{\text{shared correction}}
    +
    \underbrace{
    \begin{bmatrix}(1-\zeta_{\mathrm{tr}})\Delta v_{\mathrm{gap}}\\ -\zeta_{\mathrm{tr}}\Delta v_{\mathrm{gap}}\end{bmatrix}}_{\text{hidden compensation}}.
    \label{eq:cfg_solution_decomposition}
\end{equation}
The second component lies in the null space of the train-scale operator.
\begin{equation}
\begin{aligned}
    \mathcal{A}_{\zeta_{\mathrm{tr}}}\big((1-\zeta_{\mathrm{tr}})\Delta v_{\mathrm{gap}},-\zeta_{\mathrm{tr}}\Delta v_{\mathrm{gap}}\big)
    &=\zeta_{\mathrm{tr}}(1-\zeta_{\mathrm{tr}})\Delta v_{\mathrm{gap}}
      +(1-\zeta_{\mathrm{tr}})(-\zeta_{\mathrm{tr}}\Delta v_{\mathrm{gap}})\\
    &=0.
\end{aligned}
\label{eq:cfg_nullspace_check}
\end{equation}
Consequently, the guided loss does not directly penalize $\Delta v_{\mathrm{gap}}$. The coupling between the two branches and the model initialization may restrict the reachable compensation component, but neither removes this non-identifiability from the objective without explicit regularization.

\begin{opsdprop}[CFG compensation identity]
\label{prop:cfg_compensation}
For any evaluation scale $\zeta_{\mathrm{ev}}$, a branch pair satisfying Eqs.~\eqref{eq:cfg_solve_u} and~\eqref{eq:cfg_solve_c} obeys
\begin{equation}
    \mathcal{A}_{\zeta_{\mathrm{ev}}}(\Delta v_c,\Delta v_u)-\Delta v_{\mathrm{tar}}
    =e_{\mathrm{tr}}+(\zeta_{\mathrm{ev}}-\zeta_{\mathrm{tr}})\Delta v_{\mathrm{gap}}.
    \label{eq:cfg_compensation_identity}
\end{equation}
Its squared field error is therefore
\begin{equation}
\begin{aligned}
    \left\|\mathcal{A}_{\zeta_{\mathrm{ev}}}(\Delta v_c,\Delta v_u)-\Delta v_{\mathrm{tar}}\right\|_2^2
    &=\|e_{\mathrm{tr}}\|_2^2
    +2(\zeta_{\mathrm{ev}}-\zeta_{\mathrm{tr}})\langle e_{\mathrm{tr}},\Delta v_{\mathrm{gap}}\rangle\\
    &\quad+(\zeta_{\mathrm{ev}}-\zeta_{\mathrm{tr}})^2\|\Delta v_{\mathrm{gap}}\|_2^2.
\end{aligned}
\label{eq:cfg_general_quadratic_error}
\end{equation}
If the train-scale field is fit exactly, then
\begin{equation}
    \left\|\mathcal{A}_{\zeta_{\mathrm{ev}}}(\Delta v_c,\Delta v_u)-\Delta v_{\mathrm{tar}}\right\|_2^2
    =(\zeta_{\mathrm{ev}}-\zeta_{\mathrm{tr}})^2\|\Delta v_{\mathrm{gap}}\|_2^2.
    \label{eq:cfg_quadratic_field_error}
\end{equation}
\end{opsdprop}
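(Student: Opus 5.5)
The identity follows from linearity of $\mathcal{A}_\zeta$ in the branch pair. We substitute the decomposition of Eqs.~\eqref{eq:cfg_solve_u} and~\eqref{eq:cfg_solve_c} into the evaluation-scale operator and then expand the square. No analytic input is needed beyond the affine form in Eq.~\eqref{eq:cfg_affine_operator}. We only need to track which terms the training scale annihilates and which survive at $\zeta_{\mathrm{ev}}$.

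First, write $\mathcal{A}_{\zeta_{\mathrm{ev}}}(\Delta v_c,\Delta v_u)=\Delta v_u+\zeta_{\mathrm{ev}}(\Delta v_c-\Delta v_u)=\Delta v_u+\zeta_{\mathrm{ev}}\Delta v_{\mathrm{gap}}$. This uses the first form of Eq.~\eqref{eq:cfg_affine_operator} and the definition of $\Delta v_{\mathrm{gap}}$ in Eq.~\eqref{eq:cfg_residual_and_hidden_branch}. Substituting Eq.~\eqref{eq:cfg_solve_u} for $\Delta v_u$ gives
\begin{equation*}
\mathcal{A}_{\zeta_{\mathrm{ev}}}(\Delta v_c,\Delta v_u)=\Delta v_{\mathrm{tar}}+e_{\mathrm{tr}}+(\zeta_{\mathrm{ev}}-\zeta_{\mathrm{tr}})\Delta v_{\mathrm{gap}}.
\end{equation*}
Subtracting $\Delta v_{\mathrm{tar}}$ yields Eq.~\eqref{eq:cfg_compensation_identity}. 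As a consistency check, Eq.~\eqref{eq:cfg_solve_c} minus Eq.~\eqref{eq:cfg_solve_u} returns $\Delta v_{\mathrm{gap}}$, so the two branch equations are compatible with the definition of the gap. Equivalently, one can use the decomposition in Eq.~\eqref{eq:cfg_solution_decomposition}. The shared correction maps to itself under any $\mathcal{A}_\zeta$, because the coefficients $\zeta$ and $1-\zeta$ sum to one. The hidden component maps to $\zeta_{\mathrm{ev}}(1-\zeta_{\mathrm{tr}})\Delta v_{\mathrm{gap}}-(1-\zeta_{\mathrm{ev}})\zeta_{\mathrm{tr}}\Delta v_{\mathrm{gap}}=(\zeta_{\mathrm{ev}}-\zeta_{\mathrm{tr}})\Delta v_{\mathrm{gap}}$. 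This generalizes the null-space check in Eq.~\eqref{eq:cfg_nullspace_check}, which is the special case $\zeta_{\mathrm{ev}}=\zeta_{\mathrm{tr}}$.

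Second, take the squared Euclidean norm of Eq.~\eqref{eq:cfg_compensation_identity}. Expanding $\|e_{\mathrm{tr}}+c\,\Delta v_{\mathrm{gap}}\|_2^2$ with the scalar $c=\zeta_{\mathrm{ev}}-\zeta_{\mathrm{tr}}$ gives the three terms of Eq.~\eqref{eq:cfg_general_quadratic_error}. Finally, exact train-scale fitting means $e_{\mathrm{tr}}=0$ by Eq.~\eqref{eq:cfg_residual_and_hidden_branch}. The first two terms then vanish, which leaves Eq.~\eqref{eq:cfg_quadratic_field_error}.

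None of these steps is difficult. The only point needing care is the direction of the sign in $\zeta_{\mathrm{ev}}-\zeta_{\mathrm{tr}}$. Computing through $\Delta v_u$ rather than $\Delta v_c$ avoids a sign slip.
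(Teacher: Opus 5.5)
Your proof is correct and is essentially the paper's argument. The paper substitutes both branch equations into $\zeta_{\mathrm{ev}}\Delta v_c+(1-\zeta_{\mathrm{ev}})\Delta v_u$, collects coefficient $1$ on $\Delta v_{\mathrm{tar}}+e_{\mathrm{tr}}$ and $\zeta_{\mathrm{ev}}-\zeta_{\mathrm{tr}}$ on $\Delta v_{\mathrm{gap}}$ (your ``alternative'' computation), then expands the square and sets $e_{\mathrm{tr}}=0$. Your primary route through $\Delta v_u+\zeta_{\mathrm{ev}}\Delta v_{\mathrm{gap}}$ is only a slightly shorter bookkeeping of the same substitution, and your check that the two branch equations differ by $\Delta v_{\mathrm{gap}}$ properly justifies using that form.
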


\begin{proof}
Substituting Eqs.~\eqref{eq:cfg_solve_u} and~\eqref{eq:cfg_solve_c} into the evaluation operator yields
\begin{align}
    \mathcal{A}_{\zeta_{\mathrm{ev}}}(\Delta v_c,\Delta v_u)
    &=\zeta_{\mathrm{ev}}\!\left[\Delta v_{\mathrm{tar}}+e_{\mathrm{tr}}+(1-\zeta_{\mathrm{tr}})\Delta v_{\mathrm{gap}}\right]
      +(1-\zeta_{\mathrm{ev}})\!\left[\Delta v_{\mathrm{tar}}+e_{\mathrm{tr}}-\zeta_{\mathrm{tr}}\Delta v_{\mathrm{gap}}\right] \notag\\
    &=[\zeta_{\mathrm{ev}}+(1-\zeta_{\mathrm{ev}})](\Delta v_{\mathrm{tar}}+e_{\mathrm{tr}})
      +[\zeta_{\mathrm{ev}}(1-\zeta_{\mathrm{tr}})-\zeta_{\mathrm{tr}}(1-\zeta_{\mathrm{ev}})]\Delta v_{\mathrm{gap}} \notag\\
    &=\Delta v_{\mathrm{tar}}+e_{\mathrm{tr}}+(\zeta_{\mathrm{ev}}-\zeta_{\mathrm{tr}})\Delta v_{\mathrm{gap}}.
    \label{eq:cfg_identity_derivation}
\end{align}
Subtracting $\Delta v_{\mathrm{tar}}$ proves Eq.~\eqref{eq:cfg_compensation_identity}. Expanding the squared norm gives Eq.~\eqref{eq:cfg_general_quadratic_error}; setting $e_{\mathrm{tr}}=0$ gives Eq.~\eqref{eq:cfg_quadratic_field_error}.
\end{proof}

For fixed $\zeta_{\mathrm{tr}}$, Eq.~\eqref{eq:cfg_general_quadratic_error} is a quadratic function of the evaluation scale $\zeta_{\mathrm{ev}}$. When $\Delta v_{\mathrm{gap}}\neq0$, its minimizer is
\begin{equation}
    \zeta_{\mathrm{ev}}^\star
    =\zeta_{\mathrm{tr}}-
    \frac{\langle e_{\mathrm{tr}},\Delta v_{\mathrm{gap}}\rangle}
    {\|\Delta v_{\mathrm{gap}}\|_2^2},
    \label{eq:cfg_eval_scale_star}
\end{equation}
which approaches the matched scale as the train-scale residual vanishes. In particular, evaluating a guided-trained SD$3.5$-M model without CFG corresponds to $\zeta_{\mathrm{ev}}=1$ and, under exact train-scale fitting, incurs
\begin{equation}
    \left\|\mathcal{A}_1(\Delta v_c,\Delta v_u)-\Delta v_{\mathrm{tar}}\right\|_2^2
    =(\zeta_{\mathrm{tr}}-1)^2\|\Delta v_{\mathrm{gap}}\|_2^2.
    \label{eq:cfg_conditional_only_error}
\end{equation}
The mismatch penalty therefore grows quadratically with the distance between the training and evaluation guidance scales whenever the hidden branch component is nonzero.

\noindent\textbf{Optimization geometry. }
The derivatives of the guided operator are
\begin{equation}
    \frac{\partial\mathcal{A}_{\zeta_{\mathrm{tr}}}}{\partial\Delta v_c}
    =\zeta_{\mathrm{tr}}I,
    \qquad
    \frac{\partial\mathcal{A}_{\zeta_{\mathrm{tr}}}}{\partial\Delta v_u}
    =(1-\zeta_{\mathrm{tr}})I.
\end{equation}
Hence the branch-output gradients of Eq.~\eqref{eq:cfg_train_loss} are
\begin{equation}
    \nabla_{\Delta v_c}\mathcal{L}_{\zeta_{\mathrm{tr}}}
    =2\zeta_{\mathrm{tr}}e_{\mathrm{tr}},
    \qquad
    \nabla_{\Delta v_u}\mathcal{L}_{\zeta_{\mathrm{tr}}}
    =2(1-\zeta_{\mathrm{tr}})e_{\mathrm{tr}}.
    \label{eq:cfg_opposite_gradients}
\end{equation}
For $\zeta_{\mathrm{tr}}>1$, the two coefficients have opposite signs. Because both branches are produced by the same model with parameters $\theta$, let $J_c=\partial\Delta v_c/\partial\theta$ and $J_u=\partial\Delta v_u/\partial\theta$. The parameter gradient becomes
\begin{equation}
    \nabla_\theta\mathcal{L}_{\zeta_{\mathrm{tr}}}
    =2\left[\zeta_{\mathrm{tr}}J_c^\top+(1-\zeta_{\mathrm{tr}})J_u^\top\right]e_{\mathrm{tr}}.
    \label{eq:cfg_model_gradient}
\end{equation}
The branches share parameters, but the conditional and unconditional Jacobians are multiplied by coefficients of opposite sign. Thus the objective controls the guided affine combination more directly than either branch. It also contains no separate regularizer on $\|\Delta v_{\mathrm{gap}}\|_2$.

The compensation identity and quadratic field penalty are exact. The compensation term vanishes when $\zeta_{\mathrm{ev}}=\zeta_{\mathrm{tr}}$, so this mechanism explains degradation caused by training and evaluation mismatch without implying that matched CFG training must underperform. Frozen-teacher absorption avoids this ambiguity because the guided teacher field is fixed before a single student field is optimized~\citep{zhou2026danceopd}.